\newtheorem{theorem}{Theorem}
\newtheorem{corollary}{Corollary}[theorem]
\newtheorem{lemma}[theorem]{Lemma}
\newtheorem{remark}{Remark}
\newtheorem{definition}{Definition}
\newcommand{\T}{\ensuremath{\mathrm{T}}} 
\newcommand{\norm}[1]{\ensuremath{\left\| #1\right\|}}
\newcommand{\pbra}[1]{\ensuremath{\left( #1\right)}}
\newcommand{\sbra}[1]{\ensuremath{\left[ #1\right]}}
\newcommand{\cbra}[1]{\ensuremath{\left\{ #1\right\}}}
\newcommand{\abra}[1]{\ensuremath{\left< #1\right>}}
\newcommand{\pder}[2]{\ensuremath{\frac{\partial #1}{\partial #2}}}
\newcommand{\E}[1]{\ensuremath{\mathbb{E}\left[ #1\right]}}
\definecolor{highlight}{rgb}{0.9,0.9,0.9}
\DeclareMathOperator*{\argmax}{arg\,max}
\DeclareMathOperator*{\argmin}{arg\,min}
\crefname{hypothesis}{Hypothesis}{Hypotheses}
\newcommand{\T}{\ensuremath{\mathrm{T}}} 
\newcommand{\norm}[1]{\ensuremath{\left\| #1\right\|}}
\newcommand{\pbra}[1]{\ensuremath{\left( #1\right)}}
\newcommand{\sbra}[1]{\ensuremath{\left[ #1\right]}}
\newcommand{\cbra}[1]{\ensuremath{\left\{ #1\right\}}}
\newcommand{\abra}[1]{\ensuremath{\left< #1\right>}}
\newcommand{\pder}[2]{\ensuremath{\frac{\partial #1}{\partial #2}}}
\newcommand{\E}[1]{\ensuremath{\mathbb{E}\left[ #1\right]}}
\DeclareMathOperator*{\argmax}{arg\,max}
\DeclareMathOperator*{\argmin}{arg\,min}
\title{Multi-Resolution Online Deterministic Annealing: \\A Hierarchical and Progressive Learning Architecture\thanks{Under review.
\funding{This work was partially supported by the 
Defense Advanced Research Projects
Agency (DARPA) under Agreement No. HR00111990027, 
by ONR grant N00014-17-1-2622, 
and by a grant from Northrop Grumman Corporation.}}}
\author{Christos N. Mavridis 
\and John S. Baras%
\thanks{Department of Electrical and Computer Engineering and 
the Institute for Systems Research, 
University of Maryland, College Park, USA
  (\email{mavridis@umd.edu}, \email{baras@umd.edu}).}
  }
\begin{document}

\ifx\mycmd\undefined
\title{Multi-Resolution Online Deterministic Annealing: A Hierarchical and Progressive Learning Architecture}

\author{Christos N. Mavridis, \IEEEmembership{Member, IEEE}, and 
John S. Baras, \IEEEmembership{Life Fellow, IEEE}
\thanks{
The authors are with the 
Department of Electrical and Computer Engineering and 
the Institute for Systems Research, 
University of Maryland, College Park, USA.
{\tt\small emails:\{mavridis, baras\}@umd.edu}.}%
\thanks{Research partially supported by the 
Defense Advanced Research Projects
Agency (DARPA) under Agreement No. HR00111990027, 
by ONR grant N00014-17-1-2622, 
and by a grant from Northrop Grumman Corporation.%
}%
}
\fi

\maketitle

\ifx\mycmd\undefined
 \thispagestyle{empty}
\pagestyle{empty}
\fi


\begin{abstract}
Hierarchical learning algorithms that gradually approximate a solution to a data-driven optimization problem
are essential to decision-making systems, especially under limitations on time and computational resources.
In this study, we introduce a general-purpose hierarchical learning architecture
that is based on the progressive partitioning of a possibly multi-resolution data space.
The optimal partition is gradually approximated by solving a sequence 
of optimization sub-problems 
online, using 
gradient-free stochastic approximation updates.
As a consequence, a function approximation problem can be defined within each subset of the partition
and solved using the theory of two-timescale stochastic approximation.
This simulates an annealing process and defines a robust and interpretable heuristic method 
to gradually increase the complexity of the learning architecture
in a task-agnostic manner, 
giving emphasis to regions of the data space that are considered more important according to 
a predefined criterion.
Finally, by imposing a tree structure in the progression of the partitions, 
we provide a means to incorporate potential multi-resolution structure of the data space into this approach,
significantly reducing its complexity, while introducing hierarchical 
variable-rate feature extraction properties similar 
to certain classes of deep learning architectures. 
Asymptotic convergence analysis and experimental results are provided
for supervised and unsupervised learning problems.
\end{abstract}


\ifx\mycmd\undefined

\begin{IEEEkeywords}
Hierarchical Learning,
Progressive Learning,
Online Deterministic Annealing, 
Multi-resolution Learning
\end{IEEEkeywords}

\else

\begin{keywords}
Hierarchical Learning,
Progressive Learning,
Online Deterministic Annealing, 
Multi-resolution Learning, 
\end{keywords}

\begin{AMS}
  68T05, 68T10, 68T30, 68Q32, 62H30,
  93E35, 22E70
\end{AMS}

\fi

\section{Introduction}
\label{Sec:Introduction}

\ifx\mycmd\undefined
\IEEEPARstart{L}{earning}
\else
Learning
\fi
from observations is pivotal to autonomous decision-making and communication systems. 
Mathematically, such learning problems are often formulated as 
constrained stochastic optimization problems:
given realizations of a random variable $X\in S$ representing the observations, 
an optimal parameter vector $\theta\in\Theta$ is to be found such that
a well-defined error measure between an unknown function $f(X)\in\mathcal{F}$
and a learning model $\hat f(X,\theta)\in\mathcal{F}$, parameterized by $\theta$, 
is minimized under potentially additional constraints.
However, the solution of such problems over the entire domain $S$  
often requires the learning model $\hat f(X,\theta)$ to be particularly complex, 
making the estimation of $\theta$ costly, 
and raising issues with respect to phenomena such as over-fitting, generalization, and robustness, 
connected by an underlying trade-off between 
complexity and performance \cite{bennett2006interplay}.
As a result, the ability to gradually approximate a solution to these problems 
is essential to decision-making systems that often operate in real-time and under limitations in memory and 
computational resources.
%

%

Current deep learning methods 
have made progress towards the construction of a hierarchical representation of the data space
\cite{lecun2015deep,hinton2006fast,	krizhevsky2012imagenet,lee2009convolutional}.
However, such approaches do not necessarily satisfy the above description of hierarchical learning, 
since they typically use overly complex models over the entire data space $S$,
which comes in the expense of time, energy, data, memory, 
and computational resources \cite{thompson2020computational,strubell2019energy}.
In this work, we are mainly focusing on a framework for 
hierarchical progressive learning and data representation, 
where a gradually growing and hierarchically structured set of learning models is used 
for function approximation.
We consider a prototype-based learning framework where, given random observations of $X\in S$, a set of 
prototypes $\cbra{\mu_i}\in S$ (also called codevectors or neurons) are scattered in the data space $S$ to encode
subsets/regions $\cbra{S_i}$ that form a partition of $S$ \cite{biehl2016prototype}.
This adheres to the principles of vector quantization for signal compression \cite{Kohonen1995}.
In this regard, a knowledge representation can be defined as the set of codevectors $\cbra{\mu_i\in S}$ that 
induce a structured partition $\cbra{S_i}$ of the data space $S$, along with a set of local learning models 
$\hat f(x,\theta_i)$ associated with each region $S_i$, parameterized by their own set of parameters $\theta_i$.
A structured representation like this allows, among other things, to locate specific regions of the space
that the algorithm needs to approximate in greater detail, according to the problem at hand and the designer's requirements. 
This results in adaptively allocating more resources only in the subsets of the data space that are needed, and provides
benefits in terms of time, memory, and model complexity.
Moreover, learning with local models that take advantage of the differences in the underlying distribution of the data space 
provides a means to understand certain properties of the data space itself, 
i.e., this is an interpretable learning approach \cite{ruping2005learning}.
An illustration of this framework is given in Fig. \ref{fig:regression-diagram}.
\begin{figure*}[ht]
\centering
\begin{subfigure}[b]{0.28\textwidth}
\centering
\includegraphics[trim=0 0 0 0,clip,width=1.0\textwidth]{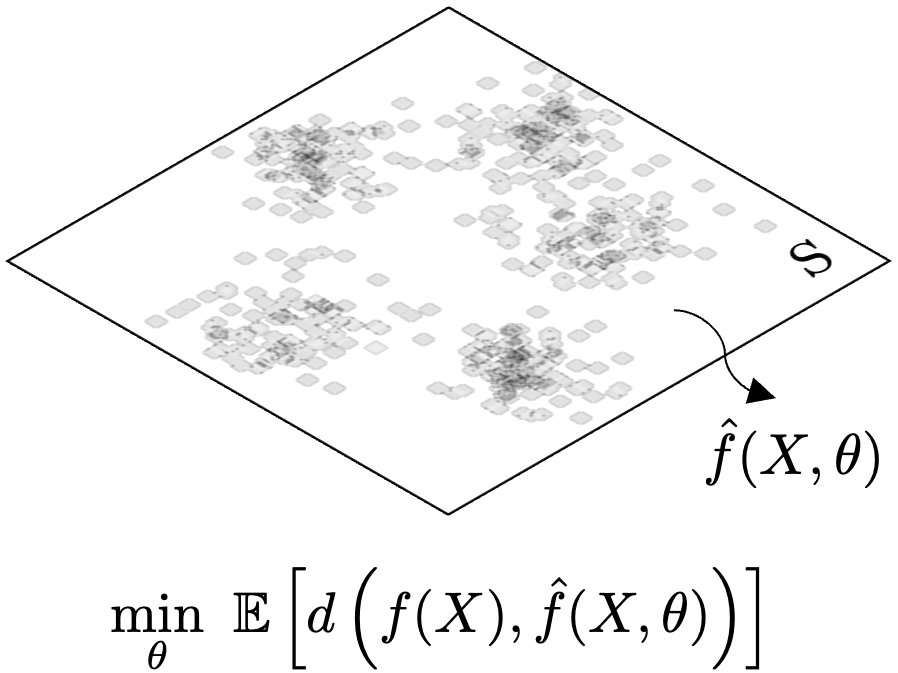}
\caption{Classical regression problem.  \phantom{asdf} \phantom{asdf} \phantom{asdf} \phantom{asdf}}
\end{subfigure}
\begin{subfigure}[b]{0.36\textwidth}
\centering
\includegraphics[trim=0 0 0 0,clip,width=1.0\textwidth]{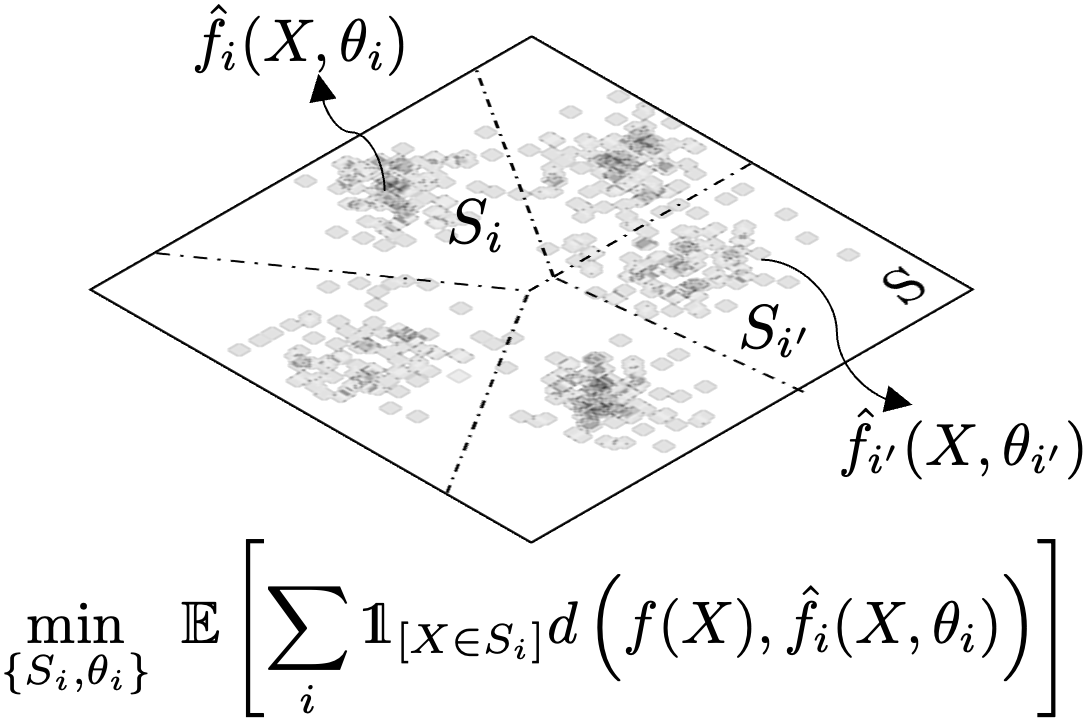}
\caption{Combined problem of partitioning and function approximation.}
\end{subfigure}
\begin{subfigure}[b]{0.33\textwidth}
\centering
\includegraphics[trim=0 0 0 0,clip,width=1.0\textwidth]{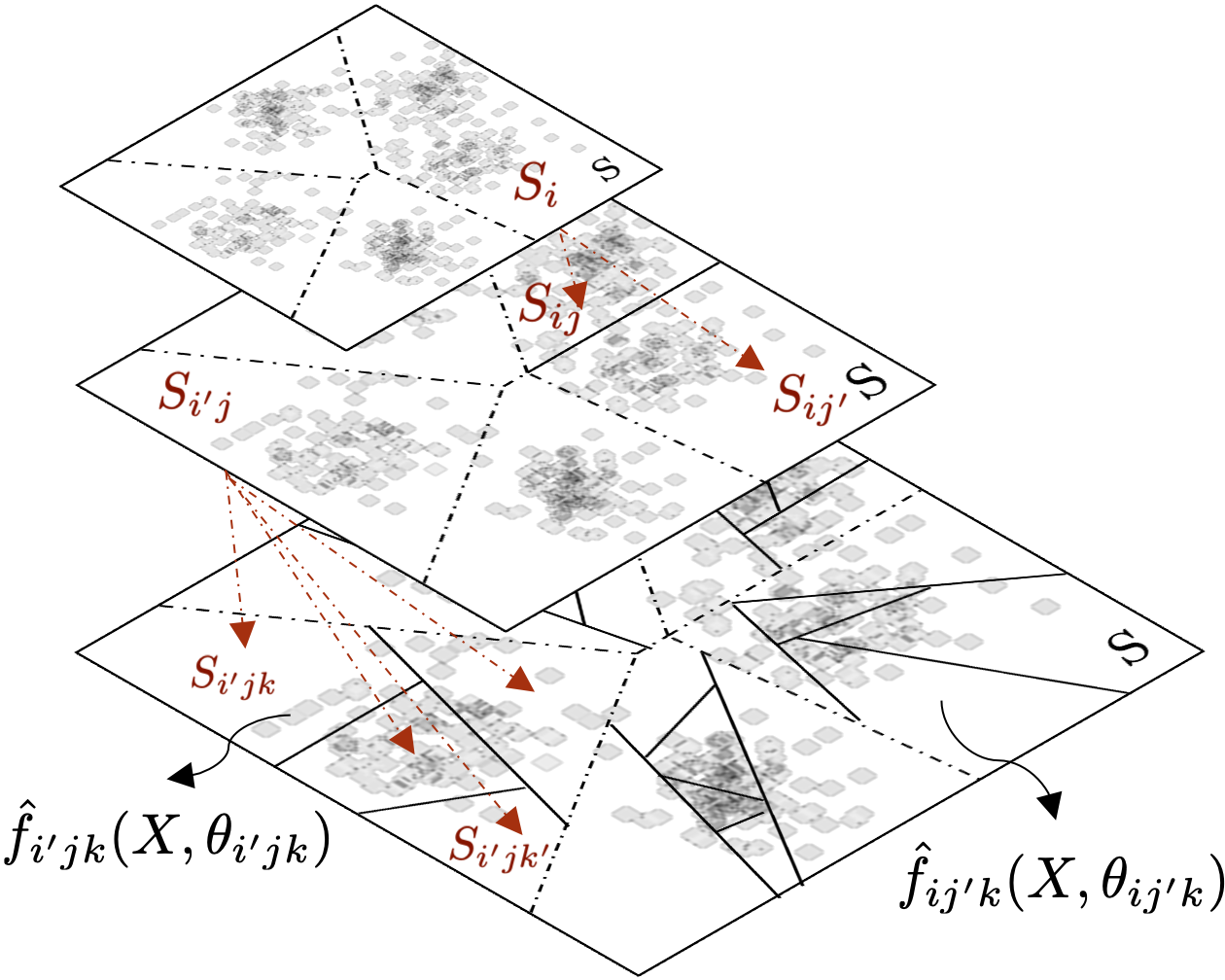}
\caption{Tree-structured partitioning and function approximation.}
\label{sfig:tree-regression}
\end{subfigure}
\caption{Comparison of the classical regression problem over the entire domain $S$
with the problem of combined partitioning and regression
within each subset of the partition.
Here the input $x\in S$ is a random variable and the function $f(x)$ is to be estimated over $S$ by 
(a) a single learning model $\hat f(x,\theta)$, and (b)-(c) a set of $\cbra{\hat f(x,\theta_i)}$ defined in each region $S_i$, 
where $\cbra{S_i}$  is a partition of S to be estimated as well.}
\label{fig:regression-diagram}
\end{figure*}

Regarding the learning process, we are interested in algorithms that are able 
to simultaneously solve both the problems of partitioning and function approximation,
given online (e.g., real-time) observations.
This is of great importance in many applications, and especially 
in the scope of learning algorithms for inference and control in general cyber-physical systems
\cite{mavridis2022annealing,mavridis2022sparse,mavridis2021progressive}.
To construct a sequence of partitions with increasing number of subsets
we build upon the notion of Online Deterministic Annealing \cite{mavridis2022online}
and define a series of
soft-clustering optimization problems:
\begin{align*}
\min_{\cbra{\mu_i}} ~ F_\lambda(X,Q):= (1-\lambda) D(X,Q) - \lambda H(X,Q), 
\end{align*}
%
parameterized by a Lagrange coefficient $\lambda\in[0,1]$
controlling the trade-off between minimizing 
an average distortion measure $D(X,Q) := \E{d(X,Q)}$,
for an appropriately defined dissimilarity measure $d$,
and maximizing the Shannon entropy $H(X,Q)$, with $H(X,Q):=\E{-\log p(X,Q)}$.
The novelty of the approach lies in the introduction of $Q$ as a random variable 
described by the association probabilities $p(\mu_i|X=x)$
that represents the probability of a data point $x$ to belong to the subset $S_i := \cbra{x\in S: i=\argmin_j d(x,\mu_j)}$.
Once the joint probability space of $(X,Q)$ is defined, 
successively solving the optimization problems $\min_{\cbra{\mu_i}} ~ F_\lambda(X,Q)$ 
for decreasing values of $\lambda$, leads in a series of 
bifurcation phenomena when the cardinality of the set of codevectors $\cbra{\mu_i}$ increases,
resembling an annealing process 
that introduces inherent robustness and regularization properties \cite{mavridis2022online,rose1998deterministic}.

An important property of this approach, initially shown in \cite{mavridis2022online}, 
is that the optimization problems $\min_{\cbra{\mu_i}} ~ F_\lambda(X,Q)$ can be solved online, 
using gradient-free stochastic approximation updates \cite{borkar2009stochastic}, 
as long as the measure $d$ belongs to the family of Bregman divergences, 
information-theoretic dissimilarity measures that include, among others, 
the widely used squared Euclidean distance and Kullback-Leibler divergence
\cite{banerjee2005clustering,villmann_onlineDLVQmath_2010}.
We exploit the fact that a stochastic approximation algorithm can be used 
as a training rule for constructing the partition $\cbra{S_i}$, 
to build a framework that simultaneously trains the learning models  
$\cbra{\hat f(x,\theta_i)}$ defined in each region $S_i$.
In particular, according to the theory of two-timescale stochastic approximation \cite{borkar2009stochastic},
we define two stochastic approximation algorithms that run at the same time and with the same observations
but with different stepsize schedules that define a fast and a slow learing process.  
In our case the slow process approximates the parameters $\cbra{\mu_i}$ and as a result the partition $\cbra{S_i}$, and
the fast process executes a function approximation algorithm within each $S_i$ to find the optimal parameters $\theta_i$
for the learning model $\hat f(x,\theta_i)$.

Finally, we further extend this approach by incorporating 
structural constraints in the construction of the partition $\cbra{S_i}$.
In particular, by imposing a non-binary tree structure in the growing set of the parameters $\cbra{\mu_i}$,
we show that we can both (a) greatly reduce the quadratic (in the number of parameters $\mu_i$) complexity of the approach, and
(b) construct a hierarchical and progressively growing tree-structured partition 
where each layer of the tree is trained using  
different resolution representation of the data space,
according to an independent multi-resolution analysis.
While this is a general framework for multi-resolution learning, we show that, 
in the case when convolution-based multi-resolution features are used, 
the proposed architecture shares similarities with deep learning approaches such as 
Deep Convolutional Networks \cite{lecun2015deep} and
Scattering Convolutional Networks \cite{bruna2013invariant}. 
Lastly, we provide asymptotic convergence analysis of the proposed learning architecture 
and experimental results to illustrate its properties in 
clustering, classification, and regression applications.

The paper is organized as follows:
Section \ref{Sec:ODA} introduces the Online Deterministic Annealing framework for progressive partitioning
along with a mathematical analysis of its properties. 
Section \ref{Sec:ODA-Regression} develops the two-timescale framework for combined partitioning and function approximation.
Section \ref{Sec:ODA-Classification} handles the problem of classification in two different approaches.
Section \ref{Sec:MR-ODA} extends the general model by incorporating tree-structure constraints and 
multi-resolution representation of the data space.
Finally, Section \ref{Sec:Results} illustrates experimental results, and
Section \ref{Sec:Conclusion} concludes the paper.

\section{Online Deterministic Annealing for Progressive Partitioning}
\label{Sec:ODA}

In this section we provide a comprehensive review of 
the online deterministic annealing approach introduced in 
\cite{mavridis2022online} and \cite{mavridis2022annealing},
as well as additional analytical results and insights that will be used 
in Sections \ref{Sec:ODA-Regression}, \ref{Sec:ODA-Classification}, and \ref{Sec:MR-ODA},
to construct the proposed hierarchical learning architecture.

We start our analysis with the case of unsupervised learning, where partitioning a space $S$
is equivalent to the problem of clustering and density estimation. 
In this context, the observations (data) 
are independent realization of a random variable $X: \Omega \rightarrow S$ defined in a 
probability space $\pbra{\Omega, \mathcal{F}, \mathbb{P}}$, 
where $S\subseteq \mathbb{R}^d$ is the observation space (data space).
In a prototype-based learning approach one defines a 
similarity measure $d:S\rightarrow ri(S)$,
and a set of $K$ prototypes/codevectors $\mu := \cbra{\mu_i}_{i = 1}^K$, $\mu_i \in ri(S)$, 
that define a partition $\cbra{S_i: x\in S: i=\argmin_j d(x,\mu_j)}$ such that 
the following average distortion measure is minimized: 
\begin{equation}
    \min_{\mu} ~ J(\mu) := \E{\sum_i \mathds{1}_{\sbra{X\in S_i}} d(X,\mu_i)}
    \label{eq:VQ}
\end{equation}
Here $ri(S)$ represents the relative interior of $S$,
and $\mathds{1}_{A}$ is the indication function of an event $A$.
The similarity measure as well as the number of prototypes $K$
are predefined designer parameters. 
This process is equivalent to finding the most suitable model out of a set of $K$
local constant models, and results in a piecewise-constant approximation of 
the data space $S$.
This representation has been used for clustering in vector quantization 
applications \cite{mavridis2020convergence, Kohonen1995}, 
and, in the limit $K\rightarrow\infty$, can be used for density estimation.
%

%
%

To construct a method that progressively increases the number of prototypes $K$,
we adopt a probabilistic approach similar to \cite{mavridis2022annealing,mavridis2022online}, and define
a discrete random variable $Q:S \rightarrow ri(S)$ 
such that \eqref{eq:VQ} takes the form
\begin{equation}
\begin{aligned}
\min_{\mu} ~ D(\mu) &:= \E{d\pbra{X,Q}} 
\\& 
=\E{\E{d(X,Q)|X}} 
\\&
=\int p(x) \sum_i p(\mu_i|x) d(x,\mu_i) ~dx
\end{aligned}
\label{eq:softVQ}
\end{equation}	
Notice that $Q$ is completely described by the association probabilities 
$\{p(\mu_i|x):=\mathbb{P}[Q=\mu_i|X=x] \}$, $\forall i$.
This is now a problem of finding both the locations $\cbra{\mu_i}$ and the 
association probabilities $\cbra{p(\mu_i|x)}$.
Therefore this is a more general problem than (\ref{eq:VQ}),
where it is subtly assumed that 
$p(\mu_i|x)=\mathds{1}_{\sbra{x \in S_i}}$.

%
The definition of the random variable $Q$ allows us to constraint 
the distribution of $(X,Q)$ by maximizing the entropy:
\begin{equation}
\begin{aligned}
H(\mu) &:= \E{-\log P(X,Q)} 
=H(X) + H(Q|X)
\\&
=H(X) - \int p(x) \sum_i p(\mu_i|x) \log p(\mu_i|x) ~dx,
\end{aligned}    
\end{equation}
at different levels.
This is essentially a realization of the 
Jaynes's maximum entropy principle \cite{jaynes1957information}.
%
We formulate this multi-objective optimization as the minimization of the Lagrangian
\begin{equation}
\min_\mu F_\lambda(\mu) := (1-\lambda) D(\mu) - \lambda H(\mu)
\label{eq:F}
\end{equation}
where $\lambda\in[0,1)$ acts as a Lagrange multiplier.
The term 
%
$ T:=\frac{\lambda}{1-\lambda} ,\ \lambda\in[0,1) $
%
can be seen as a temperature coefficient 
in a deterministic annealing process \cite{mavridis2022online}.
In this regard, this approach follows from the Online Deterministic Annealing (ODA) algorithm in \cite{mavridis2022online}, 
and its offline predecessor \cite{rose1998deterministic}.
Equation \eqref{eq:F} represents the scalarization method for trade-off 
analysis between two performance metrics, one related to performance, and 
one to generalization.
The entropy $H$, 
acts as a regularization term, and is given progressively less weight 
as $\lambda$ (resp. $T$) decreases.
For large values of $\lambda\rightarrow 1$ (resp. $T\rightarrow \infty$) we essentially maximize the entropy, and
as $\lambda$ (resp. $T$) is lowered, we transition 
from one Pareto point to another in a naturally occurring direction
that resembles an annealing process.

In the remaining section, we will (i) derive an analytical solution of the optimization problem (\ref{eq:F})
and a recursive gradient-free training rule to approximate it online, 
(ii) show that the number of unique locations $\cbra{\mu_i}$ is finite for $\lambda>0$ and increases
as $\lambda$ decreases beyond certain critical values with respect to a bifurcation phenomenon, and 
(iii) analyze the asymptotic behavior and complexity of this approach.

\subsection{Solving the Optimization Problem}

As in the case of standard vector quantization algorithms, 
we will minimize $F_\lambda$ in \eqref{eq:F} 
by successively minimizing it first respect to the 
association probabilities $\cbra{p(\mu_i|x)}$, 
and then with respect to the codevector locations $\mu$.
The following lemma provides the solution of 
minimizing $F_\lambda$ with respect to the association probabilities $p(\mu_i|x)$: 
\begin{lemma}
The solution of the optimization problem
\begin{equation}
\begin{aligned}
F_\lambda^*(\mu) &:= \min_{\cbra{p(\mu_i|x)}} F_\lambda(\mu)
\\
\text{s.t.} & \sum_i p(\mu_i|x) = 1 
\end{aligned}
\label{eq:Fstar}
\end{equation}
is given by the Gibbs distributions
\begin{equation}
p^*(\mu_i|x) = \frac{e^{-\frac{1-\lambda}{\lambda}d(x,\mu_i)}}
			{\sum_j e^{-\frac{1-\lambda}{\lambda}d(x,\mu_j)}} ,~ \forall x\in S
\label{eq:gibbs}
\end{equation}
\label{lem:gibbs}
\end{lemma}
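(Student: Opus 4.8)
The plan is to exploit the fact that the functional $F_\lambda$ in \eqref{eq:F}, viewed as a function of the association probabilities $\cbra{p(\mu_i|x)}$ alone, decouples over $x$. Writing out $D(\mu)$ and $H(\mu)$ explicitly, the term $H(X)$ does not involve the association probabilities and is therefore an additive constant for this sub-problem. What remains to be minimized is
\begin{equation*}
(1-\lambda)\int p(x)\sum_i p(\mu_i|x)\, d(x,\mu_i)\,dx + \lambda\int p(x)\sum_i p(\mu_i|x)\log p(\mu_i|x)\,dx,
\end{equation*}
and since $p(x)\ge 0$ and the constraint $\sum_i p(\mu_i|x)=1$ is imposed separately for each $x$, it suffices to minimize, for every fixed $x$, the integrand
\begin{equation*}
g_x\pbra{\cbra{p_i}} := (1-\lambda)\sum_i p_i\, d(x,\mu_i) + \lambda\sum_i p_i\log p_i
\end{equation*}
over the probability simplex, where I abbreviate $p_i:=p(\mu_i|x)$.

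First I would form the Lagrangian for this finite-dimensional problem using a single multiplier $\nu$ for the equality constraint $\sum_i p_i = 1$; the non-negativity constraints $p_i\ge 0$ can be omitted, because the resulting stationary point already satisfies them strictly and they are thus inactive. Setting the derivative of $g_x + \nu\pbra{\sum_i p_i - 1}$ with respect to $p_i$ to zero gives $(1-\lambda)\,d(x,\mu_i) + \lambda(\log p_i + 1) + \nu = 0$, which rearranges to $p_i \propto e^{-\frac{1-\lambda}{\lambda}d(x,\mu_i)}$. I would then determine the proportionality constant by enforcing $\sum_i p_i = 1$, which yields precisely the Gibbs form \eqref{eq:gibbs}.

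Finally, to confirm that this stationary point is the global minimizer and not a saddle point, I would invoke convexity: $g_x$ is the sum of a term linear in $\cbra{p_i}$ and the strictly convex negative-entropy term $\lambda\sum_i p_i\log p_i$ (for $\lambda>0$), and the feasible set is the convex probability simplex, so the unique stationary point is the unique global minimum. Combining the pointwise minimizers over all $x$ then gives the minimizer of the full functional.

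The main obstacle is less the computation than the two justifications that surround it: (i) arguing that the pointwise-in-$x$ minimization indeed produces the minimizer of the full integral functional, which rests on $p(x)\ge 0$ and on the constraints being separable across $x$; and (ii) verifying that the nonnegativity constraints remain inactive and that convexity promotes the Lagrangian stationary point to a genuine global minimum. I would also flag the boundary case $\lambda\to 0$ (equivalently $T\to 0$), where the exponent diverges and \eqref{eq:gibbs} collapses to the hard assignment $p(\mu_i|x)=\mathds{1}_{\sbra{x\in S_i}}$, recovering the original quantization problem \eqref{eq:VQ} as the degenerate limit.
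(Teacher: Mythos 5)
Your proposal is correct and follows essentially the same route as the paper's own proof: form the Lagrangian for the simplex constraint, set the derivative with respect to $p(\mu_i|x)$ to zero to obtain $(1-\lambda)d(x,\mu_i)+\lambda(1+\log p(\mu_i|x))+\nu=0$, hence $p(\mu_i|x)\propto e^{-\frac{1-\lambda}{\lambda}d(x,\mu_i)}$, and normalize. Your added justifications---the explicit pointwise-in-$x$ decoupling and the convexity argument upgrading the stationary point to the global minimum---are refinements the paper leaves implicit, and they only strengthen the argument.
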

\begin{proof}
See Appendix \ref{App:gibbs}.
\end{proof}

In order to minimize $F_\lambda^*(\mu)$ with respect to the codevector locations $\mu$ 
we observe that
\begin{align*}
&\frac d {d\mu} F_\lambda^*(\mu)= 
\int p(x) \sum_i (1-\lambda) \frac d {d\mu} \pbra{p^*(\mu_i|x) d_\phi(x,\mu_i)} 
\\& \quad\quad\quad
    + \lambda \frac d {d\mu} \pbra{p^*(\mu_i|x) \log p^*(\mu_i|x)} ~dx 
\\&\phantom{\frac d {d\mu} F_\lambda^*(\mu)}=
\int p(x) \sum_i (1-\lambda) \frac d {d\mu} p^*(\mu_i|x) d_\phi(x,\mu_i)
\\& \quad\quad\quad   
    + (1-\lambda) p^*(\mu_i|x) \frac d {d\mu} d_\phi(x,\mu_i) 
    +\lambda \frac d {d\mu} p^*(\mu_i|x) 
\\& \quad\quad\quad    
    + \lambda \frac d {d\mu} p^*(\mu_i|x) \log p^*(\mu_i|x) ~dx 
\\&\phantom{\frac d {d\mu} F_\lambda^*(\mu)}=
\int p(x) \sum_i (1-\lambda) p^*(\mu_i|x) \frac d {d\mu} d_\phi(x,\mu_i) 
\\&     
+\lambda \frac d {d\mu} p^*(\mu_i|x) 
    - \lambda \frac d {d\mu} p^*(\mu_i|x) \sum_j e^{-\frac{1-\lambda}{\lambda} d_\phi(x,\mu_j)} ~dx 
\\&\phantom{\frac d {d\mu} F_\lambda^*(\mu)}=
\sum_i \int p(x) p^*(\mu_i|x) \frac d {d\mu_i} d(x,\mu_i) ~dx 
\end{align*}
such that 
\begin{equation}
\frac d {d\mu} F_\lambda^*(\mu) =0 \implies 
\sum_i \int p(x) p^*(\mu_i|x) \frac d {d\mu_i} d(x,\mu_i) ~dx = 0
\label{eq:M}
\end{equation}
where we have used (\ref{eq:gibbs}), direct differentiation, and 
$\sum_i \frac d {d\mu} p^*(\mu_i|x) = \frac d {d\mu} \sum_i p^*(\mu_i|x) = 0$.
In the following section, we show that 
(\ref{eq:M}) has an easy to compute 
closed form solution if the dissimilarity measure $d$ 
belongs to the family of Bregman divergences.

\subsection{Bregman Divergences as Dissimilarity Measures}

The proximity measure $d$ can be generalized 
to dissimilarity measures inspired by 
information theory and statistical analysis.
In particular, 
the family of Bregman divergences
can offer numerous advantages in learning applications
compared to the Euclidean distance alone \cite{banerjee2005clustering}. 
\begin{definition}[Bregman Divergence]
	Let $ \phi: S \rightarrow \mathbb{R}$, 
	be a strictly convex function defined on 
	a vector space $S\subseteq \mathbb{R}^d$ 
	such that $\phi$ 
	is twice F-differentiable on $S$. 
	The Bregman divergence 
	$d_{\phi}:H \times S \rightarrow \left[0,\infty\right)$
	is defined as:
	\begin{align*}
		d_{\phi} \pbra{x, \mu} = \phi \pbra{x} - \phi \pbra{\mu} 
							- \pder{\phi}{\mu} \pbra{\mu} \pbra{x-\mu},
	\end{align*}
	where $x,\mu\in S$, and the continuous linear map 
	$\pder{\phi}{\mu} \pbra{\mu}: S \rightarrow \mathbb{R}$ 
	is the Fr\'echet derivative of $\phi$ at $\mu$.
	\label{def:BregmanD}
\end{definition}
The derivative of $d_\phi$ with respect to the second argument 
can be written as
\begin{align}
\pder{d_{\phi}}{\mu}(x,\mu) 
= - \pder{^2 \phi(\mu)}{\mu^2}(x-\mu) 
= - \abra{\nabla^2 \phi(\mu),(x-\mu)}
\label{eq:dd_phi}
\end{align}
which leads to the following theorem showing that 
if $d$ is a Bregman divergence, 
the solution to the second optimization step 
\eqref{eq:M} can be analytically computed in a convenient centroid form:
\begin{theorem}
A sufficient condition for the solution of the optimization problem
\begin{equation}
    \min_\mu F_\lambda^*(\mu)
    \label{eq:minFstar}
\end{equation}
where $F_\lambda^*(\mu)$ is defined in \eqref{eq:Fstar}, is given by 
\begin{equation}
\mu_i^* = \E{X|\mu_i} = \frac{\int x p(x) p^*(\mu_i|x) ~dx}{p^*(\mu_i)}
\label{eq:mu_star}
\end{equation}
if $d:=d_\phi$ is a Bregman divergence for some function 
$\phi$ that satisfies Definition \ref{def:BregmanD}.
\label{thm:bregman_in_DA}
\end{theorem}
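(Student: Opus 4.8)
The plan is to start from the stationarity condition already derived in \eqref{eq:M} and show that, under the Bregman structure, it collapses to the centroid form \eqref{eq:mu_star}. Since the codevectors $\cbra{\mu_i}$ are free and independent optimization variables, the single aggregate condition \eqref{eq:M} is satisfied whenever each summand vanishes separately, i.e. it is sufficient that
\begin{equation*}
\int p(x) p^*(\mu_i|x) \frac d {d\mu_i} d(x,\mu_i) ~dx = 0, \quad \forall i.
\end{equation*}
This per-codevector first-order condition is the object I would target, and the claim is precisely that \eqref{eq:mu_star} is a sufficient condition for it to hold.

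Next I would substitute the Bregman derivative identity \eqref{eq:dd_phi}, namely $\pder{d_{\phi}}{\mu}(x,\mu) = -\abra{\nabla^2 \phi(\mu),(x-\mu)}$, into the per-$i$ condition. The crucial structural observation is that the Hessian $\nabla^2\phi(\mu_i)$ depends only on $\mu_i$ and not on the integration variable $x$, so it factors out of the integral, leaving
\begin{equation*}
-\nabla^2\phi(\mu_i) \int p(x) p^*(\mu_i|x) (x-\mu_i) ~dx = 0.
\end{equation*}
Because $\phi$ is strictly convex and twice Fr\'echet-differentiable by Definition \ref{def:BregmanD}, its Hessian $\nabla^2\phi(\mu_i)$ is positive definite, hence invertible; multiplying through by its inverse removes it and reduces the condition to $\int p(x) p^*(\mu_i|x)(x-\mu_i)\,dx = 0$.

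Finally, I would pull the constant $\mu_i$ out of the integral and invoke the marginal $p^*(\mu_i) = \int p(x) p^*(\mu_i|x)\,dx$ to rewrite the last display as $\mu_i\, p^*(\mu_i) = \int x\, p(x) p^*(\mu_i|x)\,dx$, which upon dividing by $p^*(\mu_i)$ yields exactly \eqref{eq:mu_star}, i.e. $\mu_i^* = \E{X|\mu_i}$. I expect the only genuine obstacle to be the invertibility step: one must explicitly use strict convexity to guarantee $\nabla^2\phi(\mu_i)$ is nonsingular, so that the vanishing of $\nabla^2\phi(\mu_i)$ applied to the averaged residual vector forces that vector itself to vanish; everything else is direct substitution and rearrangement. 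It is worth emphasizing, as the theorem's wording (\emph{sufficient condition}) already signals, that this argument only certifies \eqref{eq:mu_star} as a stationary point of $F_\lambda^*$: the fixed-point nature of \eqref{eq:mu_star}, whose right-hand side depends on $\mu_i$ through $p^*(\mu_i|x)$, and any claim of global optimality are separate matters not required for this statement.
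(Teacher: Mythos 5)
Your proof is correct and follows essentially the same route as the paper: substitute the Bregman derivative identity \eqref{eq:dd_phi} into the stationarity condition \eqref{eq:M} and rearrange, using $\int p(x)p^*(\mu_i|x)\,dx = p^*(\mu_i)$, to obtain the centroid form \eqref{eq:mu_star}. The only difference is that you make explicit the per-codevector decomposition and the cancellation of the Hessian $\nabla^2\phi(\mu_i)$ via its nonsingularity, steps the paper's proof performs silently.
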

\begin{proof}
Given \eqref{eq:dd_phi}, \eqref{eq:M} becomes
\begin{equation}
\int (x-\mu_i) p(x) p^*(\mu_i|x) ~dx= 0
\end{equation}
which is equivalent to (\ref{eq:mu_star}) since 
$\int p(x) p^*(\mu_i|x) ~dx = p^*(\mu_i)$.
\end{proof}

As a final note, the family of Bregman divergences includes 
two notable examples. The first is the widely used squared Euclidean distance 
$d_\phi(x,\mu) = \|x-\mu\|^2$ ($\phi(x) = \abra{x,x},\ x\in\mathbb{R}^d$),
and the second is the generalized Kullback-Leibler divergence 
$d_\phi(x,\mu) = \abra{x,\log x - \log \mu}
	- \abra{\mathds{1}, x - \mu}$
($\phi(x) = \abra{x,\log x},\ x\in\mathbb{R}_{++}^d$).

\subsection{The Online Learning Rule}

In an offline approach, the approximation of 
the conditional expectation $\E{X|\mu_i}$ 
is computed by the sample mean of the data points weighted by their association 
probabilities $p(\mu_i|x)$ \cite{rose1998deterministic}. 
To define an online training rule 
for the deterministic annealing framework,
a stochastic approximation algorithm can be formulated 
\cite{mavridis2022online}
to recursively estimate $\E{X|\mu_i}$ directly.
%
The following theorem follows directly from \cite{mavridis2022annealing} and 
provides an online learning rule that solves the
optimization problem of \eqref{eq:minFstar}.
%
%
\begin{theorem}
Let $\cbra{x_n}$ be a sequence of independent realizations of $X$.
Then $\mu_i(n)$, defined by the online training rule 
\begin{equation}
\begin{cases}
\rho_i(n+1) &= \rho_i(n) + \alpha(n) \sbra{ \hat p(\mu_i|x_n) - \rho_i(n)} \\
\sigma_i(n+1) &= \sigma_i(n) + \alpha(n) \sbra{ x_n \hat p(\mu_i|x_n) - \sigma_i(n)}
\end{cases}
\label{eq:oda_learning1}
\end{equation}
where $\sum_n \alpha(n) = \infty$, $\sum_n \alpha^2(n) < \infty$,
and the quantities $\hat p(\mu_i|x_n)$ and $\mu_i(n)$ 
are recursively updated 
as follows:
\begin{equation}
\begin{aligned}
\mu_i(n) = \frac{\sigma_i(n)}{\rho_i(n)},\quad
\hat p(\mu_i|x_n) = \frac{\rho_i(n) e^{-\frac{1-\lambda}{\lambda}d(x_n,\mu_i(n))}}
			{\sum_i \rho_i(n) e^{-\frac{1-\lambda}{\lambda}d(x_n,\mu_i(n))}} 
\end{aligned}
\label{eq:oda_learning2}
\end{equation}
converges almost surely to a locally asymptotically stable solution of the optimization 
\eqref{eq:minFstar}, as $n\rightarrow\infty$.
\label{thm:ODA}
\end{theorem}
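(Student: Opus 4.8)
The plan is to recognize \eqref{eq:oda_learning1}--\eqref{eq:oda_learning2} as a stochastic approximation recursion and to analyze it through the ODE method of \cite{borkar2009stochastic}. Collecting the state into the vector $\theta(n):=\pbra{\rho_1(n),\ldots,\rho_K(n),\sigma_1(n),\ldots,\sigma_K(n)}$, the update takes the canonical form $\theta(n+1)=\theta(n)+\alpha(n)\sbra{g(x_n,\theta(n))-\theta(n)}$, where the relevant coordinates of $g$ are $\hat p(\mu_i|x)$ and $x\,\hat p(\mu_i|x)$ with $\mu_i=\sigma_i/\rho_i$ and $\hat p$ given by \eqref{eq:oda_learning2}. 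Decomposing $g(x_n,\theta(n))-\theta(n)=h(\theta(n))+M(n+1)$ into its conditional mean $h(\theta):=\E{g(X,\theta)}-\theta$ and the martingale-difference term $M(n+1):=g(x_n,\theta(n))-\E{g(X,\theta(n))\mid\mathcal F_n}$, the associated limiting ODE is $\dot\theta=h(\theta)$, that is $\dot\rho_i=\int p(x)\hat p(\mu_i|x)\,dx-\rho_i$ and $\dot\sigma_i=\int x\,p(x)\hat p(\mu_i|x)\,dx-\sigma_i$.

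First I would verify the standard hypotheses of the ODE method. The noise satisfies $\E{\norm{M(n+1)}^2\mid\mathcal F_n}\le C\pbra{1+\norm{\theta(n)}^2}$ because $\hat p(\mu_i|x)\in[0,1]$ and, assuming $S$ compact (equivalently $X$ bounded), $g$ is uniformly bounded. Boundedness of the iterates is immediate for the $\rho$-coordinates: $\rho_i(n+1)=(1-\alpha(n))\rho_i(n)+\alpha(n)\hat p(\mu_i|x_n)$ is a convex combination, keeping $\rho_i(n)\in[0,1]$; the same convex-combination structure together with compactness of $S$ keeps each $\sigma_i(n)$ in a fixed ball. The stepsize conditions $\sum_n\alpha(n)=\infty$, $\sum_n\alpha^2(n)<\infty$ are assumed outright.

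Next I would identify the equilibria of the ODE with the solutions of \eqref{eq:minFstar}. Setting $h(\theta)=0$ yields $\rho_i=\int p(x)\hat p(\mu_i|x)\,dx=p^*(\mu_i)$ and $\sigma_i=\int x\,p(x)\hat p(\mu_i|x)\,dx$, so that $\mu_i=\sigma_i/\rho_i=\E{X\mid\mu_i}$, which is exactly the centroid condition of Theorem \ref{thm:bregman_in_DA}; moreover $\rho_i=p^*(\mu_i)$ makes $\hat p(\mu_i|x)$ coincide with the Gibbs association $p^*(\mu_i|x)$ of Lemma \ref{lem:gibbs}, so equilibria are precisely the stationary points of $F_\lambda^*$. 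To upgrade convergence to the internally chain-transitive invariant set into convergence to a \emph{locally asymptotically stable} point, I would exhibit $F_\lambda^*$, expressed through $\mu=\sigma/\rho$, as a Lyapunov function for the flow, showing $\frac{d}{dt}F_\lambda^*\le 0$ along trajectories with equality only at equilibria, and conclude via LaSalle's invariance principle that the only stable limit points are local minimizers of $F_\lambda^*$.

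The hard part will be controlling the singularity of the mean field where some $\rho_i\to 0$: the map $\theta\mapsto h(\theta)$ is only \emph{locally} Lipschitz on the region $\cbra{\rho_i>0}$ and degenerates as a codevector loses all probability mass. I would handle this by restricting attention to the invariant region where active codevectors retain mass bounded away from zero, a property consistent with the bifurcation structure of the annealing process in which codevectors with vanishing mass are effectively merged, on which $h$ is Lipschitz and the ODE method applies cleanly. The remaining verification, that $F_\lambda^*$ is a strict Lyapunov function whose stable critical points are exactly the local minima, reduces to the gradient computation already carried out preceding \eqref{eq:M} together with the positive-definiteness of $\nabla^2\phi$ coming from the Bregman structure of Definition \ref{def:BregmanD}.
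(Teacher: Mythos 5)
Your route is genuinely different from the paper's, and more ambitious. The paper's proof (Appendix B) never analyzes the coupled, state-dependent recursion you set up: it freezes the association probabilities as a fixed kernel $p(\mu|\cdot)$ depending on $x$ alone (for a given $\mu$), so that each pair $(\rho_i,\sigma_i)$ has the \emph{linear} mean field $\dot\rho = p(\mu)-\rho$, $\dot\sigma = \E{\mathds{1}_{\sbra{\mu}}X}-\sigma$, whose unique equilibrium is globally asymptotically stable; convergence of $m_n=\nicefrac{\sigma_n}{\rho_n}$ to $\E{X|\mu}$ then follows, and the identification with a solution of \eqref{eq:minFstar} is delegated to the centroid condition of Theorem \ref{thm:bregman_in_DA}. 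That buys global Lipschitzness, no $\rho_i\to 0$ singularity, and no need for a Lyapunov function --- at the price of not addressing the self-referential dependence of $\hat p$ on the iterates, which your formulation confronts directly. Your verification of the stochastic-approximation hypotheses (the noise bound, the convex-combination boundedness of the iterates) is correct, and is essentially what the paper also relies on.

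The gap is in your stability step. First, $F_\lambda^*$ depends on $\theta$ only through $\mu=\nicefrac{\sigma}{\rho}$, so $\frac{d}{dt}F_\lambda^*=0$ on a set much larger than the set of equilibria (for instance, any configuration where $\mu$ sits at a critical point while the $\rho_i$ still evolve); your claim of ``equality only at equilibria'' is false as stated, and LaSalle only yields convergence to the largest invariant subset of $\cbra{\dot F_\lambda^*=0}$, after which a separate (easy, linear) argument for the $\rho$-dynamics on that set is still needed. Second, and more seriously, the inequality $\dot F_\lambda^*\le 0$ does not simply ``reduce to the gradient computation preceding \eqref{eq:M}'': the induced flow is $\dot\mu_i=\rho_i^{-1}\int (x-\mu_i)\,p(x)\,\hat p(\mu_i|x)\,dx$, which is a positive-definite preconditioning of $-\pder{F_\lambda^*}{\mu_i}$ only if $\hat p$ coincides with the Gibbs distribution $p^*$ of Lemma \ref{lem:gibbs}. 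But the algorithm's $\hat p$ in \eqref{eq:oda_learning2} carries the weights $\rho_i$, whereas the Gibbs distribution of Lemma \ref{lem:gibbs} is unweighted; these weights do not drop out at an equilibrium, so your identification of the equilibria with stationary points of $F_\lambda^*$, and the Lyapunov inequality itself, both require either working with the mass-weighted (Rose-type) free energy or explicitly controlling this mismatch. (The same discrepancy is latent in the paper, but its decoupled proof never has to face it.) Until that computation is carried out, the ``locally asymptotically stable'' part of the theorem is not established by your argument.
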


\begin{proof}
    See Appendix \ref{App:online}.
\end{proof}

The learning rule
(\ref{eq:oda_learning1}), (\ref{eq:oda_learning2}) 
is a stochastic approximation algorithm \cite{borkar2009stochastic}.
In the limit $\lambda\rightarrow 0$, it results in a consistent density estimator
according to the following theorem:

\begin{theorem}
In the limit $\lambda\rightarrow 0$, and as 
the number of observed samples $\cbra{x_n}$ goes to infinity,
i.e., $n\rightarrow\infty$,
the learning algorithm based on
(\ref{eq:oda_learning1}), (\ref{eq:oda_learning2}),
results in a codebook $\mu$ that 
constructs a consistent density estimator with 
$\hat p(x) = 
\frac{\sum_i \mathds{1}_{\sbra{x\in S_i}}}{n Vol(S_i)}$, 
where 
$S_i = \cbra{x \in S: i = \argmin\limits_j ~ d(x,\mu_j)}$.
\label{thm:consistency}
\end{theorem}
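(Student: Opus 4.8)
The plan is to reduce the claim to the classical consistency theory for partition-based (histogram) density estimators, treating the two limits $\lambda\to 0$ and $n\to\infty$ in sequence. I read the estimator as the empirical cell-frequency normalized by cell volume: for $x\in S_i$, $\hat p(x)=N_i/\pbra{n\,Vol(S_i)}$ with $N_i=\sum_{m=1}^{n}\mathds{1}_{\sbra{x_m\in S_i}}$, and take \emph{consistency} to mean $\hat p(x)\to p(x)$ (pointwise, hence in $L_1$ under integrability) as the resolution refines.

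First I would fix $\lambda>0$ and pass to the hard-clustering limit. As $\lambda\to 0$ (equivalently $T\to 0$), the Gibbs associations of Lemma~\ref{lem:gibbs} sharpen to nearest-neighbor assignments, $p^*(\mu_i|x)\to\mathds{1}_{\sbra{x\in S_i}}$, so $F_\lambda^*$ collapses onto the quantization objective \eqref{eq:VQ} and the cells $S_i$ become the (Bregman-)Voronoi regions of the codevectors. By Theorem~\ref{thm:ODA}, for each fixed $\lambda$ the online rule drives $\mu_i(n)$ almost surely to a stable solution of \eqref{eq:minFstar}, which by Theorem~\ref{thm:bregman_in_DA} obeys the centroid (Lloyd) condition $\mu_i^*=\E{X|\mu_i}$. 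Sending $n\to\infty$ first therefore replaces the random, data-dependent partition by a deterministic limiting partition $\cbra{S_i(\lambda)}$, which is what lets me treat $\cbra{S_i}$ as fixed in the next step.

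For this fixed partition I would invoke the strong law of large numbers: $N_i/n\to P(X\in S_i)=\int_{S_i}p(u)\,du$ almost surely, whence $\hat p(x)\to\big(\int_{S_i}p(u)\,du\big)/Vol(S_i)$, the average of $p$ over the cell containing $x$. It then remains to let $\lambda\to 0$ and show this cell-average tends to $p(x)$. Here I would use the bifurcation analysis of goal (ii)—the number of distinct codevectors $K(\lambda)\to\infty$ as $\lambda\to 0$—together with the centroid condition to argue that, for almost every $x$ with $p(x)>0$, the diameter of the cell $S_i\ni x$ shrinks to zero; continuity of $p$ and the Lebesgue differentiation theorem then give $\big(\int_{S_i}p\big)/Vol(S_i)\to p(x)$, and hence $\hat p(x)\to p(x)$.

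The hard part will be the shrinking-cells step. The bifurcation count only guarantees $K\to\infty$; I must additionally rule out that some cells stay large while others proliferate, i.e. establish $\max_i\operatorname{diam}\pbra{S_i(\lambda)}\to 0$ on the support of $p$ (or at least that the cell about any fixed such $x$ collapses). This is where I expect to lean on the asymptotic theory of optimal centroidal quantizers: the Zador-type point-density law places codevectors with density $\propto p(x)^{d/(d+2)}$ in the squared-distortion case, so the whole support $\cbra{p>0}$ is covered ever more finely and no region of positive measure is left coarsely partitioned. A secondary care point is the interchange of limits: taking $n\to\infty$ before $\lambda\to 0$ keeps the two arguments decoupled, but a uniform statement would require a diagonal coupling $\lambda=\lambda_n\to 0$ slow enough that simultaneously $Vol(S_i)\to 0$ (vanishing bias) and $n\,Vol(S_i)\to\infty$ (vanishing variance).
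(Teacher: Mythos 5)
Your proposal is correct in substance and follows the same skeleton as the paper's proof: pass to the hard-assignment limit $p^*(\mu_i|x)\to\mathds{1}_{\sbra{x\in S_i}}$ as $\lambda\to 0$ so that $F_\lambda^*$ collapses to the quantization objective $J(\mu)$, use the bifurcation phenomenon to get $K(\lambda)\to\infty$, apply the SLLN to the empirical cell frequencies, and finish with a shrinking-cells argument; even your closing worry about the order of limits matches the paper, which requires $\nicefrac{k}{n}\to 0$ exactly as in your diagonal coupling. The one genuinely different step is the one you flagged as the hard part. Where you propose to import Zador-type point-density asymptotics for optimal centroidal quantizers, the paper uses a more elementary, self-contained sandwich argument: it compares the optimal codebook $\mu$ against a suboptimal codebook $w$ whose cells $\Sigma_i$ are roughly uniform with $Vol(\Sigma_i)=O(\nicefrac{1}{k})$, so that $J(w)\to 0$ by continuity of $p$ and compactness of $S$; global optimality then gives $0\leq J(\mu)\leq J(w)\to 0$, hence $J(\mu)\to 0$, which forces $Vol(S_i)\to 0$ on the support of $p$. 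The paper's route buys simplicity and avoids external quantization theory, but it only controls cell \emph{volumes}; your observation that the cell-average $\pbra{\int_{S_i}p}/Vol(S_i)$ converges to $p(x)$ really requires diameter control (or cells shrinking ``nicely'' in the sense of the Lebesgue differentiation theorem) is a genuine subtlety that the paper glosses over, and your Zador-based argument would supply exactly that. Note also that both routes share the same unacknowledged weakness: the stochastic approximation of Theorem \ref{thm:ODA} only guarantees convergence to a \emph{locally} asymptotically stable solution, whereas the sandwich $J(\mu)\leq J(w)$ (and likewise Zador's theorem) presumes global optimality of the codebook.
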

\begin{proof}
See Appendix \ref{App:consistency}. 
\end{proof}

This means that as $\lambda\rightarrow 0$, the representation
of the random variable $X\in S$ by the codevectors $\mu$ 
becomes all the more accurate in $S$, according to the 
underlying probability density $p(x)$.
Regarding clustering, the nearest-neighbor rule can be used to 
partition the space $S$ in Voronoi cells
$S_i = \cbra{x \in S: i = \argmin_j ~ d_\phi(x,\mu_j)}$.

\begin{remark}
    Notice that we can express the dynamics of the codevector parameters $\mu_i(n)$
    directly as: 
    \begin{equation}
        \begin{aligned}
            \mu_i(n+1) &= 
            \frac{\alpha(n)}{\rho_i(n)}\bigg[\frac{\sigma_i(n+1)}{\rho_i(n+1)}
            (\rho_i(n)-\hat p(\mu_i|x_n))
            \\&\quad\quad\quad\quad\quad
            + (x_n \hat p(\mu_i|x_n)-\sigma_i(n))\bigg]
        \end{aligned}
    \end{equation}
    where the recursive updates take place for every codevector $\mu_i$
    sequentially. This is a discrete-time dynamical system 
    that presents bifurcation phenomena with respect to the parameter $\lambda$, 
    i.e., the number of equilibria of this system changes with respect to the
    value $\lambda$ which is hidden inside the term $\hat p(\mu_i|x_n)$ in 
    \eqref{eq:oda_learning2}.
    According to this phenomenon, the number of distinct values of $\mu_i$ is finite, 
    and the updates need only be taken with respect to these values that we call 
    ``effective codevectors''.
    This is discussed in Section \ref{sSec:bifurcation}.
\end{remark}

\subsection{Bifurcation Phenomena}
\label{sSec:bifurcation}

So far, we have assumed a countably infinite set of codevectors.
In this section we will show that the unique values of the set $\cbra{\mu_i}$ 
that solves \eqref{eq:F},
form a finite set $K(\lambda)$ of values that we will refer to as ``effective codevectors''
throughout this paper.
In other words, both the number and the locations of the codevectors depend on the value of
$\lambda$ (resp. the value of the temperature parameter $T$).
These effective codevectors are the only values that an algorithmic implementation will 
need to store in memory and update.

First, notice that when $\lambda\rightarrow 1$ 
(resp. $T\rightarrow\infty$) equation \eqref{eq:gibbs} yields
uniform association probabilities $p(\mu_i|x)=p(\mu_j|x),\ \forall i,j, \forall x$.
As a result of \eqref{eq:M}, all codevectors are located at the same point:
\begin{align*}
\mu_i = \E{X},\ \forall i
\end{align*}
which means that there is one unique effective codevector given by $\E{X}$.

As $\lambda$ is lowered below a critical value, a bifurcation phenomenon occurs, 
when the number of effective codevectors increases.
Mathematically, this occurs when the existing solution $\mu^*$ given by (\ref{eq:mu_star}) 
is no longer the minimum of the free energy $F^*$,
as $\lambda$ (resp. the temperature $T$) crosses a critical value.
Following principles from variational calculus, 
we can rewrite the necessary condition for optimality (\ref{eq:M}) as
\begin{equation}
    \frac{d}{d\epsilon} F^*(\mu+\epsilon \psi)|_{\epsilon=0} = 0
\end{equation}
with the second order condition being 
\begin{equation}
    \frac{d^2}{d\epsilon^2} F^*(\cbra{\mu+\epsilon \psi})|_{\epsilon=0} \geq 0
    \label{eq:soc}
\end{equation}
for all choices of finite perturbations $\cbra{\psi}$.
Here we denote by $\cbra{y := \mu + \epsilon \psi}$ a perturbed codebook, 
where $\psi$ are perturbation vectors applied to the codevectors $\mu$, and 
$\epsilon\geq 0$ is used to scale the magnitude of the perturbation. 
Bifurcation occurs when equality is achieved in \eqref{eq:soc} 
and hence the minimum is no longer stable\footnote{For simplicity we ignore higher order derivatives, which should be checked for mathematical completeness, but which are of minimal practical importance. The result is a necessary condition for bifurcation.}.
These conditions are described in the following theorem.
A sketch of the proof can be found in \cite{mavridis2022annealing}, and a complete
version is given in Appendix \ref{App:bifurcation}.
\begin{theorem}
Bifurcation occurs under the following condition 
\begin{equation}
    \exists y_n \text{  s.t.  } p(y_n)>0 \text{  and  } \det\sbra{ I -  \frac{1-\lambda}{\lambda} \pder{^2 \phi(y_n)}{y_n^2} C_{X|y_n}} = 0,
\end{equation}
where $C_{X|y_n} := \E{(X-y_n) (X-y_n)^\T|y_n}$.
\label{thm:bifurcation}
\end{theorem}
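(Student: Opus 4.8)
The plan is to analyze the second-order optimality condition \eqref{eq:soc} directly: compute the second variation of the free energy along a codebook perturbation and locate the set on which it degenerates. It is convenient to first eliminate the association probabilities. Substituting the Gibbs solution \eqref{eq:gibbs} back into \eqref{eq:F} collapses $F_\lambda^*$ from \eqref{eq:Fstar} into the log-partition form $F_\lambda^*(\mu) = -\lambda H(X) - \lambda \int p(x)\log Z_x\,dx$, where $Z_x := \sum_j e^{-\beta d_\phi(x,\mu_j)}$ and $\beta := \frac{1-\lambda}{\lambda} = 1/T$. Since $H(X)$ is independent of $\mu$, all the curvature lives in the partition-function term, and differentiating $\log Z_x$ once recovers exactly the stationarity condition \eqref{eq:M}; hence at any solution of \eqref{eq:mu_star} the first variation $\frac{d}{d\epsilon}F_\lambda^*(\mu+\epsilon\psi)|_{\epsilon=0}$ vanishes, and the bifurcation must be read off from the second variation.

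Next I would handle the degeneracy. Up to the critical value of $\lambda$ the minimizer consists of several coincident codevectors stacked at each effective location; write $y_n$ for such a location, with $p(y_n)$ its assigned mass and $C_{X|y_n}$ the conditional covariance of the data it captures. The decisive perturbation is a symmetric \emph{splitting} of the codevectors at $y_n$, sending equal masses to $y_n+\epsilon\psi$ and $y_n-\epsilon\psi$. This choice is forced by two observations: the symmetry kills the first-order effect on the centroid, so the leading change is genuinely second order; and at the fully coincident configuration one argues the Hessian of $F_\lambda^*$ block-decouples across distinct effective locations, because cross-cluster coupling terms carry products $p^*(y_m|x)p^*(y_n|x)$ of well-separated Gibbs factors that are suppressed relative to the within-cluster diagonal term. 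The first eigenvalue to cross zero is therefore a single-cluster splitting mode, and stability can be studied one cluster at a time.

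The core computation is the second variation restricted to the splitting mode at $y_n$. Two ingredients enter. Differentiating the Gibbs weights $p^*(\mu_i|x)$ twice with respect to the codevector positions produces, after the symmetric cancellations, the conditional second moment $\E{(X-y_n)(X-y_n)^\T|y_n} = C_{X|y_n}$ carrying a factor $\beta$; and differentiating the Bregman term through \eqref{eq:dd_phi} contributes the curvature $\pder{^2\phi(y_n)}{y_n^2} = \nabla^2\phi(y_n)$, while the residual terms linear in $(x-\mu)$ average to zero by the centroid identity \eqref{eq:mu_star}. Collecting these, the second variation becomes, up to a positive factor, a quadratic form $\psi^\T M_n\psi$ with symmetric matrix $M_n = \nabla^2\phi(y_n) - \beta\,\nabla^2\phi(y_n)\,C_{X|y_n}\,\nabla^2\phi(y_n)$. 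Because $\phi$ is strictly convex, $\nabla^2\phi(y_n)$ is positive definite, so the configuration is stable as long as $M_n$ is positive semidefinite and first develops a null direction when $\det M_n = 0$. Factoring $\det M_n = \det\nabla^2\phi(y_n)\cdot\det\sbra{I - \beta\, C_{X|y_n}\nabla^2\phi(y_n)}$, using $\det(I-AB)=\det(I-BA)$ and $\det\nabla^2\phi(y_n)\neq 0$, this is equivalent to $\det\sbra{I - \frac{1-\lambda}{\lambda}\pder{^2\phi(y_n)}{y_n^2} C_{X|y_n}} = 0$, while $p(y_n)>0$ guarantees the splitting mode carries mass and is genuinely activated; this is exactly the stated condition.

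The main obstacle I anticipate is the bookkeeping around the degeneracy rather than any single hard estimate: making ``splitting coincident codevectors'' precise, verifying the claimed block-decoupling of the Hessian so that the single-cluster reduction is legitimate, and tracking the cancellations that turn the second derivatives of the Gibbs weights into exactly $C_{X|y_n}$ — the $(x-\mu)$-linear remainder must be shown to vanish by stationarity and the $1/Z_x$ normalization terms must be carried without dropping cross-terms. Getting the precise matrix $M_n = \nabla^2\phi(y_n) - \beta\,\nabla^2\phi(y_n)C_{X|y_n}\nabla^2\phi(y_n)$ right, as opposed to a similar but non-equivalent symmetrization, is where the most care is needed; fortunately only its singularity, not its exact entries, matters for the stated determinant criterion, and the squared-Euclidean special case ($\nabla^2\phi = 2I$, giving the familiar critical temperature $T_c = 2\lambda_{\max}(C_{X|y_n})$) provides a useful consistency check.
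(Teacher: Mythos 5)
Your proposal is correct and follows essentially the same route as the paper's proof: a second-variation analysis of $F_\lambda^*$ around the coincident-codevector configuration, with the centroid condition \eqref{eq:mu_star} killing the terms linear in $(x-y_n)$, a balanced splitting perturbation at a single effective location isolating the per-cluster quadratic form $\psi^\T\sbra{\nabla^2\phi(y_n) - \frac{1-\lambda}{\lambda}\nabla^2\phi(y_n)C_{X|y_n}\nabla^2\phi(y_n)}\psi$, and the determinant criterion then following from $\nabla^2\phi(y_n)\succ 0$. The only substantive differences are bookkeeping: you collapse $F_\lambda^*$ to the log-partition form before differentiating (the paper grinds through the distortion and entropy terms separately and reaches the same two-term expression), and where you invoke a heuristic block-decoupling of the Hessian, the paper argues exactly — the cross-cluster coupling term is a non-negative square that vanishes identically (not merely is suppressed) under the balanced split with $\sum_i \psi_i = 0$ at $y_n$ and $\psi = 0$ elsewhere, which is precisely what makes your single-cluster reduction rigorous.
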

\begin{proof}
See Appendix \ref{App:bifurcation}.
\end{proof}

In other words, there exist critical values for $\lambda$ that depend on the data space itself and 
the choice of the Bregman divergence (through the function $\phi$), such that bifurcation occurs when 
\begin{equation}
    \frac{\lambda}{1-\lambda} = \pder{^2 \phi(y_n)}{y_n^2} \bar \nu
\end{equation}
where $\bar \nu$ is the largest eigenvalue of $C_{X|y_n}$.
That is to say that an algorithmic implementation needs only
as many codevectors as the number of effective codevectors, which
depends only on changes of the temperature parameter below certain thresholds that
depend on the dataset at hand and the dissimilarity measure used.
As shown in Alg. \ref{alg:ODA}, 
we can detect the bifurcation points 
by introducing perturbing pairs of codevectors at each 
temperature level $\lambda$ (resp. $T$).
In this way, the codevectors $\mu$ are doubled by inserting a perturbation of each $\mu_i$ in 
the set of effective codevectors.
The newly inserted codevectors will merge with their pair if 
a critical temperature has not been reached and separate otherwise.
%

\subsection{Connection to Vector Quantization. Compression Rate and Error}
\label{sSec:VQconnection}

It is apparent that problem \eqref{eq:F} is an entropy-constrained generalization 
of a soft-clustering method that, in the limit $\lambda\rightarrow 0$, converges to 
a standard vector quantization (hard-clustering) problem.
In fact, one can easily verify that
\begin{equation}
\begin{aligned}
&F_\lambda(\mu) +\lambda H(X) = (1-\lambda) D(\mu) - \lambda H(\mu) +\lambda H(X)
\\&
= \int p(x) \sum_i p(\mu_i|x) \sbra{ (1-\lambda) d(x,\mu_i) - \lambda \log p(\mu_i|x)}~dx, 
\end{aligned}
\end{equation}
and, since the entropy term $H(X)$ does not depend on the optimization parameters
$\mu$, the clustering approach in \eqref{eq:F} is equivalent to soft-clustering
with respect to a modified dissimilarity measure given by:
\begin{equation}
    d_\lambda(x,\mu_i) = (1-\lambda) d(x,\mu_i) - \lambda \log p(\mu_i|x)
\end{equation}
subject to the constraint $\sum_i p(\mu_i|x) = 1$.

Therefore, the proposed method is a lossy compression method with hierarchically 
decreasing loss as $\lambda\rightarrow 0$ and the number of effective codevectors 
goes to infinity, i.e., $K\rightarrow\infty$.
An explicit expression of the error rate $F_\lambda(\mu^*)$, 
for each temperature level $\lambda$, as a function of 
$F_0(\mu^*)=D(\mu^*)$, i.e., the error rate of a vector quantization algorithm,
is hard to obtain as it highly depends 
on the underlying distribution of the data space at hand through the entropy term.
However, an intuitive interpretation of the hierarchy of solutions that is constructed by 
solving \eqref{eq:F} for decreasing values of $\lambda$ can be seen from the form of the
conditional probabilities in \eqref{eq:gibbs}. That is, during the implementation of the
algorithm, at every level $\lambda$, a (soft-)Voronoi partition of the data space is computed with respect to a scaled dissimilarity measure: 
\begin{equation}
    \bar d_\lambda(x,\mu_i) = \frac {1-\lambda}{\lambda} d(x,\mu_i) = \frac 1 T  d(x,\mu_i)
\end{equation}
Thus, the algorithm perceives a scaled version of the data space at each level $\lambda$,
by focusing only to large dissimilarities within the data space 
when the value of $\lambda$ is high, and 
progressively zooming in to perceive more subtle dissimilarities as the value of 
$\lambda$ decreases.
Therefore, the error rate $F_\lambda(\mu^*)$ can be roughly expressed as
proportional to $D(\mu^*)$ and the term $\frac {1-\lambda}{\lambda}$ (inversely proportional to the temperature level $T$), i.e., 
$F_\lambda(\mu^*) \propto \frac {1-\lambda}{\lambda} D(\mu^*)$.

Note that this is the worst case scenario, when the introduction of the entropy term induces information loss across all regions of the data space. 
In many cases, there are regions of the data space where higher compression rate does not
introduce information loss, or the information loss is significantly lower than others.
In this sense, one can view \eqref{eq:F} as a risk-sensitive version of soft-clustering,
where an optimistic, or risk-seeking, approach is adopted. Risk-seeking in this setting 
translates to searching for less complex representations (with lower number of effective codevectors that induce higher entropy) in the hope that 
more complex representations are not necessarily needed.
The parameter $\lambda$ then becomes a weight of risk-sensitivity.
More details regarding this interpretation can be found in \cite{mavridis2022risk}.
In view of the above, the error rate $F_\lambda(\mu^*)$ can be roughly expressed as
\begin{equation}
    F_\lambda(\mu^*) \leq \frac {1-\lambda}{\lambda} D(\mu^*),\ \lambda\in[0,1).
\end{equation}
%

\subsection{Algorithmic Implementation and Complexity}
\label{sSec:Algorithm}

The progressive partitioning algorithm 
is shown in Algorithm \ref{alg:ODA}.
%
The temperature parameter $\lambda_t$ is reduced using the geometric series 
$\lambda_{t+1}=\gamma \lambda_t$, for $\gamma<1$.
Regarding the stochastic approximation stepsizes, 
simple time-based learning rates of 
the form $\alpha_n = \nicefrac{1}{a+ bn}$, $a,b>0$,
have experimentally shown to be sufficient 
for fast convergence. 
Convergence is checked with the condition
$\frac{1-\lambda}{\lambda} d_\phi(\mu_i^n,\mu_i^{n-1})<\epsilon_c$
for a given threshold $\epsilon_c$.
This condition becomes harder as the value of $\lambda$ decreases.
%
The stopping criteria $T_{stop}$ can include
a maximum number of codevectors $K_{max}$ allowed,
a minimum temperature $\lambda_{min}$ to be reached, 
a minimum distortion error $e_{target}$ to be reached,
a maximum number of iterations $i_{max}$, and so on.

Bifurcation, at $\lambda_t$, is detected by maintaining a pair 
$\cbra{\mu_j+\delta, \mu_j-\delta}$ of perturbed 
codevectors
for each effective codevector $\mu_j$ generated by the algorithm 
at $\lambda_{t-1}$,
i.e. for $j=1\ldots,K_{i-1}$.
Using arguments from variational calculus (see Section \ref{sSec:bifurcation}),
it is easy to see that, upon convegence, 
the perturbed codevectors will merge if a critical 
temperature has not been reached, and will get separated otherwise. 
Therefore, the cardinality of the model is at most doubled at 
every temperature level.
These are the effective codevectors discussed in Section \ref{sSec:bifurcation}.
%
%
Merging is detected by the condition 
$\frac{1-\lambda}{\lambda} d_\phi(\mu_j,\mu_i)<\epsilon_n$,
where $\epsilon_n$ is a design parameter
that acts as a regularization term for the model 
that controls the number of effective codevectors.
An additional regularization mechanism
is the detection of idle codevectors, which 
is checked by the condition $\rho_i(n)<\epsilon_r$, 
where $\rho_i(n)$ can be seen as an
approximation of the probability $p(\mu_i)$.

The complexity of Alg. \ref{alg:ODA} for a fixed temperature coefficient $\lambda_t$ is 
$O(N_{c_t} (2K_t)^2 d)$,
where $N_{c_t}$ is the number of stochastic approximation iterations needed for convergence 
which corresponds to the number of data samples observed, 
$K_t$ is the number of codevectors of the model at temperature $\lambda_t$, and 
$d$ is the dimension of the input vectors, i.e., $x\in\mathbb{R}^d$.
Therefore, assuming 
a schedule $\cbra{ \lambda_1=\lambda_{max}, \lambda_2, \ldots, \lambda_{N_\lambda}=\lambda_{min}}$,
the time complexity for the training of Algorithm \ref{alg:ODA} becomes:
\begin{align*}
O(N_{c} (2\bar K)^2 d)
\end{align*}
where $N_c=\max_i \cbra{N_{c_t}}$ is an upper bound on the number of data samples observed
until convergence at each temperature level, and
$\bar K = \sum_{i=1}^{N_\lambda} K_t$, with 
\begin{align*}
N_\lambda \leq \bar K \leq \min\cbra{ \sum_{n=0}^{N_\lambda-1} 2^n, \sum_{n=0}^{\log_2 K_{max}} 2^n}
< N_\lambda K_{max}
\end{align*}
where the actual value of $\bar K$
depends on the bifurcations occurred as a result of reaching critical temperatures
and the effect of the regularization mechanisms described above.
Note that typically $N_c \ll N$ as a result of the stochastic approximation algorithm,
and $\bar K \ll N_\lambda K_{max}$ as a result of the progressive nature of the algorithm.
Prediction scales linearly with $O(K_{N_\lambda} d)$, with $K_{N_\lambda}\leq K_{max}$.



\begin{algorithm}[h]
\caption{Progressive Partitioning.} 
\label{alg:ODA}
\begin{algorithmic}
\STATE Select a Bregman divergence $d_\phi$
\STATE Set stopping criteria $T_{stop}$ (e.g., $K_{max}$, $\lambda_{min}$)
\STATE Set convergence parameters: $\gamma$, 
	$\epsilon_c$, $\epsilon_n$, $\epsilon_r$, $\delta$ 
\STATE Set stepsizes: $\cbra{\alpha_n}$ 
\STATE Initialize:	$K = 1$, $\lambda = 1$,\\ \phantom{asdfasdfa}  $\cbra{\mu_0}$,  
        $p(\mu_0) = 1$, $\sigma(\mu_0) = \mu_0 p(\mu_0)$
\REPEAT
\STATE Perturb  codebook:
	$\cbra{\mu_i} \gets  
		\cbra{\mu_i+\delta} \bigcup \cbra{\mu_i-\delta}$ 
\STATE Update $K\gets 2K$,
$\cbra{p(\mu_i)}$, $\cbra{\sigma(\mu_i)\gets\mu_i p(\mu_i)}$ 
\STATE $n \gets 0$
\REPEAT 
%
\STATE Observe data point $x$ 
\FOR{$i = 1,\ldots, K$} 
\STATE Update: 
\vskip -0.3in
	\begin{align*}
	p(\mu_i|x) &\gets \frac{p(\mu_i) e^{-\frac{1-\lambda}{\lambda}d_\phi(x,\mu_i)}}
			{\sum_i p(\mu_i) e^{-\frac{1-\lambda}{\lambda}d_\phi(x,\mu_i)}} \\
	p(\mu_i) &\gets p(\mu_i) + \alpha_n \sbra{ p(\mu_i|x) - p(\mu_i)} \\
	\sigma(\mu_i) &\gets \sigma(\mu_i) + 
		\alpha_n \sbra{ x p(\mu_i|x) - \sigma(\mu_i)} \\
	\mu_i &\gets \frac{\sigma(\mu_i)}{p(\mu_i)}	
	\end{align*}
\vspace{-1.5em}
\STATE $n\gets n+1$
\ENDFOR
\UNTIL Convergence: $\frac{1-\lambda}{\lambda} d_\phi(\mu_i^n,\mu_i^{n-1})<\epsilon_c$, $\forall i $
\STATE Keep effective codevectors: \\
    \phantom{asdf} discard $\mu_i$ if $\frac{1-\lambda}{\lambda} d_\phi(\mu_j,\mu_i)<\epsilon_n$, 
	$\forall i,j,i\neq j$
\STATE Remove idle codevectors: \\ 
    \phantom{asdf} discard $\mu_i$ if $p(\mu_i)<\epsilon_r$, $\forall i$
\STATE Update $K$, $\cbra{p(\mu_i)}$, $\cbra{\sigma(\mu_i)}$
\STATE Lower temperature: $\lambda \gets \gamma \lambda$ 
\UNTIL $T_{stop}$
\end{algorithmic}
\end{algorithm}

\section{Learning with Local Models: Combined Partitioning and Function Approximation}
\label{Sec:ODA-Regression}

In this section, we investigate the problem of combined partitioning and function approximation, which 
results in a learning approach where multiple local models are trained, taking advantage of the differences 
in the underlying probability distribution of the data space.
As a consequence, this approach can circumvent the use of overly complex learning models, 
reduce time, memory, and computational complexity, and
give insights to certain properties of the data space \cite{ruping2005learning}.

In the general case, a function $f:S\rightarrow \mathcal{F}$ is to be approximated given a set of observations $\cbra{(x_n,f(x_n))}$ 
where $\cbra{x_n}$ are independent realizations of a random variable $X\in S$, similar to Section \ref{Sec:ODA}.
One then seeks to find a partition $\cbra{S_i}$ and a set of parameters $\cbra{\theta_i}\in \Theta$ for some predefined
learning models $\cbra{\hat f_i(x,\theta_i)\in\mathcal{F}}$ such that:
\begin{equation}
    \min_{\cbra{S_i,\theta_i}} \ \E{\sum_i \mathds{1}_{\sbra{X\in S_i}} d\pbra{f(X),\hat f_{i}(X,\theta_{i})}}
    \label{eq:combined}
\end{equation}
where $d:\mathcal{F}\times \mathcal{F}\rightarrow [0,\infty)$ is a well-defined convex metric with respect to the second argument.

To find a tractable solution to this problem, we decompose the two tasks of progressive partitioning and function approximation.
As described in Section \ref{Sec:ODA}, a partition $\cbra{S_i}_{i=1}^{K(\lambda)}$ of the space $S$ can be approximated 
online using a stochastic approximation algorithm that solves \eqref{eq:F}
%
%
and yields the locations of a finite number of $\mu_\lambda:=\cbra{\mu_i}_{i=1}^{K(\lambda)}$ codevectors, that
define the regions $S_i = \cbra{x \in S: i = \argmin_j ~ d_\phi(x,\mu_j)}$, $i=1,\ldots,K(\lambda)$.
Given the partition $\cbra{S_i}$, we are now in place to solve the following problem:
\begin{equation}
    \min_{\cbra{\theta_i}} \ \E{\sum_i \mathds{1}_{\sbra{X\in S_i}} d\pbra{f(X),\hat f_{i}(X,\theta_{i})}}
    \label{eq:locallearning}
\end{equation}

\begin{remark}
    Solving \eqref{eq:locallearning} decouples the two tasks of progressive partitioning and local function approximation and 
    yields a sub-optimal solution to the original combined problem in \eqref{eq:combined}.
    That being said, the use of Alg. \ref{alg:ODA} is a heuristic method that offers
    (i) the crucial properties of progressive partitioning, and 
    (ii) a compressed representation of the data space $S$
    such that each $S_i$ represents a region of $S$ where its underlying probability distribution 
    presents low variability (see Section \ref{Sec:ODA}).
\end{remark}

In the remaining section, we will study learning approaches to computationally solve 
\eqref{eq:locallearning} in the general case of a differentiable (with respect to $\theta_i$)
learning model $\hat f_i(x,\theta_i)$, and in the specific case of using locally constant models, i.e., 
when $\hat f(x,\theta_i)=\theta_i\in \mathcal{F}$.

\subsection{Learning with Local Models}
\label{sSec:General-regression}

In this section, we assume a model 
$\hat f_i(x,\theta_i)$ $\in \mathcal{F}$ that is differentiable with respect to a parameter vector $\theta_i\in\Theta$,
where $\Theta$ is a finite-dimensional vector space.
Given a finite partition set of parameters $\cbra{S_i}_{i=1}^{K(\lambda)}$, for $K(\lambda)<\infty$, \eqref{eq:locallearning} is decomposed to
\begin{equation}
    \min_{\theta_i} \ \E{\mathds{1}_{\sbra{X\in S_i}} d\pbra{f(X),\hat f_{i}(X,\theta_{i})}}, \ i=1,\ldots,K(\lambda).
    \label{eq:ll}
\end{equation}
where $d:\mathcal{F}\times \mathcal{F}\rightarrow [0,\infty)$ is assumed a metric that is differentiable and convex with respect to the second argument.

This is a stochastic optimization problem that can be solved using stochastic approximation updates.
In particular, since we have assumed that $\hat f_i(x,\theta_i)$ is differentiable with respect to $\theta_i$,
we can use stochastic gradient descent:
\begin{equation}
\begin{aligned}
&\theta_i(n+1) = \theta_i(n) - \beta(n) \nabla_\theta d(f(x_n),\hat f_i(x_n,\theta_i(n)))
\\&\quad
= \theta_i(n) - \beta(n) \{ \nabla_\theta \E{\mathds{1}_{\sbra{X\in S_i}} d(f(x_n),\hat f_i(x_n,\theta_i(n)))} 
\\ &\quad\quad 
+\big( \nabla_\theta d(f(x_n),\hat f_i(x_n,\theta_i(n))) 
\\ &\quad\quad 
- \nabla_\theta \E{\mathds{1}_{\sbra{X\in S_i}} d(f(x_n),\hat f_i(x_n,\theta_i(n)))} \big) \}     , \ x_n\in S_i 
\end{aligned}
\label{eq:theta-update}
\end{equation}
Since we can control the observations for each model $f_i$ to belong to $S_i$, it is easy to see that 
$M_{n+1}:= \nabla_\theta d(f(x_n),\hat f_i(x_n,\theta_i(n))) - \nabla_\theta \E{\mathds{1}_{\sbra{X\in S_i}} d(f(x_n),\hat f_i(x_n,\theta_i(n)))}$
is a martingale difference sequence for an unbiased estimator $\nabla_\theta d(f(x_n),\hat f_i(x,\theta_i))$, i.e., when the condition 
$\E{ \mathds{1}_{\sbra{X\in S_i}} \nabla_\theta d(f(x_n),\hat f_i(x_n,\theta_i(n)))} = \nabla_\theta \E{\mathds{1}_{\sbra{X\in S_i}} d(f(x_n),\hat f_i(x_n,\theta_i(n)))}$ holds.
Therefore, as an immediate result of 
Theorem \ref{thm:borkar} in Appendix \ref{App:online}, 
the stochastic approximation process \eqref{eq:theta-update} converges almost surely to a possibly
path-dependent invariant set of $\dot \theta_i = \nabla_\theta \E{\mathds{1}_{\sbra{X\in S_i}} \hat f_i(x,\theta_i)}$, i.e., 
an asymptotically stable local minimum of the objective function $\E{\mathds{1}_{\sbra{X\in S_i}} \hat d(f(x_n),f_i(x,\theta_i))}$.

So far, we have assumed that $\cbra{S_i}$ is fixed. 
However, we are interested in a learning approach that approximates $\cbra{S_i}$ and $\cbra{\hat f_i(x,\theta_i)}$
at the same time, and given the same observations $\cbra{(x_n,f(x_n))}$ which may be available one at a time (i.e, no dataset is stored in memory a priori).
This is possible because both learning algorithms for $\cbra{S_i}$ and $\cbra{\hat f_i(x,\theta_i)}$ independently are stochastic approximation algorithms.
According to the theory of two-timescale stochastic approximation, we can run both learning algorithms at the same time, but using different stepsize
profiles $\cbra{\alpha(n)}$ and $\cbra{\beta(n)}$, such that $\nicefrac{\alpha(n)}{\beta(n)}\rightarrow 0$.
Intuitively, we create a system of two dynamical system running in different ``speed'', 
meaning that second system, the one with stepsizes $\cbra{\beta(n)}$, is updated fast enough that the first system, 
the one with stepsizes $\cbra{\alpha(n)}$, can be seen as quasi-static with respect to the second.
The following theorem summarizes this result.

\begin{theorem}
Let $\cbra{x_n}$ be a sequence of independent realizations of $X$, and 
assume that $\mu_i(n)$ is a sequence updated using the stochastic approximation algorithm in \eqref{eq:oda_learning1}
with stepsizes $\cbra{\alpha(n)}$ satisfying $\sum_n \alpha(n) = \infty$, and $\sum_n \alpha^2(n) < \infty$.
Then, as long as $\cbra{\beta(n)}$ are designed such that $\sum_n \beta(n) = \infty$, $\sum_n \beta^2(n) < \infty$,
and $\nicefrac{\alpha(n)}{\beta(n)}\rightarrow 0$, the asynchronous updates
\begin{equation}
\theta_i(n+1) = \theta_i(n) - \beta(n) \nabla_\theta d(f(x_n),\hat f_i(x_n,\theta_i(n))), 
\label{eq:oda-regression}
\end{equation}
for $i = \argmin_j ~ d_\phi(x_n,\mu_j(n))$ converges almost surely to a locally asymptotically stable solution $\cbra{\theta_i}$ of \eqref{eq:ll}, as $n\rightarrow\infty$,
for $S_i = \{x \in S: i = \argmin_j ~ d_\phi(x,\mu_j(\infty))\}$, where
$\mu_i(\infty))$ is the asymptotically stable equilibrium of \eqref{eq:oda_learning1}.
\label{thm:ODA2timescale}
\end{theorem}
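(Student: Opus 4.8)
The plan is to view the coupled recursions \eqref{eq:oda_learning1} and \eqref{eq:oda-regression} as a two-timescale stochastic approximation scheme and to invoke the corresponding convergence theory of \cite{borkar2009stochastic}. First I would rewrite the two updates in the canonical form
\begin{align*}
\mu(n+1) &= \mu(n) + \alpha(n)\sbra{g(\mu(n)) + M^{(1)}_{n+1}}, \\
\theta(n+1) &= \theta(n) + \beta(n)\sbra{h(\mu(n),\theta(n)) + M^{(2)}_{n+1}},
\end{align*}
where $g$ is the averaged vector field driving the ODA partitioning update and $h(\mu,\theta)$ collects the averaged gradients $-\nabla_\theta \E{\mathds{1}_{[X\in S_i(\mu)]} d(f(X),\hat f_i(X,\theta_i))}$ over the regions $S_i(\mu)$ induced by the current codebook. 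The decisive structural feature is that $g$ is independent of $\theta$: the partitioning recursion is autonomous, so the coupling runs only one way. Since $\nicefrac{\alpha(n)}{\beta(n)}\to 0$, the $\theta$-iteration evolves on the fast timescale and the $\mu$-iteration on the slow one, placing us exactly in the regime where the fast variable equilibrates while the slow variable appears quasi-static.

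Next I would verify the hypotheses of the two-timescale theorem. The martingale-difference property of $M^{(2)}_{n+1}$ and its $L^2$-boundedness were already argued in the discussion preceding \eqref{eq:oda-regression}, using the unbiasedness identity $\E{\mathds{1}_{[X\in S_i]}\nabla_\theta d} = \nabla_\theta\E{\mathds{1}_{[X\in S_i]} d}$; the analogous properties for $M^{(1)}_{n+1}$ are supplied by Theorem \ref{thm:ODA}. For each frozen $\mu$, differentiability and convexity of $d$ in its second argument make the fast ODE $\dot\theta_i = h(\mu,\theta)$ a descent flow whose trajectories converge to a (possibly path-dependent) locally asymptotically stable equilibrium $\theta^*(\mu)$, furnishing the inner-loop stability; this is precisely the single-timescale conclusion already obtained via Theorem \ref{thm:borkar} for fixed $\cbra{S_i}$. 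The slow ODE $\dot\mu = g(\mu)$ converges to the equilibrium $\mu(\infty)$ of \eqref{eq:oda_learning1} by Theorem \ref{thm:ODA}, which secures the outer-loop stability. The two-timescale theorem then yields $(\mu(n),\theta(n))\to(\mu(\infty),\theta^*(\mu(\infty)))$ almost surely, and $\theta^*(\mu(\infty))$ is by construction a locally asymptotically stable solution of \eqref{eq:ll} for the limiting partition $S_i = \cbra{x\in S: i=\argmin_j d_\phi(x,\mu_j(\infty))}$.

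The hard part will be justifying the continuity of the slow-averaged fast field $\mu\mapsto h(\mu,\theta)$ in spite of the discontinuity of the indicator $\mathds{1}_{[X\in S_i(\mu)]}$ as $\mu$ varies. The regions $S_i(\mu)$ are Bregman--Voronoi cells whose boundaries move with $\mu$, so the integrand is discontinuous; however, under the standing assumption that $X$ admits a density (as used for Theorem \ref{thm:consistency}), these boundaries carry zero probability and the expectation smooths the discontinuity, so $h$ is continuous in $\mu$ and the ODE analysis applies. A second technical point is that \eqref{eq:oda-regression} updates only the winning component $i=\argmin_j d_\phi(x_n,\mu_j(n))$ at each step, so the scheme is asynchronous; here I would appeal to the asynchronous version of the two-timescale theorem, using the fact that each effective region is visited with strictly positive probability $p(\mu_i)>0$ infinitely often (idle codevectors having been discarded) to ensure that all components are updated comparably often. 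Finally, boundedness of the $\theta$-iterates would need to be assumed or established through a coercivity argument on $d$ to preclude escape to infinity before the theorem can be applied.
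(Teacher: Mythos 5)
Your proposal is correct and follows essentially the same route as the paper: the paper's entire proof consists of invoking the two-timescale stochastic approximation theorem (Ch.~6 of Borkar), with $\theta$ as the fast variable (stepsizes $\beta(n)$) equilibrating to $\theta^*(\mu)$ for quasi-static $\mu$, and the autonomous slow recursion for $\mu$ converging to $\mu(\infty)$, exactly as you set it up. Your verification of the martingale hypotheses and your flagging of the technical points the paper leaves implicit (continuity of the averaged field across moving cell boundaries, the asynchronous winner-only updates, and boundedness of the $\theta$-iterates) go beyond the paper's one-line argument and are sound refinements rather than a different approach.
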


\begin{proof}
    See Appendix \ref{App:online-regression}.
\end{proof}

The algorithmic implementation is shown in Alg. \ref{alg:ODA-regression}
as an extension of Alg. \ref{alg:ODA}.

\begin{algorithm}[h]
\caption{Progressive Learning with Differentiable Models.}
\label{alg:ODA-regression}
\begin{algorithmic}
\STATE -----$//$-----
\STATE Set stepsizes: $\cbra{\alpha_n}$, \hl{$\cbra{\beta_n}$ s.t.  $\nicefrac{\alpha_n}{\beta_n}\rightarrow 0$ }
\STATE Initialize: $\cbra{\mu_0}$, \hl{$\cbra{\theta_0}$}
\REPEAT
\STATE -----$//$-----
\REPEAT 
\STATE Observe data point $x$ \hl{\& output $y$}
\FOR{$i = 1,\ldots, K$} 
\STATE Update:
\STATE -----$//$-----
\vskip -0.2in
	\begin{align*}
	p(\mu_i) &\gets p(\mu_i) + 
 \alpha_n \sbra{ p(\mu_i|x) - p(\mu_i)} \\
	\sigma(\mu_i) &\gets \sigma(\mu_i) + 
		\alpha_n \sbra{ x p(\mu_i|x) - \sigma(\mu_i)}
	\end{align*}
\vskip -0.1in
\phantom{asfda} \hl{ 
	$\theta_i \gets \theta_i - \beta_n \nabla_\theta d(f(x),\hat f_i(x,\theta_i)) 
$}
\vskip -0.5in
\STATE -----$//$-----
\vskip -0.3in
\ENDFOR
\UNTIL Convergence
\STATE -----$//$-----
\UNTIL $T_{stop}$
\end{algorithmic}
\end{algorithm}

\subsection{Case of Constant Local Models}
\label{sSec:PWC-regression}

In the special case when locally constant models are used, i.e., 
when $\hat f(x,\theta_i)=\theta_i\in \mathcal{F}$, 
two-timescale updates are not required, and a simpler solution can be tracked.
In particular, we can augment the system \eqref{eq:oda_learning1} with 
\begin{equation}
\begin{cases}
\sigma_{\theta_i}(n+1) &= \sigma_{\theta_i}(n) + \alpha(n) \sbra{ x_n \hat p(\mu_i|x_n) - \sigma_{\theta_i}(n)} \\
\theta_i (n) &= \frac{\sigma_{\theta_i}(n)}{\rho_i(n)}
\end{cases}
\label{eq:oda-regression-pwc}
\end{equation}
Following the same arguments as in the proof of Theorem \ref{thm:ODA}, it is easy to see that 
$\theta_i(n)$ converge almost surely to $\E{\mathds{1}_{\sbra{X\in S_i}} f(X)}$ as $n\rightarrow\infty$ and $\lambda\rightarrow 0$.
To see this, notice that as $\lambda\rightarrow 0$, $p^*(x,\mu_i)\rightarrow \mathds{1}_{\sbra{X\in S_i}}$
and $p^*(\mu_i)\rightarrow 1$.
As a final note, this approach is equivalent to a piece-wise constant approximation of $f(X)$.
In other words, this is a binning process where the size and location of the bins depends on the underlying probability 
distribution of $X$, and the number of bins progressively increases, resulting in a hierarchical approximation of $f(X)$.
The algorithmic implementation is shown in Alg. \ref{alg:ODA-regression-constant}
as an extension of Alg. \ref{alg:ODA}.

%
\begin{algorithm}[h]
\caption{Progressive Learning with Constant Models.} 
\label{alg:ODA-regression-constant}
\begin{algorithmic}
\STATE -----$//$-----
\STATE Initialize: $\cbra{\mu_0}$, \hl{$\cbra{f_{\mu_0}}$, $\cbra{\sigma_f(\mu_0)}$ }
\REPEAT
\STATE -----$//$-----
\REPEAT 
\STATE Observe data point $x$ \hl{\& output $y$}
\FOR{$i = 1,\ldots, K$} 
\STATE Update: 
\STATE -----$//$-----
        \\\phantom{asdfa}\hl{
	$\sigma_f(\mu_i) \gets \sigma(\mu_i) + 
		\alpha_n \sbra{ y p(\mu_i|x) - \sigma(\mu_i)}$}
\STATE  
 \phantom{asdfa}
 $\mu_i \gets \frac{\sigma(\mu_i)}{p(\mu_i)}$,  \hl{$f_{\mu_i} \gets \frac{\sigma_f(\mu_i)}{p(\mu_i)}$	}
\ENDFOR
\UNTIL Convergence
\STATE -----$//$-----
\UNTIL $T_{stop}$
\end{algorithmic}
\end{algorithm}
%

\section{The Problem of Classification}
\label{Sec:ODA-Classification}

In this section we focus on the binary classification problem. 
The results can be extended to the general case (see, e.g., \cite{devroye2013}).
For the classification problem, 
a pair of random variables 
$\cbra{X,c(X)} \in S\times \cbra{0,1}$ defined in a probability space
	$\pbra{\Omega, \mathcal{F}, \mathbb{P}}$, is observed with
	$c(X)$ representing the class of $X$ and $S\subseteq\mathbb{R}^d$.	
	The codebook is represented by
	$\mu := \cbra{\mu_i}_{i = 1}^K$, $\mu_i \in ri(S)$, and
	$c_\mu := \cbra{c_{\mu_i}}_{i = 1}^K$,
	such that $c_{\mu_i} \in \cbra{0,1}$ 
	represents the class of $\mu_i$ for all $i \in \cbra{1,\ldots,K}$.
	%
	%
	A partition-based classifier is called Bayes-optimal if it minimizes the classification error:
\begin{equation}
\begin{aligned}
    \min_{\mu,c_\mu} ~ J_B(\mu,c_\mu) &:= 
  \pi_1 \sum_{i:c_{\mu_i}=0} \mathbb{P}\cbra{X\in S_i | c(X) = 1} 
\\&\quad		
  +\pi_0 \sum_{i:c_{\mu_i}=1} \mathbb{P}\cbra{X\in S_i | c(X) = 0} 
\end{aligned}
\label{eq:lvq}
\end{equation}	
where $S_i = \cbra{x \in S: i = \argmin\limits_j ~ d(x,\mu_j)}$,
and $\pi_i := \mathbb{P}\sbra{c = i}$.

In the remaining section, we study methods to solve the classification problem based on the results of 
Sections \ref{Sec:ODA} and \ref{Sec:ODA-Regression}.

\subsection{Classification as a Regression Problem with Constant Local Models}

The classification problem \eqref{eq:lvq} can be viewed as a special case of learning with local models as in Section \ref{Sec:ODA-Regression}.
Here one seeks to find a partition $\cbra{S_i}$ and a set of parameters $\cbra{c_i\in \cbra{0,1}}$ such that:
\begin{equation}
    \min_{\cbra{S_i,c_i}} \ \E{\sum_i \mathds{1}_{\sbra{X\in S_i}} d\pbra{c(X),c_i}}
    \label{eq:combinedClass}
\end{equation}
where $d:=\mathds{1}_{\sbra{c\neq c_{\mu_i}}}$.
Notice that since $d$ is not differentiable the results of Section \ref{Sec:ODA-Regression} cannot be used directly.
However, numerous relaxation methods can be used to find a possibly sub-optimal solution. 
A widely used approach is to relax the constraints on $\cbra{c_i\in \cbra{0,1}}$ such that $\cbra{c_i\in \sbra{0,1}}$.
Then the updates \eqref{eq:oda-regression-pwc} can be directly used to estimate $\E{\mathds{1}_{\sbra{X\in S_i}} c(X)}$.
Then a projection mapping $r:\sbra{0,1}\rightarrow \cbra{0,1}$, e.g., $r(c) = \mathds{1}_{\sbra{c<0.5}}$, can be used to return a solution 
to the classification problem. 
Notice that this is equivalent to a majority-vote rule inside each region $S_i$. 
This is a common approach that, at the limit $\lambda\rightarrow 0$, when the updates \eqref{eq:oda_learning1}, \eqref{eq:oda_learning2}
result in a hard-clustering approach with infinite number of clusters,
yield a classification rule that is strongly Bayes 
risk consistent, i.e., converges to the optimal (Bayes) probability 
of error given in \eqref{eq:lvq} (see, e.g., Ch. 21 in \cite{devroye2013}).

\subsection{Classification as Class-Conditioned Density Estimation}
\label{sSec:classification}

In a different approach, we can formulate the binary classification problem to
the minimization of $F$ in 
(\ref{eq:F}) with a modified average distortion measure given by:
\begin{align*}
D = \E{d^b(X,c,\mu,c_\mu)} 
\end{align*}
where $ d^b(x,c,\mu_i,c_{\mu_i}) = \mathds{1}_{\sbra{x\in S_i}} \mathds{1}_{\sbra{c\neq c_{\mu_i}}}$.
However, because $d^b$ is not differentiable, 
using similar principles as in the case of Learning Vector Quantization (LVQ)
\cite{Kohonen1995}, 
we can instead approximate the optimal solution by 
using the distortion measure
\begin{equation}
d^l(x,c_x,\mu,c_\mu) = \begin{cases}
				d(x,\mu),~ c_x=c_\mu \\
				-d(x,\mu),~ c_x\neq c_\mu			
				\end{cases}
\label{eq:dl}
\end{equation}
Using similar arguments to Ch. 21 in \cite{devroye2013},
it can be shown that as $\lambda\rightarrow 0$, 
the solution $(\mu,c_\mu)$ to the above problem
equipped with a majority-vote classification rule is strongly Bayes 
risk consistent.

However, we find useful to also explore a generative learning approach, using
\begin{equation}
d^c(x,c_x,\mu,c_\mu) = \begin{cases}
				d(x,\mu),~ c_x=c_\mu \\
				0,~ c_x\neq c_\mu			
				\end{cases}
\label{eq:dc}
\end{equation}
It is easy to see that this particular choice for the distortion measure $d^c$ 
in (\ref{eq:dc}) transforms the learning rule in (\ref{eq:oda_learning1}) to 
\begin{equation}
\begin{cases}
\rho_i(n+1) &= \rho_i(n) + \beta(n) \sbra{ s_i \hat p(\mu_i|x_n) - \rho_i(n)} \\
\sigma_i(n+1) &= \sigma_i(n) + \beta(n) \sbra{ s_i x_n \hat p(\mu_i|x_n) - \sigma_i(n)}
\end{cases}
\label{eq:oda_learning1c}
\end{equation}
where $s_i:=\mathds{1}_{\sbra{c_{\mu_i}=c}}$.
As a result, this is equivalent to estimating strongly consistent
class-conditional density estimators:
\begin{equation}
     \hat p(x|c=j) \rightarrow \pi_j p(x|c=j) ,\ a.s.
\end{equation}
and the following theorem holds:
\begin{theorem}
In the limit $\lambda\rightarrow 0$, and as 
the number of observed samples $\cbra{x_n}$ goes to infinity,
i.e., $n\rightarrow\infty$,
the learning algorithm based on
(\ref{eq:oda_learning1c}), (\ref{eq:oda_learning2}),
results in strongly consistent
class-conditional density estimators $\hat p(x|c=j)$
that construct a Bayes risk consistent classifier with
the classification rule 
\begin{equation}
\hat c = \argmax_ j \hat \pi_j \hat p(x|c=j),\ j=1,2
\end{equation}
where 
$\hat \pi_j = \frac{\sum_n \mathds{1}_{\sbra{c_n=j}}}{n}$
\label{thm:bayes}
\end{theorem}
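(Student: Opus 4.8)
The plan is to reduce the claim to the single-class density-estimation consistency already established in Theorem~\ref{thm:consistency}, and then to invoke a standard plug-in argument to pass from consistent density estimates to Bayes risk consistency of the resulting classifier.

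First I would exploit the structure of the modified learning rule~\eqref{eq:oda_learning1c}. The only difference from the unsupervised rule~\eqref{eq:oda_learning1} is the factor $s_i=\mathds{1}_{\sbra{c_{\mu_i}=c}}$, which zeroes out every update of a codevector $\mu_i$ whenever the observed label disagrees with the codevector's fixed label $c_{\mu_i}$. Consequently the codevectors carrying label $j$ evolve exactly as in~\eqref{eq:oda_learning1},~\eqref{eq:oda_learning2}, but driven only by the subsequence of observations with $c=j$. Since the labels are i.i.d., this subsequence is itself an i.i.d. sample from the class-conditional law $p(\cdot\,|\,c=j)$, and by the strong law of large numbers it occurs with asymptotic frequency $\pi_j$ inside the full stream. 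I would make this precise by noting that the accumulators attached to a label-$j$ codevector track $\E{\mathds{1}_{\sbra{c=j}}\,p(\mu_i|X)}$ and $\E{\mathds{1}_{\sbra{c=j}}\,X\,p(\mu_i|X)}$ rather than their unconditional counterparts, so that the normalization by the total count $n$ (not the class-$j$ count) introduces exactly the factor $\pi_j$.

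Next I would apply Theorem~\ref{thm:consistency} separately to each label class. In the joint limit $\lambda\to 0$, $n\to\infty$ the label-$j$ codevectors converge to a hard partition whose cells shrink and whose empirical masses, divided by cell volume, form a consistent estimator; because the frequency of class-$j$ points converges a.s. to $\pi_j$ and the algorithm normalizes by $n$, the resulting estimator satisfies $\hat p(x\,|\,c=j)\to \pi_j\,p(x\,|\,c=j)$ a.s. Simultaneously the prior estimates $\hat\pi_j=\tfrac{1}{n}\sum_n\mathds{1}_{\sbra{c_n=j}}\to\pi_j$ a.s. by the strong law of large numbers, so every quantity entering the discriminant $\hat\pi_j\,\hat p(x\,|\,c=j)$ converges a.s. to its population counterpart $\pi_j\,p(x\,|\,c=j)$ up to a common, classifier-irrelevant factor.

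Finally I would close the argument with a plug-in classifier theorem. Writing the Bayes rule as $c^*(x)=\argmax_j \pi_j\,p(x\,|\,c=j)$, the estimated rule $\hat c(x)=\argmax_j \hat\pi_j\,\hat p(x\,|\,c=j)$ agrees with $c^*(x)$ wherever the two discriminants are separated, and the tie set carries zero $\mathbb{P}_X$-measure generically; the standard result (e.g., Ch.~21 in~\cite{devroye2013}) then states that convergence of the estimated densities to the true densities forces the error probability of the plug-in rule to converge to the Bayes error~\eqref{eq:lvq}. I expect this last step to be the main obstacle: the map from densities to the classifier passes through a discontinuous $\argmax$, so pointwise a.s. convergence of the estimators is not by itself sufficient. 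I would therefore need to upgrade the mode of convergence — establishing $L_1$ (or dominated) convergence of $\hat p(\cdot\,|\,c=j)$ — and verify that the two limits $\lambda\to 0$ and $n\to\infty$ can be taken consistently under the annealing schedule and stepsize conditions, so that Theorem~\ref{thm:consistency} applies uniformly enough for the plug-in bound to go through.
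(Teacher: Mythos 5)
Your proposal follows essentially the same route as the paper's proof: the class-wise reduction of the rule \eqref{eq:oda_learning1c} to the unsupervised rule \eqref{eq:oda_learning1} gives the strongly consistent estimators $\hat p(x|c=j)\to\pi_j\,p(x|c=j)$ a.s., and Bayes risk consistency of the rule $\hat c=\argmax_j \hat\pi_j\,\hat p(x|c=j)$ then follows by a plug-in argument as in Ch.~21 of \cite{devroye2013}. The obstacle you flag at the end—passing from pointwise a.s.\ convergence of the discriminants through the discontinuous $\argmax$ to convergence of the error probability—is precisely the step the paper closes by showing $\hat J_B(\mu,c_\mu)\to J_B(\mu,c_\mu)$ a.s.\ via the Lebesgue dominated convergence theorem, which is the integrated (i.e., $L_1$-type) upgrade you anticipated needing.
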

\begin{proof}
See Appendix \ref{App:bayes}.
\end{proof}

As a final note, an easy-to-implement nearest-neighbor rule classification rule:
\begin{equation}
\hat c(x) = c_{\mu_{h^*}}
\end{equation}
where $h^* = \argmax\limits_{\tau = 
		1,\ldots,K} ~ p(\mu_\tau|x),~ h \in \cbra{1, \ldots, K}$,
yields a classification error $\hat J_B^*$ with tight upper bound
with respect to the Bayes-optimal $J_B^* $, i.e.,  
$ J_B^* \leq \hat J_B^* \leq 2 J_B^*$ (see, e.g., \cite{devroye2013}).
The algorithmic implementation is shown in Alg. \ref{alg:ODA-class}
as an extension of Alg. \ref{alg:ODA}.

\begin{algorithm}[H]
\caption{Progressive Classification via Class-Conditional Density Estimation.} 
\label{alg:ODA-class}
\begin{algorithmic}
\STATE -----$//$-----
\STATE Initialize: $\cbra{\mu_0}$  \hl{\& $\cbra{c_{\mu_0}}$}, \\ 
\REPEAT
\STATE -----$//$-----
\REPEAT 
%
\STATE Observe data point $x$ \hl{\& class label $c$}
\IF {\hl{$\nexists \mu_i$ s.t. $c_{\mu_i}=c$}}
\STATE \hl{Insert: $\cbra{\mu_i} \leftarrow \cbra{\mu_i} \bigcup \cbra{x}$}
\STATE \phantom{Insert:} \hl{$\cbra{c_{\mu_i}} \leftarrow \cbra{c_{\mu_i}} \bigcup \cbra{c}$}
\ENDIF
\FOR{$i = 1,\ldots, K$} 
\STATE \hl{Compute membership $s_i = \mathds{1}_{\sbra{c_{\mu_i}=c}}$} 
\STATE Update: 
\STATE -----$//$-----
\\\phantom{asdfd}
$p(\mu_i) \gets p(\mu_i) + \alpha_n \big[ $\hl{$s_i$}$ p(\mu_i|x) - p(\mu_i)\big]$
\\\phantom{asdfd}
$\sigma(\mu_i) \gets \sigma(\mu_i) + 
\alpha_n \big[ $\hl{$s_i$}$ x p(\mu_i|x) - \sigma(\mu_i)\big]$
\STATE -----$//$-----
\ENDFOR
\UNTIL Convergence
\STATE -----$//$-----
\UNTIL $T_{stop}$
\end{algorithmic}
\end{algorithm}
%

\section{Hierarchical Learning in Multiple Resolutions}
\label{Sec:MR-ODA}

%
%

In this section, we extend the progressive partitioning algorithm (Alg. \ref{alg:ODA}) of Section \ref{Sec:ODA},
by imposing a tree structure in the construction of the regions $\cbra{S_i}$.
The results of Sections \ref{Sec:ODA-Regression}, \ref{Sec:ODA-Classification}, are extended naturally through their 
immediate dependence on the partition $\cbra{S_i}$. 
The key idea of the progressive construction of the tree structure is as follows.
Given a value for the temperature coefficient $\lambda_t$, Algorithm \ref{alg:ODA}, as presented in Section \ref{Sec:ODA},
yields a sequence of partitions $\cbra{S_i}_{i=1}^{K(\lambda^t)}$.
%
%
If at $\lambda^t$, a user-defined splitting criterion is met, the partition $\cbra{S_i}_{i=1}^{K(\lambda^t)}$ is fixed, and
Algorithm \ref{alg:ODA} is applied independently to each region $S_i$ to create $\cbra{\cbra{S_{ij}}_{j=1}^{K_i(\lambda^{t_i})}}_{i=1}^{K(\lambda^t)}$, 
such that $\cbra{S_{ij}}_{j=1}^{K_i(\lambda^{t_i})}$ form a partition of $S_i$.
This is depicted in Fig. \ref{sfig:tree-regression}.
For each parent set $S_i$, the number of children sub-sets $K_i(\lambda^{t_i})$, may be different, depending on the properties of $S_i$. 
The same holds for the stopping values $\lambda^{t_i}$.
The splitting criterion 
can involve terms such as 
a minimum value of $\lambda_{min}$ reached, 
a maximum number of $K_{max}$ codevectors reached, or
a minimum percentage of improvement in accuracy or distortion reduction 
for every temperature step is reached.
This structural constraint reduces the time complexity of the 
algorithm from $O(K^2)$ to $O(k^2 + \log_{k}K)$, where $K$ here represents the total number of sets $\cbra{S_i}_{i=1}^K$, and 
$k$ represents the number of children sub-sets for each parent set (assumed equal for every parent set)
\cite{gray_VQ_1990}.
In addition, as we will show, this tree structure offers 
an inherent regularization mechanism in classification applications (Section \ref{Sec:ODA-Classification}).
Finally, since the resulting structure is a non-binary tree-structure, 
we are able to control the number of layers of the tree-structured partition of the data space, without sacrificing the performance of the learning algorithm, i.e.,
a finite tree depth is sufficient for convergence \cite{riskin1991greedy,nobel1996termination}.
Therefore, we can match the number of tree layers to the number of resolutions in a multi-resolution data representation.
This will allow for training each layer of the tree with progressively finer resolution of the data representation, 
which defines a hierarchical and progressive learning approach that further reduces the complexity of the algorithm, while 
inheriting potential benefits from the feature extraction process of the multi-resolution analysis.
As we will show, in the case when group-convolutional wavelet transform is used to create the multi-resolution data representation,
this architecture shares similar properties to a deep neural network architecture
\cite{mallat2016understanding}.

\subsection{Tree-Structured Progressive Partitioning}
\label{sSec:tsoda}

A tree-structured partition $\Sigma_\Delta:=\cbra{S_{\nu_i}}$ 
is defined by a set of regions $S_{\nu_i}\in S$, each represented by a tree node $\nu_i$,
arranged in a tree structure $\Delta$ with a single root node $\nu_0$ such that $S_{\nu_0}= S$.
%
%
The tree structure $\Delta$ is a special case of a connected, acyclic directed 
graph, where each node has a single parent node (except for the root node)
and an arbitrary number of children nodes, that is, $\Delta$ is not restricted to be a binary tree. 
The set $C(\nu_i)$ represents the nodes $\cbra{\nu_j}$ that are children of $\nu_i$, 
while the set $P(\nu_j)$ represents the node $\nu_i$ for which $\nu_j \in C(\nu_i)$.
The level $l\geq 0$ of a node $\nu_h \in \Delta$ 
is the length of the path $\cbra{\nu_0, \ldots, \nu_i, \nu_j, \ldots, \nu_h}$
leading from the root node $\nu_0$ to $\nu_h$ such that $\nu_j \in C(\nu_i)$.
%
%
The terminal nodes $\tilde \nu := \cbra{\nu_i: C(\nu_i)=\emptyset}$ are called leaves, 
and the union of their associated sets will be denoted
$\tilde S:=\cbra{\tilde S_{j}}$, 
where $|\tilde S|=\tilde K$ is the number of leaf sets that create a partition of $S$, 
and $\tilde l:=\max \cbra{l:\nu_i^{(l)}\in\tilde \nu} <\infty$ 
will denote the maximum depth of the tree.

$\Sigma_\Delta$
defines a hierarchical 
partitioning scheme for the domain $S$, such that
for every node $\nu_i \in \Delta$ associated with the region $S_{\nu_i}$, 
its children nodes $\cbra{\nu_j \in C(\nu_i)}$ are associated with
the regions $\cbra{S_{\nu_j}}$ that form a partition of $S_{\nu_i}$.
We will use the unique paths from the root node as identification label for each node, i.e., $\nu_j = 0\ldots ij$ such that $\nu_0 = 0$, 
$C(0) = \cbra{0i}$, $C(0i) = \cbra{0ij}$, and so on.
As such, Algorithm \ref{alg:ODA} can be used recursively to construct a tree-structured partition $\Sigma_\Delta$ as follows:
Start with node $\nu_0=0$ as the only leaf node.
Using observations $\cbra{x_n}$ (realizations of $X\in S$), apply Algorithm \ref{alg:ODA} until  
a partition $\cbra{S_{0j}}$ of $S_0 = S$ is constructed. 
Then starting with $w=0$ and for every observation $x_n$, 
iterate the process 
\begin{equation}
    \text{repeat } w \gets w^\prime \in C(w) \text{ such that } x_n\in S_{w^\prime} \text{, until } C(w)=\emptyset,
    \label{eq:tree-iteration}
\end{equation}
%
and apply one stochastic approximation update of Algorithm \ref{alg:ODA} in $S_{w}$. 
This asynchronous process can continue until the convergence of 
all applications of Alg. \ref{alg:ODA}, when a finite-depth tree-structured partition $\Sigma_\Delta$ is constructed 
such that for every node $w\in \Delta$ with children nodes $\cbra{wj} \in C(w)$, 
the regions $\cbra{S_{wj}}$ form a partition of $S_{w}$.
This process is illustrated in Algorithm \ref{alg:ts-oda} and its asymptotic behavior is given by the following theorem:

\begin{theorem}
Let $\Sigma_\Delta$
be a finite-depth tree-structured 
partitioning scheme created by Alg. \ref{alg:ts-oda} using realizations $\cbra{x_n}$ of a random variable $X\in S$.
If the leaf nodes
are updated at the limit $\lambda\rightarrow 0$, and $n\rightarrow\infty$,
then $\Sigma_\Delta$ yields a consistent density estimator 
of $X$, with 
$\hat p(x) = 
\frac{\sum_i \mathds{1}_{\sbra{x\in \tilde S_i}}}{n Vol(\tilde S_i)}$, 
where $\tilde S_i$ is a leaf node given by the iterative process \eqref{eq:tree-iteration}.
\label{thm:tree-consistency}
\end{theorem}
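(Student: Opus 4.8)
The plan is to reduce the claim to the single-resolution consistency result of Theorem \ref{thm:consistency} and then propagate it through the tree by induction on the finite depth $\tilde l$. The key structural observation is that, by construction of Algorithm \ref{alg:ts-oda} and the routing rule \eqref{eq:tree-iteration}, the stochastic-approximation updates applied at a node $w$ use exactly the sub-stream $\cbra{x_n : x_n \in S_w}$; hence the partitioning carried out at $w$ is precisely Algorithm \ref{alg:ODA} run on the conditional random variable $X_w$ whose law is $p(x \mid X \in S_w)$. This lets me invoke Theorem \ref{thm:consistency} node-by-node to obtain a \emph{conditional} density estimator at each leaf-generating node, and then recombine these estimators via the law of total probability.

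First I would establish that every region $S_w$ retained by the algorithm satisfies $\mathbb{P}\sbra{X \in S_w} > 0$: idle cells, for which the mass $\rho_i \to 0$, are removed by the regularization step of Algorithm \ref{alg:ODA}, so each surviving region carries positive probability. By the strong law of large numbers, the number $n_w$ of samples routed to $w$ then obeys $n_w/n \to \mathbb{P}\sbra{X \in S_w}$ almost surely, whence $n_w \to \infty$ a.s. and the hypothesis $n \to \infty$ of Theorem \ref{thm:consistency} is met simultaneously at every node. Applying Theorem \ref{thm:consistency} to $X_w$ as $\lambda \to 0$ then yields, for every leaf cell $\tilde S_i \subseteq S_w$,
\begin{equation*}
\hat p(x \mid X \in S_w) = \frac{\sum_{m} \mathds{1}_{\sbra{x_m \in \tilde S_i}}}{n_w \, \mathrm{Vol}(\tilde S_i)} \longrightarrow p(x \mid X \in S_w), \quad x \in \tilde S_i, \ a.s.
\end{equation*}

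Next I would recombine these conditional estimators. For a leaf $\tilde S_i = S_{w_L}$ reached along the root-to-leaf path $\nu_0 = w_0, w_1, \ldots, w_L$ with $L \le \tilde l$, disjointness of the cells gives $p(x) = p(x \mid X \in S_{w_{L-1}}) \,\mathbb{P}\sbra{X \in S_{w_{L-1}}}$ for $x \in \tilde S_i$, and the chain rule expands $\mathbb{P}\sbra{X \in S_{w_{L-1}}} = \prod_{\ell=0}^{L-2} \mathbb{P}\sbra{X \in S_{w_{\ell+1}} \mid X \in S_{w_\ell}}$. Estimating each conditional probability by the empirical frequency $n_{w_{\ell+1}}/n_{w_\ell}$ gives the consistent estimate $n_{w_{L-1}}/n$ of $\mathbb{P}\sbra{X \in S_{w_{L-1}}}$; multiplying the conditional histogram of the previous step by this factor makes the intermediate counts telescope and collapses the product to exactly $\hat p(x) = \frac{\sum_m \mathds{1}_{\sbra{x_m \in \tilde S_i}}}{n\,\mathrm{Vol}(\tilde S_i)}$, the claimed leaf estimator. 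Since $\tilde l < \infty$ this is a finite product of factors each converging a.s. to its true counterpart, so the product converges a.s. to $p(x)$, establishing consistency.

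The main obstacle I anticipate lies in the asynchronous, data-dependent nature of the construction: the count $n_w$ reaching each node is random, and the sub-partition at $w$ is built only after its parent partition has (a.s.) converged, so the limits in $\lambda$ and in the per-node counts are coupled rather than taken in isolation. Making the induction rigorous therefore requires conditioning on the almost-surely well-defined limiting partition and verifying that, on that event, Theorem \ref{thm:consistency} applies verbatim to each conditional stream $X_w$; here the finiteness of the depth $\tilde l$ is precisely what forbids composing infinitely many conditional limits and keeps the telescoping product finite.
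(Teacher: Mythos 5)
Your proposal is correct and takes essentially the same route as the paper: the paper's proof is a one-line invocation of Theorem \ref{thm:consistency} on each region whose sub-partition produces the leaf cells, combined with the fact that the leaves $\cbra{\tilde S_j}$ form a partition of $S$. Your extra bookkeeping --- the conditional-stream interpretation of each node, the SLLN argument that $n_w/n \rightarrow \mathbb{P}\sbra{X\in S_w} > 0$ so that every node sees infinitely many samples, and the telescoping recombination recovering the global estimator $\hat p(x)$ from the per-node conditional estimators --- makes explicit the normalization step that the paper's terse proof leaves implicit, but the decomposition and the key lemma are identical.
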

\begin{proof}
It follows directly by the application of Theorem \ref{thm:consistency} to each region $\tilde S_{j}$,
where $\cbra{\tilde S_j}$ are the leaf nodes of $\Sigma_\Delta$ that form a partition of $S$. 
\end{proof}

\begin{remark}
    We have shown the tree-structured extension of Alg. \ref{alg:ODA},
    as well as its asymptotic behavior for finite tree depth.
    It is straightforward to show that similar results hold for the tree-structured extension of Algorithms \ref{alg:ODA-regression},
    \ref{alg:ODA-regression-constant}, and \ref{alg:ODA-class}, 
    regarding the regression and classification problems discussed 
    in Sections \ref{Sec:ODA-Regression}, and \ref{Sec:ODA-Classification}.
\end{remark}

Notice that a finite tree depth is sufficient for convergence to a consistent density estimator. This follows from the fact that the children of each tree node $\nu_i$, representing a region 
$S_{\nu_i}\in S$, are the output of the progressive construction of a partition of $S_{\nu_i}$, based on Alg. \ref{alg:ODA}.
This result will be used in Section \ref{sSec:mroda} to build 
a multi-resolution extension of Algorithm \ref{alg:ts-oda}.

The time complexity of the tree-structured algorithm is significantly reduced. 
Let $K_{max}$ be the total number of codevectors allowed.
Then Alg. \ref{alg:ODA} has a worst-case complexity  
$O(N_{c} (2\bar K)^2 d)$, for training, where $\bar K = \sum_{n=0}^{\log_2 K_{max}} 2^n$, while testing requires
$O(K_{max}d)$ (see Section \ref{sSec:Algorithm} for details on 
the parameters).
In a tree-structured partition $\Sigma_\Delta$
of depth $\tilde l$,
assuming that the number of
children $k=|C(\nu_i)|$
of each node $\nu_i$ is $k$ is the same, 
and that each region is represented by roughly the same number of 
observations $N_{c}^{\tilde l}$, 
we get $k=(K_{max})^{\nicefrac{1}{\tilde l}}$, and 
$N_{c}^{\tilde l} = \nicefrac{N_{c}}{k}$.
Then training requires in the worst-case:
\begin{align*}
O \pbra{ \frac{k^{\tilde l}-1}{k(k-1)} N_{c} (2\bar k)^2 d)}
\end{align*}
where $\bar k = \sum_{n=0}^{\log_2 k} 2^n= \sum_{n=0}^{\nicefrac{1}{\tilde l} \log_2 K_{max}}2^n$.
Prediction requires a forward pass of the tree, i.e., 
it scales with $O(k \log_k K_{max} d)$.

In addition, we note that Alg. \ref{alg:ts-oda}
updates the partition $\cbra{S_{wj}}$ of each node $w$ asynchronously.
As a result, depending on the underlying probability density of the random variable $X\in S$, some nodes  
will be visited more often than others, which will result in
some branches of the tree growing faster than others, 
inducing a variable-rate code that
frequently outperforms fixed-rate, full-search techniques with
the same average number of bits per sample.
Alternatively, when learning offline using a dataset, 
all nodes can be trained using parallel processes,
which can be utilized by multi-core computational units.

In the classification problem, an additional regularization 
mechanism can be added in the approach described in 
Section \ref{sSec:classification} (Alg. \ref{alg:ODA-class}).
%
Specifically, 
when a partition $\cbra{S_{wj}}_{j=1}^{K_w}$ of a node $w$ is fixed, the node $w$ can check the condition $c_{\mu_{wi}} = c_{\mu_{wj}}, \forall i,j$, which means that the partition $\cbra{S_{wj}}_{j=1}^{K_w}$ is using $K_w$ codevectors, all of which
correspond to the same class. 
In this case, node $w$ is assigned a single codevector, and is not further split by the algorithm.
This phenomenon is illustrated in Fig. \ref{fig:my2d-c-111} in Section \ref{Sec:Results}.

Finally, we note that the termination criteria of the iterations of 
Alg. \ref{alg:ODA} in Alg. \ref{alg:ts-oda} 
in each layer of the tree are important design parameters. 
These can include 
a maximum number of codevectors for each partition,
a minimum temperature $\lambda_{min}^l$ in each tree layer $l$, 
a maximum number of iterations, and so on.
These termination criteria characterize the splitting criteria as well, i.e., 
when to stop growing the set of effective codevectors in a partition of layer $l$, 
and continue to split the partition in the next layer $l+1$.

\begin{algorithm}[H]
\caption{Tree-Structured Progressive Partitioning}
\begin{algorithmic}
\STATE Initialize root node $\nu_0$ s.t. $S_{\nu_0} = S$
\REPEAT 
\STATE Observe data point $x\in S$
\STATE Find leaf node to update:
\STATE Set $w = \nu_0$
\WHILE {$C(w)\neq \emptyset$}
\STATE $w \gets v \in C(w)$ such that $x \in S_{v}$
\ENDWHILE 
\STATE Update partition $\cbra{S_{wj}}$ of $S_{w}$ using $x$ and Alg. \ref{alg:ODA} 
\IF {Alg. \ref{alg:ODA} in $S_{w}$ terminates}
\STATE Split node $w$: $C(w)\gets \cbra{S_{wj}}$
\ENDIF
\UNTIL Stopping criterion 
\end{algorithmic}
\label{alg:ts-oda}
\end{algorithm}

\subsection{Multi-Resolution Extension}
\label{sSec:mroda}

So far we have modeled the observations as realizations of a random variable $X\in S\subseteq \mathbb{R}^d$.
In general, $X$ can be itself a measurable signal $X(t):\mathbb{R}^n \rightarrow S$ with finite energy, i.e., 
$X(t) \in S \subseteq L^2(\mathbb{R}^d)$. 
We will denote the original space $S$ as $S^0$.
A multi-resolution representation of the signal $X(t)$ consists of 
a sequence of projections of $X(t)$ on
subspaces $\cbra{S^j}$ such that 
$S^j\subset S^{j-1}$, $\forall j\in \mathbb{N}$,
and $\cup_{j=0}^{\infty} S^j$ is dense in $S^0$
	with $\cap_{j=0}^{\infty} S^j=\cbra{0}$.
There are numerous methods to construct subspaces $\cbra{S^j}$
with these properties, from the classical wavelet transform
\cite{mallat1999wavelet} to different dictionary learning approaches \cite{chen2001atomic,lecun2016thenext}.
An approach using group-convolutional wavelet decomposition
will be presented in Section \ref{sSec:scattering}.

We denote by $X^r\in S^r$ the projection of $X=X^0$ to the subspace 
$S^r$. 
Given a multi-resolution representation of $X$ with subspaces 
$\cbra{ S^0, S^1, S^2, \ldots, S^{\tilde l} }$, 
we can extend Alg. \ref{alg:ts-oda} presented in Section \ref{sSec:tsoda} 
such that $X^{\tilde l - r} \in S^{\tilde l - r}$ is used
to train the nodes of the tree at level $r$. 
This idea matches the intuition of 
using higher-resolution representation of $X$ for deeper layers of the tree.
It was first introduced in 
\cite{baras1993efficient} and \cite{baras1994wavelet},
and constitutes a hierarchical multi-resolution learning scheme.  
The algorithmic implementation is straightforward given Alg. \ref{alg:ts-oda},
and is given in Alg. \ref{alg:mr-oda}.

Regarding the asymptotic behavior of the multi-resolution extension, 
the results of Theorem \ref{thm:tree-consistency} hold.
To see that, notice that since $S^{l}\subset S^0$, for $l>0$, 
it follows that $X^l\in S^0$ as well, 
and, as a result, Alg. \ref{alg:mr-oda} essentially creates 
a tree-structured partition $\Sigma_\Delta$ of $S^0=S$, 
with the leaf nodes trained with $X^0\in S^0=S$, such that
the following holds.

\begin{theorem}
Let $\Sigma_\Delta$
be a tree-structured 
partitioning scheme of depth $\tilde l$ 
created by Alg. \ref{alg:mr-oda} using the multi-resolution representation 
$\pbra{x_n^0=x_n, x_n^1, \ldots, x_n^{\tilde l}} \in \pbra{S^0=S, S^1, \ldots, S^{\tilde l}}$ of realizations of a random variable $X\in S$.
If the leaf nodes
are updated at the limit $\lambda\rightarrow 0$, and $n\rightarrow\infty$,
then $\Sigma_\Delta$ yields a consistent density estimator of $X\in S$, with $\hat p(x) = 
\frac{\sum_i \mathds{1}_{\sbra{x\in \tilde S_i}}}{n Vol(\tilde S_i)}$, 
where $\tilde S_i$ is a leaf node given by the iterative process \eqref{eq:tree-iteration}.
\label{thm:mr-consistency}
\end{theorem}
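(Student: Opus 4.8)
The plan is to reduce the claim to Theorem~\ref{thm:tree-consistency}, exactly as that theorem was reduced to Theorem~\ref{thm:consistency}. The central observation is that the multi-resolution representation changes only \emph{which} representation $X^{\tilde l - r}$ drives the updates of the nodes at level $r$, but not the ambient space in which the partition ultimately lives. First I would record that the subspaces are nested: since $S^j\subset S^{j-1}$ for every $j$, we have $S^l\subseteq S^0=S$ for all $l\in\cbra{0,\ldots,\tilde l}$. Consequently every projection $X^l\in S^l$ is also a point of $S^0=S$, so each application of Alg.~\ref{alg:ODA} inside a node $S_w$ --- regardless of the resolution used to drive its updates --- still produces children regions $\cbra{S_{wj}}$ that form a partition of $S_w\subseteq S$. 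By induction on the tree level, the leaf sets $\cbra{\tilde S_i}$ therefore partition $S^0=S$, and $\Sigma_\Delta$ is a genuine tree-structured partition of $S$ in the sense of Section~\ref{sSec:tsoda}.

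Next I would invoke the specific resolution schedule. The leaf nodes reside at the maximum level $r=\tilde l$, so by construction they are trained with $X^{\tilde l-\tilde l}=X^0=X\in S^0=S$, i.e., with the full-resolution data in its native space. Hence the final refinement of the partition at the leaves is driven by the original random variable $X$, which is precisely the setting of Theorem~\ref{thm:consistency}. Taking the limit $\lambda\rightarrow 0$ and $n\rightarrow\infty$ at the leaves and applying Theorem~\ref{thm:consistency} to each leaf region $\tilde S_i$ then yields the stated consistent estimator $\hat p(x)=\frac{\sum_i \mathds{1}_{\sbra{x\in\tilde S_i}}}{n\,Vol(\tilde S_i)}$, completing the reduction to the argument of Theorem~\ref{thm:tree-consistency}.

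I expect the only genuine subtlety to be arguing that the coarser resolutions used at the interior nodes do not obstruct consistency. The key point to make explicit is that those resolutions fix only the \emph{coarse} geometry of the partition, while consistency of the histogram-type estimator depends solely on the leaf cells satisfying $Vol(\tilde S_i)\rightarrow 0$ together with $n\,Vol(\tilde S_i)\rightarrow\infty$. Since the leaves are refined with full-resolution data as $\lambda\rightarrow 0$, these two conditions are inherited verbatim from Theorem~\ref{thm:consistency}, so the interior resolutions --- which merely organize the refinement hierarchically --- are immaterial to the asymptotic conclusion, and the proof reduces to the finite-depth tree case already handled in Theorem~\ref{thm:tree-consistency}.
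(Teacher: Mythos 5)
Your proposal is correct and follows essentially the same route as the paper: the paper likewise observes that $S^l\subset S^0$ implies $X^l\in S^0=S$, so Alg.~\ref{alg:mr-oda} still builds a tree-structured partition of $S$ whose leaves are trained with the full-resolution $X^0=X$, and then applies Theorem~\ref{thm:consistency} to each leaf region exactly as in Theorem~\ref{thm:tree-consistency}. Your added remark about the interior resolutions only fixing the coarse geometry is a nice explicit articulation of what the paper leaves implicit, but it is the same argument.
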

\begin{proof}
It follows directly by the application of Theorem \ref{thm:consistency} to each region $\tilde S_{j}$,
where $\cbra{\tilde S_j}$ are the leaf nodes of $\Sigma_\Delta$ that form a partition of $S$. 
\end{proof}

%

\begin{algorithm}[t]
\caption{Multi-Resolution Progressive Partitioning}
\begin{algorithmic}
\STATE Initialize root node $\nu_0$ s.t. $S_{\nu_0} = S^{\tilde l}$
\REPEAT 
\STATE Observe data point $x^{\tilde l} \in S^{\tilde l}$
\STATE Find leaf node to update:
\STATE Set $w = \nu_0$
\STATE Set resolution $l=\tilde l$
\WHILE {$C(w)\neq \emptyset$}
\STATE $w \gets v \in C(w)$ such that $x^{l} \in S_{v}^{l}$
\STATE $l \gets l-1$
\ENDWHILE 
\STATE Update partition $\cbra{S_{wj}^{l}}$ of $S_{w}^{l}$ using $x$ and Alg. \ref{alg:ODA} 
\IF {Alg. \ref{alg:ODA} in $S_{w}^{l}$ terminates and $l>0$}
\STATE Split node $w$: $C(w)\gets \cbra{S_{wj}^{l}}$
\ENDIF
\UNTIL Stopping criterion 
\end{algorithmic}
\label{alg:mr-oda}
\end{algorithm}

%
%

%
\begin{figure*}[ht]
\centering
\begin{subfigure}[b]{0.31\textwidth}
\centering
\includegraphics[trim=0 0 0 0,clip,width=1.0\textwidth]{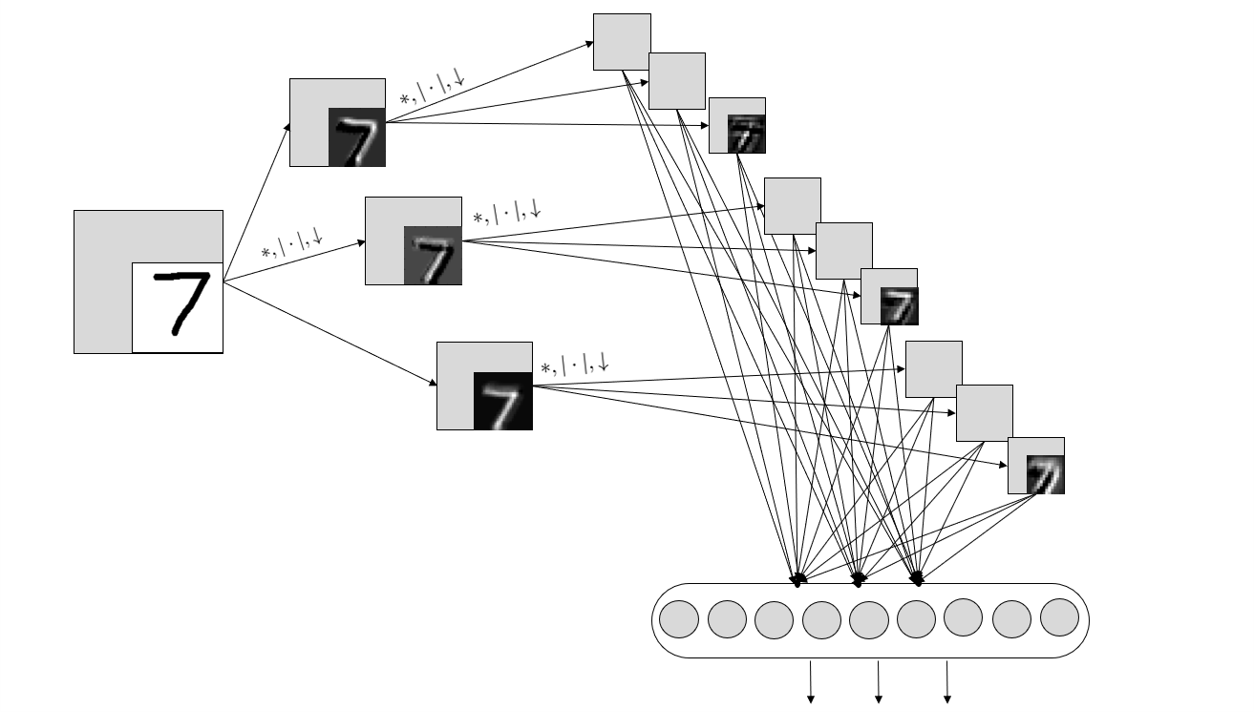}
\caption{DCNN.}
\end{subfigure}
\begin{subfigure}[b]{0.32\textwidth}
\centering
\includegraphics[trim=0 0 0 0,clip,width=1.0\textwidth]{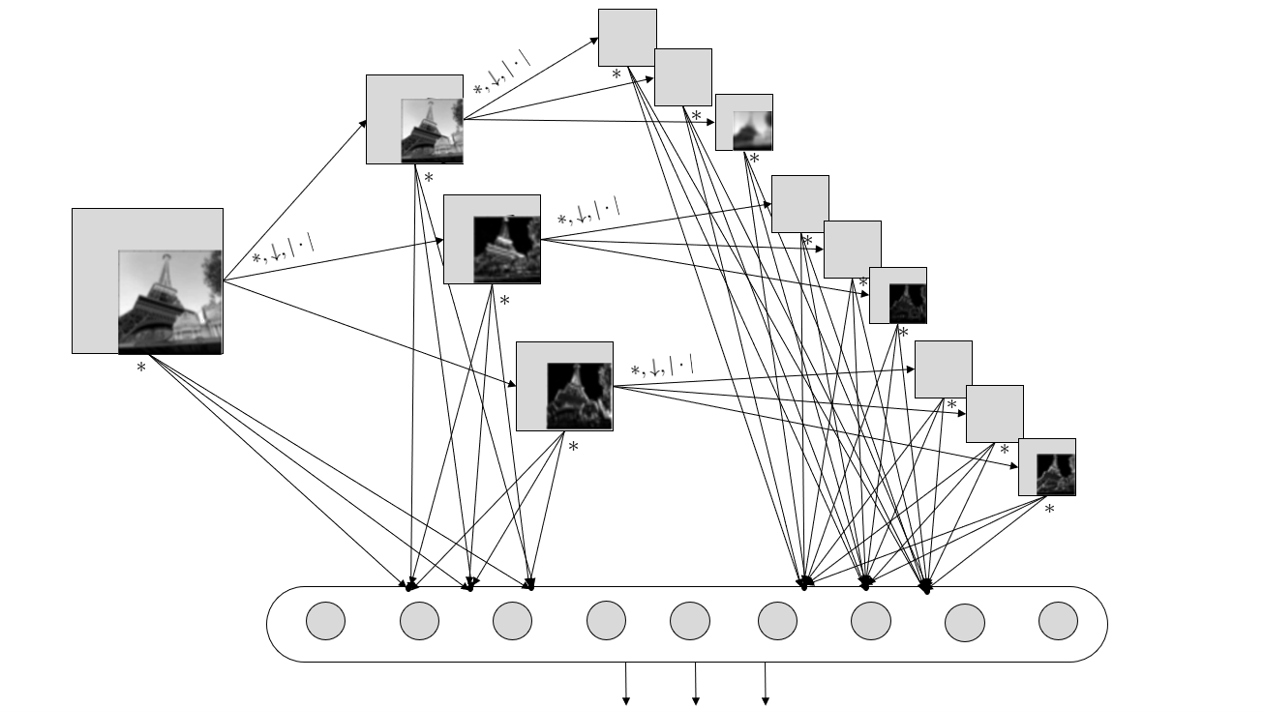}
\caption{SCN.}
\end{subfigure}
\begin{subfigure}[b]{0.35\textwidth}
\centering
\includegraphics[trim=0 0 0 0,clip,width=1.0\textwidth]{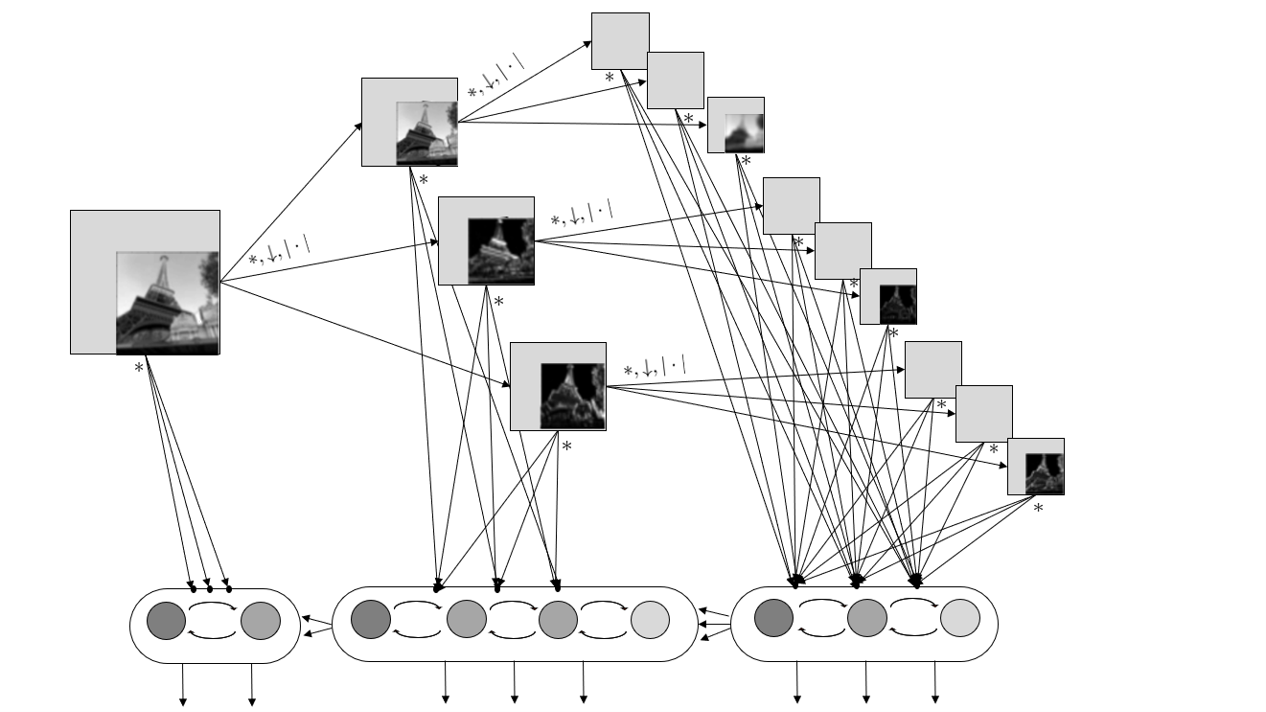}
\caption{Proposed Architecture.}
\end{subfigure}
\caption{Block-diagram of the proposed hierarchical architecture using multi-resolution features  
        from the wavelet scattering transform compared to
	Deep Convolutional Neural Networks (DCNN) and 
	Scattering Convolutional Networks (SCN).
	The feed-forward arrows represent a cascade of convolution, rectifying, and downsampling operations.}
\label{fig:TSODAvsDCNNvsSCN_intro}
\end{figure*}
%

\subsection{Building Group-Invariant Multi-Resolution Representations}
\label{sSec:scattering}

There are numerous methods to construct subspaces $\cbra{S^j}$
with the properties mentioned in Section \ref{sSec:tsoda}. 
In this section we briefly mention a particular approach based on group-convolutional wavelet decomposition
that aligns with the principles of the scattering transform, first introduced in \cite{bruna2013invariant}.
This is an unsupervised method that constructs a hierarchy of features based on group-convolutions, that 
preserve local invariance with respect to a certain class of Lie groups, such as translation, rotation, and deformation,
an important property in many learning applications
\cite{mallat2016understanding}.

We start with the standard wavelet transform $\cbra{W_l X(t)}_l$ of a signal $X(t) \in S \subseteq L^2(\mathbb{R}^d)$ as a basis \cite{mallat1999wavelet}.
Here, $\cbra{W_l X(t)}_l \in S^l$ represents the signal $X(t)$ at resolution $2^l$.
%
%
The computation of the multi-resolution wavelet representation of a signal
consists of successive operations of
a linear convolution operator,
followed by a downsampling step \cite{mallat2016understanding}.
As a result, the wavelet transform, is stable to small deformations \cite{bruna2013invariant}. 
%
In addition, the wavelet transform is translation covariant (or equivariant), that is
it commutes with the Lie group of operators 
$\cbra{T_c}_{c\in\mathbb{R}}$
such that 
$T_c X(t) = X(t-c)$, i.e.,  $W_j(T_c X) = T_c W_j(X)$.
We note that, in the 
control theory and signal processing communities,
convolutions are associated with systems described by the term 
'linear time-invariant'. 
To avoid confusion, with the terminology used here, 
these systems are considered linear covariant (or equivariant) operators  
with respect to translation in time.

To induce local invariance (up to a scale $2^J$ for some $J>0$) with respect to translation, 
it has been shown that it is sufficient to cascade the wavelet transform with a non-linear operation
$\rho W_j X= \|W_j X\|_1$,  
and a locally averaging integral operation which can be modeled as a 
convolution with a low-pass filter localized in a spatial 
window scaled at $2^J$ \cite{bruna2013invariant}.
This is called a scattering transform and its implementation 
is based on a complex-valued convolutional neural network 
whose filters are fixed wavelets and $\rho$ is a complex modulus operator
as described above \cite{andreux2019kymatio}.
%
%
This structure is 
similar to deep convolutional neural networks \cite{lecun2015deep}, 
where successive operations of
a linear convolutional operator, 
a nonlinear mapping (often a rectifying function, e.g., ReLu),
and a down-sampling step (e.g., max-pooling), are used to produce the 
input for the next stage of the architecture
\cite{bruna2013invariant,mallat2016understanding,anselmi2015deep}.
We illustrate this in Fig. \ref{fig:TSODAvsDCNNvsSCN_intro},
where Alg. \ref{alg:mr-oda} combined with the hierarchical representation of a scattering transform
is compared to a 
Deep Convolutional Network \cite{lecun2015deep} and a
Scattering Convolutional Network \cite{bruna2013invariant}.

As a final note, the translation invariance properties discussed above can be generalized to the action 
of arbitrary compact Lie groups \cite{mallat2012group}.
In particular, let $G$ be a compact Lie group and $L^2(G)$ be the space of measurable functions $f(r)$ such that 
$\|f\|^2 = \int_G |f(r)|^2 dr <\infty$, where $dr$ 
is the Haar measure of $G$. 
The left action of $g\in G$ on $f\in L^2(G)$ 
is defined by $L_g f(r) = f(g^{-1}r)$.
As a special case, the action of the translation group 
$T_c f(t) = f(t-c)$ translates the function $f$ to the right by $c$,
with $g^{-1} = -c$ translating the argument of $f$ to the left by $c$.
Similar to the usual convolution 
$(f\ast h)(x) = \int_{-\infty}^\infty f(u) h(x-u) du$
that defines a linear translation covariant operator,  
convolutions on a group appear naturally as 
linear operators covariant to the action of a group:
\begin{equation}
(f\ast h)(x) = \int_G f(g) h(g^{-1}x) dr
\end{equation}
where $dr$ is the Haar measure of $G$. 
As a result, an invariant representation 
relatively to the action of a compact Lie group, 
can be computed by averaging over covariant representations 
created by group convolution with appropriately defined wavelets, 
similar to the methodology explained above.

\section{Experimental Evaluation and Discussion}
\label{Sec:Results}

We illustrate the properties and evaluate the performance 
of the proposed learning algorithm in clustering, classification, and regression problems.

In Fig. \ref{fig:my2d-u-1}, the evolution of the 
progressive partitioning algorithm (Alg. \ref{alg:ODA}) studied in Section \ref{Sec:ODA} 
is depicted, in an unsupervised learning (clustering) problem.
To better illustrate the properties of the approach, 
the data samples were sampled from a mixture of 2D 
Gaussian distributions.
The temperature level (we use $T$ instead of $\lambda$ to stress the connection to the temperature level in annealing optimization),
the average distortion of the model, 
the number of codevectors (neurons) used, 
the number of observations (data samples) used for convergence, as well as the overall time,
are shown.
This process showcases the performance-complexity trade-off described in 
Section \ref{Sec:ODA}.
In Fig. \ref{fig:my2d-u-11} and \ref{fig:my2d-u-111}, the tree-structured progressive partitioning
algorithm of Section \ref{sSec:tsoda} is compared against Alg. \ref{alg:ODA} in the same 
problem as in Fig. \ref{fig:my2d-u-1}.
Notice that the time complexity of the algorithm is drastically reduced.
Additional properties regarding the construction of tree-structured partitions are discussed 
in Section \ref{sSec:localization}.

\begin{figure*}[h]
\centering
\begin{subfigure}[b]{0.98\textwidth}
\centering
\includegraphics[trim=0 0 0 0,clip,width=0.19\textwidth]{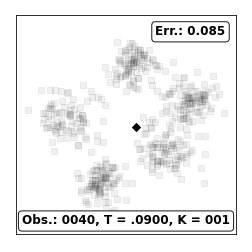}
\includegraphics[trim=0 0 0 0,clip,width=0.19\textwidth]{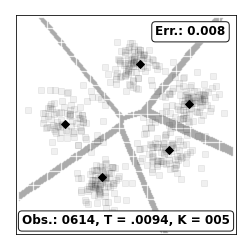}
\includegraphics[trim=0 0 0 0,clip,width=0.19\textwidth]{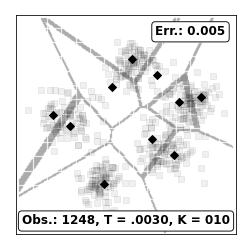}
\includegraphics[trim=0 0 0 0,clip,width=0.19\textwidth]{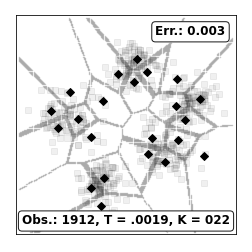}
\includegraphics[trim=0 0 0 0,clip,width=0.19\textwidth]{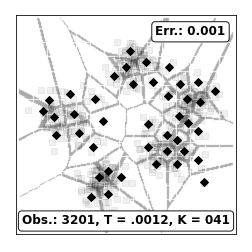}
\caption{Evolution of the algorithm in the data space.}
\label{sfig:my2d-u-1}
\end{subfigure}
\begin{subfigure}[b]{0.98\textwidth}
\centering
\includegraphics[trim=0 0 0 0,clip,width=0.24\textwidth]{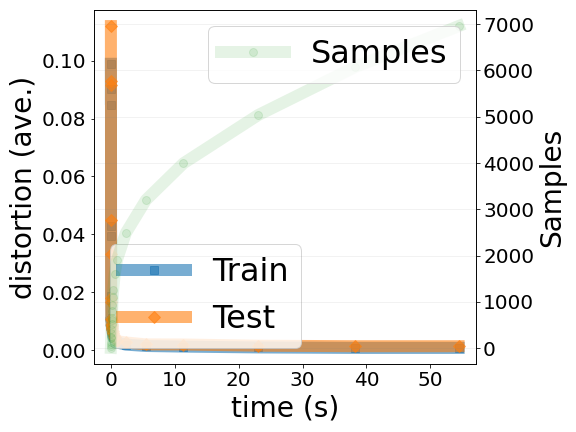}
\includegraphics[trim=0 0 0 0,clip,width=0.24\textwidth]{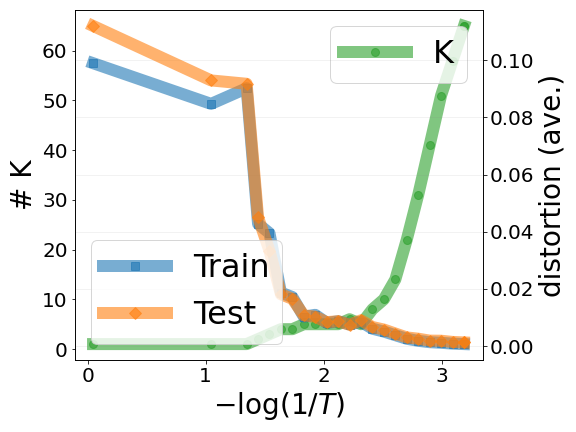}
\caption{Performance curves.}
\label{sfig:curves-my2d-u-1}
\end{subfigure}
\caption{Performance curves and data space evolution of Algorithm \ref{alg:ODA} applied to 
a clustering problem with underlying Gaussian distributions.}
\label{fig:my2d-u-1}
\end{figure*}
\begin{figure*}[h]
\centering
\begin{subfigure}[b]{0.98\textwidth}
\centering
\includegraphics[trim=0 0 0 0,clip,width=0.19\textwidth]{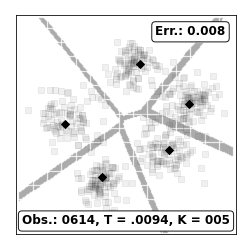}
\includegraphics[trim=0 0 0 0,clip,width=0.19\textwidth]{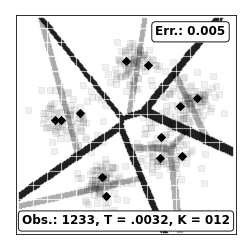}
\includegraphics[trim=0 0 0 0,clip,width=0.19\textwidth]{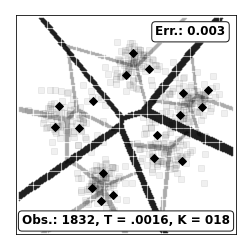}
\includegraphics[trim=0 0 0 0,clip,width=0.19\textwidth]{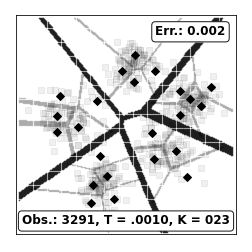}
\includegraphics[trim=0 0 0 0,clip,width=0.19\textwidth]{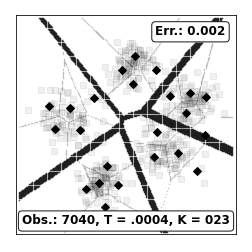}
\caption{Evolution of the algorithm in the data space.}
\label{sfig:my2d-u-11}
\end{subfigure}
\begin{subfigure}[b]{0.98\textwidth}
\centering
\includegraphics[trim=0 0 0 0,clip,width=0.24\textwidth]{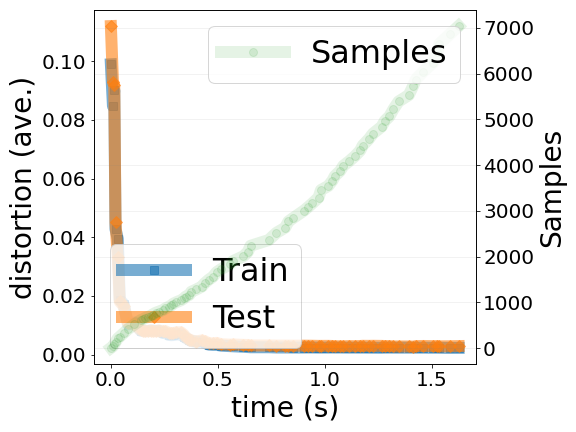}
\includegraphics[trim=0 0 0 0,clip,width=0.24\textwidth]{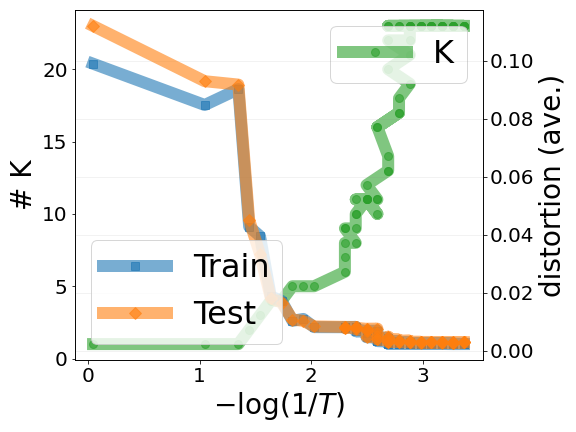}
\caption{Performance curves.}
\label{sfig:curves-my2d-u-11}
\end{subfigure}
\caption{Performance curves and data space evolution of the tree-structured approach (two layers)
applied to a clustering problem with underlying Gaussian distributions.}
\label{fig:my2d-u-11}
\end{figure*}
\begin{figure*}[h]
\centering
\begin{subfigure}[b]{0.54\textwidth}
\centering
\includegraphics[trim=0 0 0 0,clip,width=0.32\textwidth]{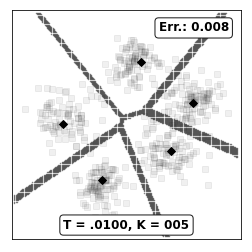}
\includegraphics[trim=0 0 0 0,clip,width=0.32\textwidth]{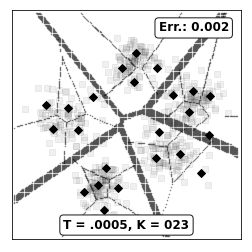}
\includegraphics[trim=0 0 0 0,clip,width=0.32\textwidth]{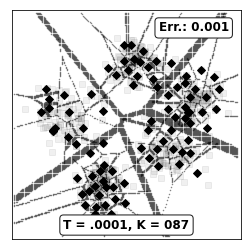}
\caption{Evolution of the algorithm in the data space.}
\label{sfig:my2d-u-111}
\end{subfigure}
\begin{subfigure}[b]{0.45\textwidth}
\centering
\includegraphics[trim=0 0 0 0,clip,width=0.49\textwidth]{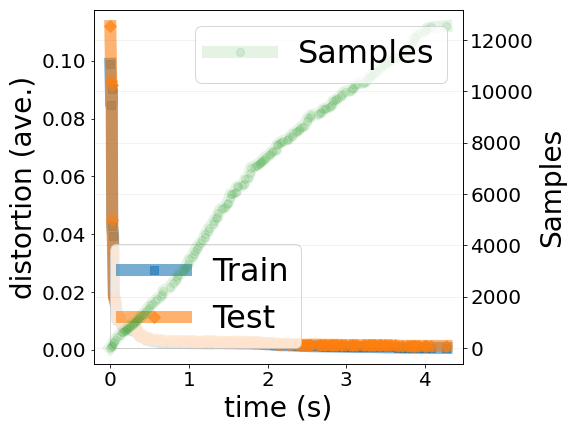}
\includegraphics[trim=0 0 0 0,clip,width=0.49\textwidth]{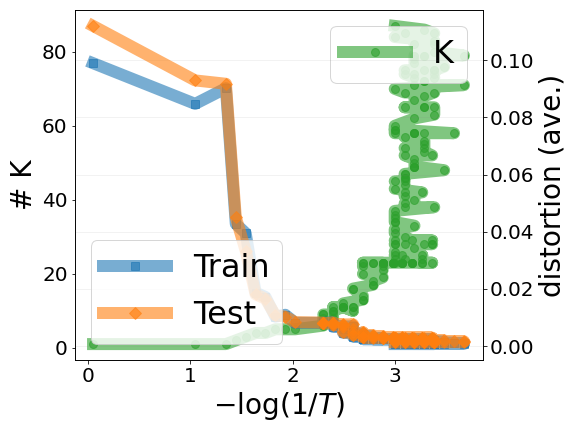}
\caption{Performance curves.}
\label{sfig:curves-my2d-u-111}
\end{subfigure}
\caption{Performance curves and data space evolution of the tree-structured approach (three layers)
applied to a clustering problem with underlying Gaussian distributions.}
\label{fig:my2d-u-111}
\end{figure*}

Similarly, Fig. \ref{fig:my2d-c-1} shows the evolution of the learning model for a 
2D classification problem with class-conditional distributions given 
by a mixture of 2D Gaussians.
These results correspond to the approach explained in Section \ref{sSec:classification},
using Alg. \ref{alg:ODA-class}.
%
In addition to the apparent accuracy-complexity trade-off, 
we make use of this classification problem to showcase the difference
of using the tree-structured approach of Section \ref{sSec:tsoda}, in terms of computational complexity.
In Fig. \ref{fig:my2d-c-111}, 
we illustrate the evolution of the 
tree-structured approach (Alg. \ref{alg:ts-oda}).
%
There are two notable comments on the behavior of this approach compared to the original.
First, the time complexity is considerably improved (see Section \ref{Sec:MR-ODA}),
and this results in a drastic difference in the running time of the learning algorithm.
%
Secondly, the number of codevectors used is drastically reduced, as well.
This is due to the regularization mechanism described in Section \ref{sSec:tsoda}:
when a partition $\cbra{S_{wj}}_{j=1}^{K_w}$ of a node $w$ is fixed, the node $w$ can check the condition $c_{\mu_{wi}} = c_{\mu_{wj}}, \forall i,j$, which means that the partition $\cbra{S_{wj}}_{j=1}^{K_w}$ is using $K_w$ codevectors, all of which
correspond to the same class. 
In this case, node $w$ is assigned only a single codevector, 
and is not further split by the algorithm.
Notice, that, in this way, the codevectors created by the tree-structured algorithm 
tend to exist in the boundaries of the Bayes decision surface, instead of populating
areas where the decision surface does not fluctuate at all.
\begin{figure*}[h]
\centering
\begin{subfigure}[b]{0.98\textwidth}
\centering
\includegraphics[trim=0 0 0 0,clip,width=0.19\textwidth]{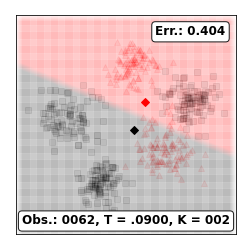}
\includegraphics[trim=0 0 0 0,clip,width=0.19\textwidth]{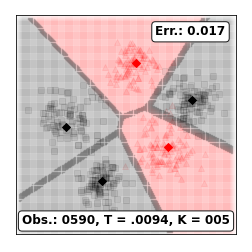}
\includegraphics[trim=0 0 0 0,clip,width=0.19\textwidth]{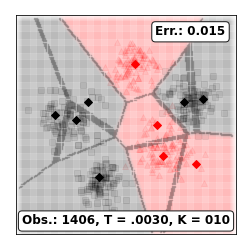}
\includegraphics[trim=0 0 0 0,clip,width=0.19\textwidth]{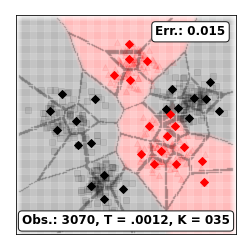}
\includegraphics[trim=0 0 0 0,clip,width=0.19\textwidth]{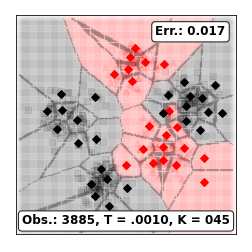}
\caption{Evolution of the algorithm in the data space.}
\label{sfig:my2d-c-1}
\end{subfigure}
\begin{subfigure}[b]{0.98\textwidth}
\centering
\includegraphics[trim=0 0 0 0,clip,width=0.24\textwidth]{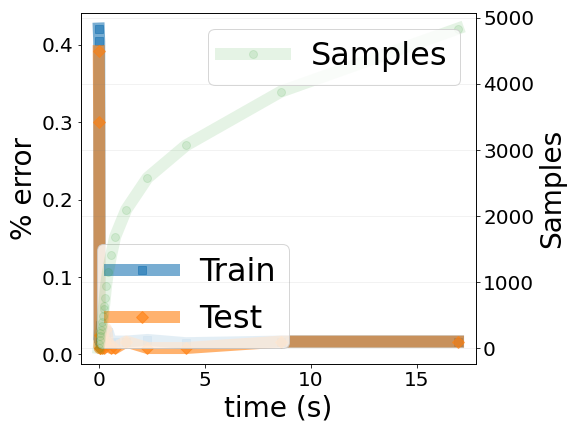}
\includegraphics[trim=0 0 0 0,clip,width=0.24\textwidth]{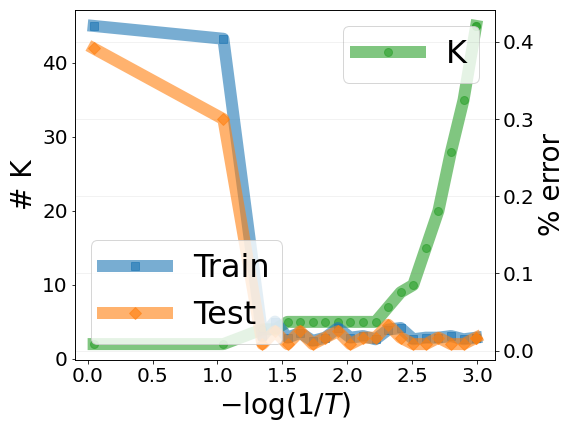}
\caption{Performance curves.}
\label{sfig:curves-my2d-c-1}
\end{subfigure}
\caption{Performance curves and data space evolution of the proposed algorithm 
applied to a classification problem with underlying Gaussian distributions.}
\label{fig:my2d-c-1}
\end{figure*}
\begin{figure*}[h]
\centering
\begin{subfigure}[b]{0.65\textwidth}
\centering
\includegraphics[trim=0 0 0 0,clip,width=0.3\textwidth]{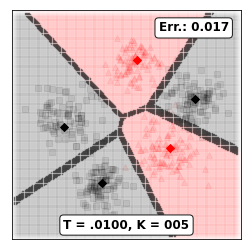}
\includegraphics[trim=0 0 0 0,clip,width=0.3\textwidth]{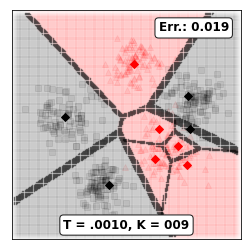}
\includegraphics[trim=0 0 0 0,clip,width=0.3\textwidth]{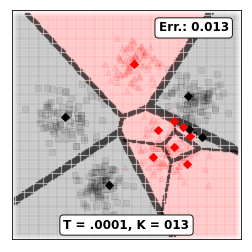}
\caption{Evolution of the algorithm in the data space.}
\label{sfig:my2d-c-111}
\end{subfigure}
\hspace{1em}
\begin{subfigure}[b]{0.26\textwidth}
\centering
\includegraphics[trim=0 0 0 0,clip,width=0.98\textwidth]{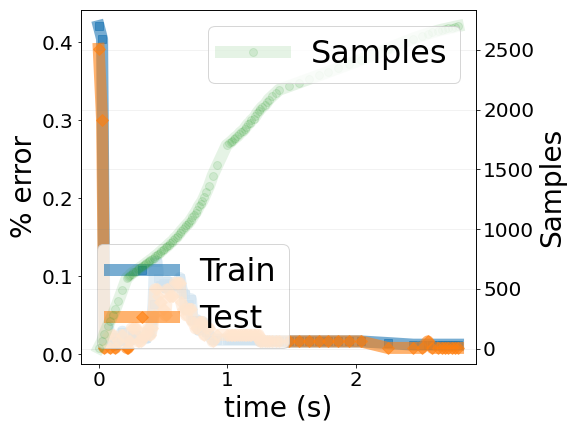}
\caption{Performance curves.}
\label{sfig:curves-my2d-c-111}
\end{subfigure}
\caption{Performance curves and data space evolution of the proposed tree-structured algorithm 
(three layers) applied to a classification problem with underlying Gaussian distributions.}
\label{fig:my2d-c-111}
\end{figure*}

The effect of using multiple resolutions
as described in Section \ref{sSec:mroda}, is depicted in Fig. \ref{fig:my2d-c-11-multi}
for the same problem as in Fig. \ref{fig:my2d-c-1} and \ref{fig:my2d-c-111}.
For better visualization, we assume that the low-resolution features, with respect to which 
the first layer of the tree is computed, are the projections of the two-dimensional data
in an one-dimensional space (line).
The second layer of the tree is trained using the high-resolution features, i.e., 
the full knowledge of both coordinates of the data.
Notice that, as expected from Theorem \ref{thm:mr-consistency}, 
this process will converge to a consistent learning algorithm, as long as the 
multi-resolution representation used complies with the properties mentioned in 
Section \ref{sSec:mroda}, and the last layer of 
the tree uses the full knowledge of the input data.

\begin{figure}[h]
\centering
\begin{subfigure}[b]{0.20\textwidth}
\centering
\includegraphics[trim=0 0 0 0,clip,width=0.98\textwidth]{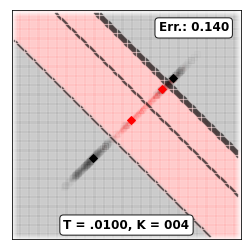}
\caption{Convergence of first layer with low-resolution features.}
\label{sfig:my2d-c-11-multi-0}
\end{subfigure}
\begin{subfigure}[b]{0.20\textwidth}
\centering
\includegraphics[trim=0 0 0 0,clip,width=0.98\textwidth]{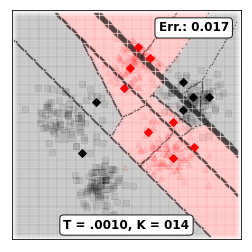}
\caption{Convergence of second layer with high-resolution features.}
\label{sfig:my2d-c-11-multi-1}
\end{subfigure}
\begin{subfigure}[b]{0.25\textwidth}
\centering
\includegraphics[trim=0 0 0 0,clip,width=0.98\textwidth]{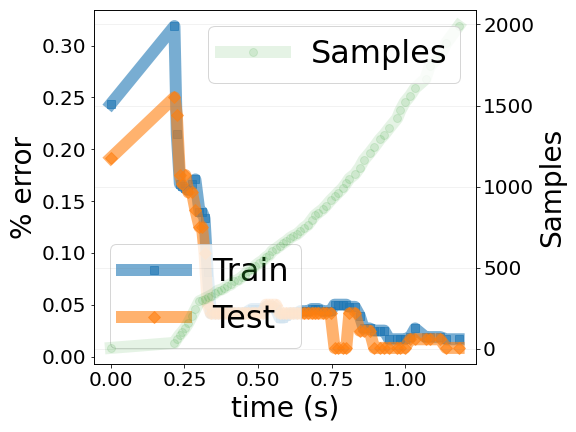}
\caption{Performance curve.}
\label{sfig:curves-my2d-c-11-multi}
\end{subfigure}
\caption{Performance curves and data space evolution of the proposed multi-resolution algorithm 
(two layers) applied to a classification problem with underlying Gaussian distributions.}
\label{fig:my2d-c-11-multi}
\end{figure}

Finally, in Fig. \ref{fig:r1d} and \ref{fig:r2d}, 
we test the proposed methodology in two regression problems,
where one- and two-dimensional functions are hierarchically approximated 
using the piece-wise constant approximation algorithm of Section \ref{sSec:PWC-regression}, and the
tree-structured approach of Section \ref{Sec:MR-ODA}.

\begin{figure*}[h]
\centering
\begin{subfigure}[b]{0.98\textwidth}
\centering
\includegraphics[trim=0 0 0 0,clip,width=0.22\textwidth]{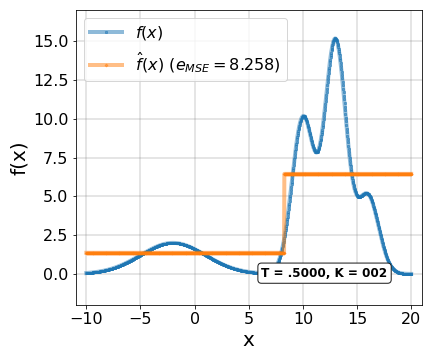}
\includegraphics[trim=0 0 0 0,clip,width=0.22\textwidth]{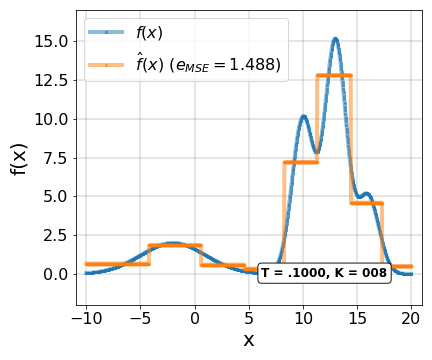}
\includegraphics[trim=0 0 0 0,clip,width=0.22\textwidth]{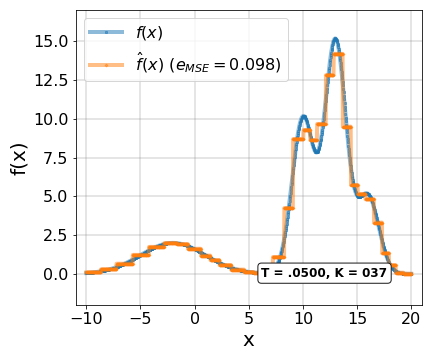}
\includegraphics[trim=0 0 0 0,clip,width=0.22\textwidth]{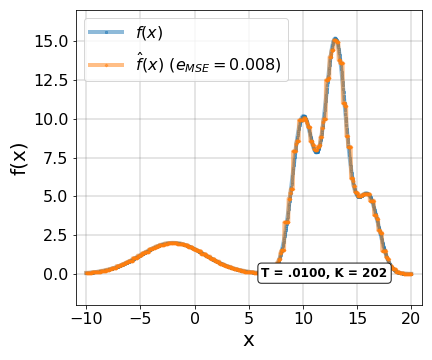}
\caption{Evolution of the algorithm in the data space.}
\label{sfig:r1d}
\end{subfigure}
\begin{subfigure}[b]{0.98\textwidth}
\centering
\includegraphics[trim=0 0 0 0,clip,width=0.23\textwidth]{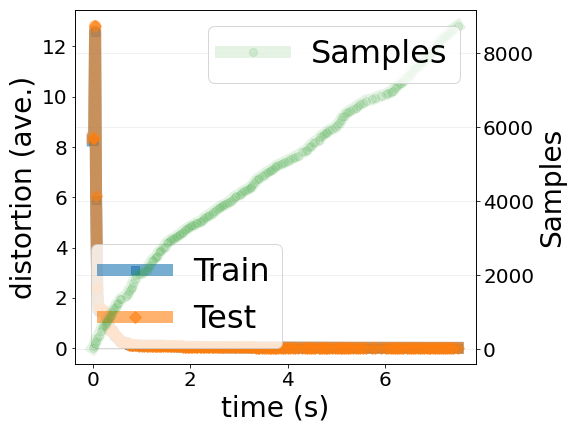}
\includegraphics[trim=0 0 0 0,clip,width=0.23\textwidth]{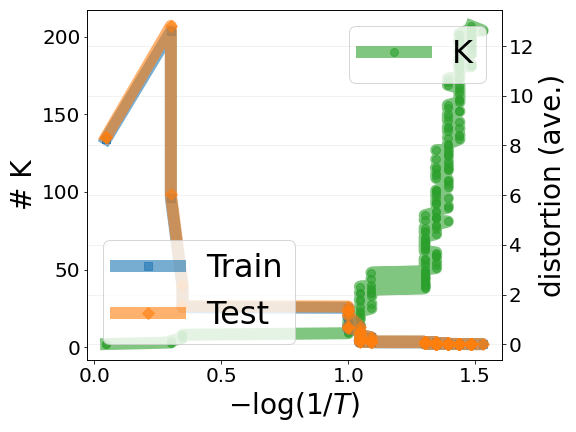}
\caption{Performance curves.}
\label{sfig:curves-r1d}
\end{subfigure}
\caption{Performance curves and data space evolution of the proposed tree-structured algorithm 
(four layers) applied to a piece-wise constant function approximation problem in 1D.}
\label{fig:r1d}
\end{figure*}
\begin{figure*}[h]
\centering
\begin{subfigure}[b]{0.76\textwidth}
\centering
\includegraphics[trim=120 20 80 50,clip,width=0.24\textwidth]{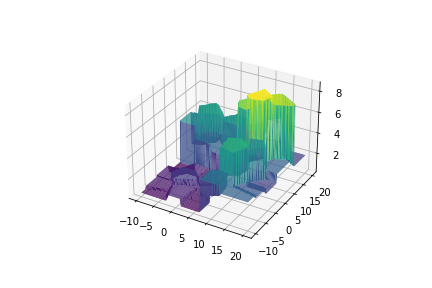}
\includegraphics[trim=120 20 80 50,clip,width=0.24\textwidth]{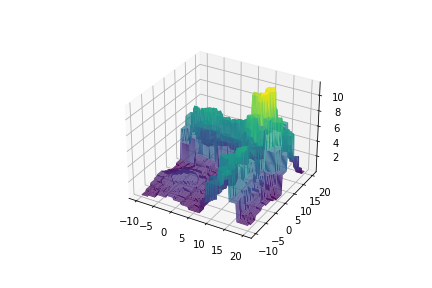}
\includegraphics[trim=120 20 80 50,clip,width=0.24\textwidth]{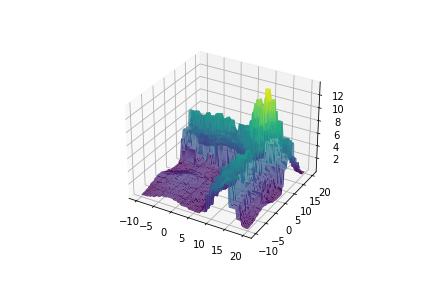}
\includegraphics[trim=120 20 80 50,clip,width=0.24\textwidth]{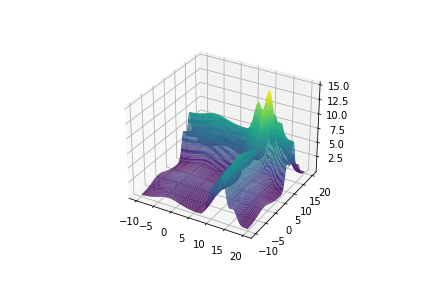}
\caption{Evolution of the algorithm in the data space (original function on the right).}
\label{sfig:r2d}
\end{subfigure}
\begin{subfigure}[b]{0.22\textwidth}
\centering
\includegraphics[trim=0 0 0 0,clip,width=0.999\textwidth]{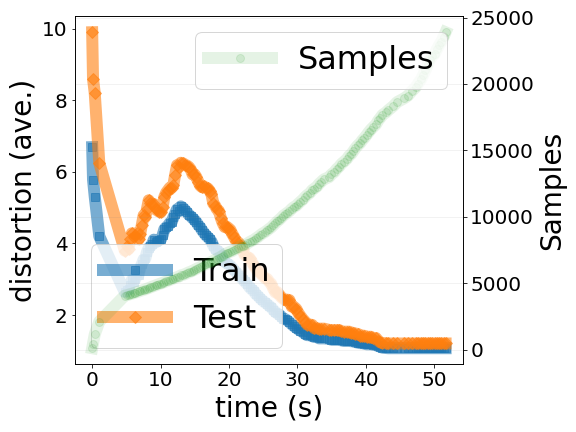}
\caption{Performance curves.}
\label{sfig:curves-r2d}
\end{subfigure}
\caption{Performance curves and data space evolution of the proposed tree-structured algorithm 
(three layers) applied to a piece-wise constant function approximation problem in 2D.}
\label{fig:r2d}
\end{figure*}
%

\subsection{Source Code and Reproducibility}

The open-source code is publicly available 
at \url{https://github.com/MavridisChristos/OnlineDeterministicAnnealing}.

\subsection{Tree-Structured Partition, Localization, and Explainability in Machine Learning}
\label{sSec:localization}

Another advantage of using a tree-structured learning module is the localization properties
which allow for an understanding of the input space, in accordance to the principles of 
the recently intoduced class of explainable learning models \cite{milani2022survey}. 
%
The Voronoi regions shrink geometrically, 
and allow for the use of local models, which is especially important in high-dimensional spaces. 
Unlike most learning models, it is possible to locate the area of the data space that 
presents the highest error rate and selectively split it by using local ODA.
This process can be iterated until the desired error rate (or average distortion) is achieved.
When using a training dataset for classification, it is often possible to force 
accuracy of up to $100\%$ 
on the training dataset. 
This is similar to an over-fitted classification and regression tree (CART)
\cite{breiman2001random}.
However, over-fitting on the training dataset often adversely affects the 
generalization properties of the model, the performance on the 
testing dataset, and the robustness against adversarial attacks. 
Therefore, the progressive process of ODA becomes important in establishing a robust way to 
control the trade-off between performance and complexity, before you reach that limit.
Finally, an important question in tree-structured learning models is 
the question of which cell to split next.
An exhaustive search in the entire tree to find the node that presents the largest error rate 
is possible but is often not desired due to the large computational overhead. 
This is automatically answered by the 
multi-resolution ODA algorithm (Alg. \ref{alg:mr-oda}) as it asynchronously updates all cells 
depending on the sequence of the online observations. 
As a result, the regions of the data space that are more densely populated with data samples 
are trained first, which results in a higher percentage of performance increase per cell split.
We stress that this property makes the proposed algorithm completely dataset-agnostic, 
in the sense that it does not require the knowledge of 
a training dataset a priori, but instead operates completely online, i.e., using one observation at a time 
to update its knowledge base.

\section{Conclusion}
\label{Sec:Conclusion}

We introduced a hierarchical learning algorithm 
to gradually approximate a solution to a data-driven optimization problem
in the context of autonomous decision-making systems, especially under limitations on time and computational resources.
The learning architecture simulates an annealing process 
and defines a heuristic method 
to progressively construct a tree-structured partition
of a possibly multi-resolution data space,
which can be used in conjunction with general learning algorithms to train local models. 
The structured partitioning of the input space provides explainability, 
and makes the learning architecture a suitable candidate for transfer learning 
applications. 
Finally, the online gradient-free training rule based on stochastic approximation, can be viewed 
a discrete-time dynamical learning system, and used for inference, control, and reinforcement 
learning applications. 

\ifx\mycmd\undefined

\bibliographystyle{IEEEtran} %
\bibliography{bib_learning,bib_mavridis} 

\else

\bibliographystyle{siamplain}
\bibliography{bib_learning,bib_mavridis}

\fi

\ifx\mycmd\undefined

\appendices

\else

\appendix

\fi

\section{Proof of Lemma \ref{lem:gibbs} (Derivation of the Association Probabilities).} 
\label{App:gibbs}

%
\noindent
Recall that by definition \eqref{eq:F}, we get 
\begin{align*}
F(\mu) &:= (1-\lambda) \int p(x) \sum_i p(\mu_i|x) d(x,\mu_i) ~dx   
    \\&\quad
    + \lambda \int p(x) \sum_i p(\mu_i|x) \log p(\mu_i|x) ~dx 
    - \lambda H(X)
\end{align*}
We form the Lagrangian:
\begin{equation}
\begin{aligned}
    \mathcal L_f&(\cbra{p(\mu_i|x)},\nu)
    := 
    \\&
    =(1-\lambda) D(\mu) -  \lambda H(\mu) 
    + \nu \pbra{\sum_i p(\mu_i|x) - 1} 
    \\&
    = (1-\lambda) \int p(x) \sum_i p(\mu_i|x) d(x,\mu_i) ~dx 
    \\&\quad    
    +  \lambda \int p(x) \sum_i p(\mu_i|x) \log p(\mu_i|x) ~dx 
    \\&\quad    
    + \nu \pbra{\sum_i p(\mu_i|x) - 1}  
    - \lambda \E{-\log p(X)} 
\end{aligned}    
\end{equation}
Taking $\frac{\partial \mathcal L}{\partial p(\mu|x) } = 0 $ yields:
\begin{align*}
    & (1-\lambda) d(x,\mu_i) + \lambda (1+\log p(\mu_i|x)) + \nu = 0 \\
    & \implies \log p(\mu_i|x) = - \frac{1-\lambda}{\lambda} d(x,\mu_i)
            - \pbra{ 1 + \frac \nu \lambda } \\
    & \implies p(\mu_i|x) =  \frac{e^{- \frac{1-\lambda}{\lambda} d(x,\mu_i)}}{e^{1 + \frac \nu \lambda}}      
\end{align*}
Finally, from the condition $\sum_i p(\mu_i|x) = 1$, it follows that 
\begin{equation*}
    e^{1 + \frac \nu \lambda} = \sum_i e^{- \frac{1-\lambda}{\lambda} d(x,\mu_i)}
\end{equation*}
which completes the proof.

\section{Proof of Theorem \ref{thm:ODA} (Convergence of the Online Learning Rule).} 
\label{App:online}

\noindent
We are going to use fundamental results from 
stochastic approximation theory.
For completeness, we present the key theorems in what follows.
\begin{theorem}[\cite{borkar2009stochastic}, Ch.2]
\label{thm:borkar}
	Almost surely, the sequence $\cbra{x_n}\in S\subseteq\mathbb{R}^d$ 
	generated by the following stochastic approximation scheme:
	\begin{align}
		x_{n+1} = x_n + \alpha(n) \sbra{h(x_n) + M_{n+1}},\ n \geq 0	
	\label{eq:sa}	
	\end{align}
	with prescribed $x_0$, 
	\textit{converges} to a (possibly sample path dependent)
	compact, connected, internally chain transitive, invariant set
	of the o.d.e:
	\begin{align}
		\dot{x}(t) = h\pbra{x(t)}, ~ t \geq 0, 	
	\label{eq:sa_ode}	
	\end{align}
	where $x:\mathbb{R}_+\rightarrow\mathbb{R}_d$ and $x(0) = x_0$, 
	provided the following assumptions hold:
	\begin{itemize}
	\setlength\itemsep{0em}
	\item[(A1)] The map $h:\mathbb{R}^d \rightarrow \mathbb{R}^d$ is Lipschitz
		in $S$,	i.e., $\exists L$ with $0 < L < \infty$ such that
		$\norm{h(x)-h(y)} \leq L\norm{x-y}, ~ x,y \in S$,
	\item[(A2)] The stepsizes $\cbra{\alpha(n) \in \mathbb{R}_{++}, ~ n \geq 0}$
	satisfy
		$ \sum_n \alpha(n) = \infty$, and $\sum_n \alpha^2(n) < \infty$	,
	\item[(A3)] $\cbra{M_n}$ is a martingale difference sequence 
		with respect to the increasing family of $\sigma$-fields
		$ \mathcal{F}_n := \sigma \pbra{ x_m, M_m,~ m \leq n }$, ${n \geq 0}$,
		i.e., $\E{M_{n+1}|\mathcal{F}_n} = 0 ~ a.s.$, for all $n \geq 0$,
		and $\cbra{M_{n}}$ are square-integrable with 
		$ \E{\norm{M_{n+1}}^2|\mathcal{F}_n} \leq K \pbra{ 1 + \norm{x_n}^2 }, 
		~ a.s.$, where $n \geq 0 $ for some $K >0$,
	\item[(A4)] The iterates $\cbra{x_n}$ remain bounded a.s., i.e.,
		${ \sup_n \norm{x_n} < \infty}$ $ a.s.$
	\end{itemize}
\end{theorem}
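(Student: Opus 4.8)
The plan is to establish this via the ODE method, realizing the discrete recursion \eqref{eq:sa} as an asymptotic approximation of the flow of \eqref{eq:sa_ode} and then appealing to the theory of asymptotic pseudotrajectories as developed in \cite{borkar2009stochastic}. First I would introduce the accumulated stepsize times $t_n := \sum_{k=0}^{n-1}\alpha(k)$, which satisfy $t_n\to\infty$ by (A2), and define the piecewise-linear interpolated trajectory $\bar x(\cdot)$ with $\bar x(t_n) = x_n$. By (A4) the iterates remain a.s. bounded, so $\bar x(\cdot)$ takes values in a compact set, and by (A1) the map $h$ is Lipschitz there; this guarantees that the flow $\Phi$ of \eqref{eq:sa_ode} is well-defined and depends on initial conditions in a controlled, Gronwall-type manner.

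The key step is to show that $\bar x(\cdot)$ is an asymptotic pseudotrajectory of $\Phi$, i.e., that for every fixed horizon $T>0$,
\begin{equation*}
\sup_{0\leq s\leq T}\norm{\bar x(t+s) - \Phi_s(\bar x(t))}\xrightarrow[t\to\infty]{} 0, \quad a.s.
\end{equation*}
To obtain this I would compare, over each window $[t_n, t_n+T]$, the interpolation against the exact ODE solution started at $x_n$. The discrepancy decomposes into a drift error from replacing the integral of $h$ by a Riemann-type sum, a discretization error controlled by $\sum_k \alpha(k)^2$, and the aggregated martingale noise $\zeta_{m,n} := \sum_{k=m}^{n-1}\alpha(k) M_{k+1}$.

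The main obstacle, and the technical heart of the argument, is controlling $\zeta_{m,n}$. Here I would use that, under (A3) and (A4), the partial sums $\sum_k \alpha(k) M_{k+1}$ form a square-integrable martingale whose predictable quadratic variation is summable, since $\E{\norm{M_{k+1}}^2 \mid \mathcal{F}_k}\leq K(1+\norm{x_k}^2)$ is a.s. bounded on the compact range of the iterates and $\sum_n \alpha^2(n)<\infty$ by (A2). The martingale convergence theorem then yields that this martingale converges a.s., hence $\sup_{m\leq j}\norm{\zeta_{m,j}}\to 0$ as $m\to\infty$. Feeding this noise bound, together with the Lipschitz estimate on $h$, into Gronwall's inequality over the finite window $[t_n, t_n+T]$ gives the asymptotic pseudotrajectory property displayed above.

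Finally, I would invoke the characterization of limit sets of bounded asymptotic pseudotrajectories: such a limit set is necessarily nonempty, compact, connected, internally chain transitive, and invariant under the flow $\Phi$. Applying this to $\bar x(\cdot)$ identifies the limit set of $\cbra{x_n}$ with a set of this type for \eqref{eq:sa_ode}, which is exactly the asserted conclusion.
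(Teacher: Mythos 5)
The paper does not actually prove this statement: it is quoted as a known result from \cite{borkar2009stochastic} (Ch.~2), so there is no in-paper argument to compare against. Your sketch correctly reconstructs the standard proof from that cited source and follows essentially the same route --- the ODE method with the piecewise-linear interpolation on the timescale $t_n=\sum_{k<n}\alpha(k)$, a.s.\ convergence of the noise martingale $\sum_k \alpha(k)M_{k+1}$ via its a.s.-summable conditional quadratic variation (which is where (A2)--(A4) enter), a Gronwall estimate over finite windows to establish the asymptotic-pseudotrajectory property, and finally Bena\"im's characterization of limit sets of bounded asymptotic pseudotrajectories as compact, connected, internally chain transitive, invariant sets of the flow.
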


As an immediate result, the following corollary also holds:
\begin{corollary}
If the only internally chain transitive invariant sets for
(\ref{eq:sa_ode}) are isolated equilibrium points,
then, almost surely, $\cbra{x_n}$ converges to a, 
possibly sample dependent, equilibrium point of (\ref{eq:sa_ode}).  
\label{crl:sa_equillibria}
\end{corollary}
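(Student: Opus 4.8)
The plan is to derive the corollary as an immediate specialization of Theorem~\ref{thm:borkar}. First I would apply Theorem~\ref{thm:borkar} directly: under assumptions (A1)--(A4), which are assumed to be in force, the theorem guarantees that almost surely the iterates $\cbra{x_n}$ converge to some (possibly sample-path-dependent) set $A$ that is simultaneously compact, connected, internally chain transitive, and invariant under the limiting o.d.e.~\eqref{eq:sa_ode}. Nothing beyond the hypotheses already granting Theorem~\ref{thm:borkar} is needed for this step.

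Next I would bring in the standing hypothesis of the corollary, namely that the only internally chain transitive invariant sets of \eqref{eq:sa_ode} are isolated equilibrium points. Since the limit set $A$ furnished by Theorem~\ref{thm:borkar} is in particular an internally chain transitive invariant set, it must coincide with one of these isolated equilibria; that is, $A=\cbra{x^*}$ for some point $x^*$ with $h(x^*)=0$. I would remark that a singleton is trivially compact and connected, so identifying $A$ as a single equilibrium is fully consistent with the additional regularity that Theorem~\ref{thm:borkar} already attaches to $A$, and that isolatedness rules out $A$ being a larger connected continuum of equilibria.

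Finally, I would translate the set-valued convergence into ordinary convergence. The assertion that $\cbra{x_n}$ converges to the invariant set $A$ means $\mathrm{dist}(x_n,A)\to 0$; when $A=\cbra{x^*}$ this is precisely $x_n\to x^*$. Because the limit set $A$ in Theorem~\ref{thm:borkar} may depend on the realized sample path, so may the equilibrium $x^*$, which is exactly the claimed (possibly sample-dependent) convergence. I do not expect a substantive obstacle here: the corollary is essentially a bookkeeping consequence of the theorem once the chain-transitivity terminology is unwound, and the only point requiring care is the correct reading of \emph{convergence to an invariant set} together with the observation that an isolated equilibrium, regarded as a singleton, automatically satisfies the compactness and connectedness conclusions of Theorem~\ref{thm:borkar}.
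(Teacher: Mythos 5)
Your proposal is correct and follows exactly the route the paper intends: the paper presents this corollary as an immediate consequence of Theorem~\ref{thm:borkar}, with no separate proof, and your unwinding (the limit set is an internally chain transitive invariant set, hence by hypothesis a singleton equilibrium, so set convergence becomes point convergence) is precisely that argument made explicit. No gaps.
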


\noindent
Now we are in place to prove the following theorem:

\begin{theorem}
Let $S$ a vector space, $\mu\in S$, and
$X: \Omega \rightarrow S$
be a random variable defined in a 
probability space $\pbra{\Omega, \mathcal{F}, \mathbb{P}}$.
Let $\cbra{x_n}$ be a sequence of independent realizations of $X$,
and $\cbra{\alpha(n)>0}$ a sequence of stepsizes such that
$ \sum_n \alpha(n) = \infty$, and $\sum_n \alpha^2(n) < \infty$.
Then the random variable $m_n = \nicefrac{\sigma_n}{\rho_n}$,
where $(\rho_n, \sigma_n)$ are 
sequences defined by
\begin{equation}
\begin{aligned}
\rho_{n+1} &= \rho_n + \alpha(n) \sbra{ p(\mu|x_n) - \rho_n} \\
\sigma_{n+1} &= \sigma_n + \alpha(n) \sbra{ x_n p(\mu|x_n) - \sigma_n},
\end{aligned}
\label{eq:rhosigma}
\end{equation}
converges to $\E{X|\mu}$ almost surely, i.e. 
$m_n\xrightarrow{a.s.} \E{X|\mu}$.
\label{thm:oda_sa}
\end{theorem}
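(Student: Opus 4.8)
\section*{Proof proposal}

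The plan is to recognize each of the two coupled recursions in \eqref{eq:rhosigma} as a one-dimensional stochastic approximation scheme of Robbins--Monro type, apply Theorem \ref{thm:borkar} to each one separately, and then combine the two limits through the continuous ratio map. First I would rewrite the $\rho$-recursion in the canonical form \eqref{eq:sa} by adding and subtracting $\E{p(\mu|X)}$, which yields the affine drift $h_\rho(\rho) = \E{p(\mu|X)} - \rho$ and the noise term $M_{n+1}^\rho = p(\mu|x_n) - \E{p(\mu|X)}$. With the natural filtration $\mathcal{F}_n := \sigma\pbra{x_m,\ m<n}$, independence of $x_n$ from $\mathcal{F}_n$ gives $\E{M_{n+1}^\rho|\mathcal{F}_n}=0$, so (A3) holds; moreover $p(\mu|x_n)\in[0,1]$ makes the noise bounded, and since the update is the convex combination $\rho_{n+1}=(1-\alpha(n))\rho_n+\alpha(n)p(\mu|x_n)$ whenever $\alpha(n)\in(0,1]$ (which holds eventually), the iterates remain in $[0,1]$, giving (A4). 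Assumption (A1) is immediate because $h_\rho$ is affine, and (A2) is the stepsize hypothesis. The limiting o.d.e. $\dot\rho = \E{p(\mu|X)}-\rho$ is linear with the unique globally asymptotically stable equilibrium $\rho^\ast = \E{p(\mu|X)} = p(\mu)$, so Corollary \ref{crl:sa_equillibria} gives $\rho_n \to p(\mu)$ almost surely.

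Next I would treat the $\sigma$-recursion identically, with drift $h_\sigma(\sigma) = \E{X p(\mu|X)} - \sigma$ and noise $M_{n+1}^\sigma = x_n p(\mu|x_n) - \E{X p(\mu|X)}$, again a martingale difference by the i.i.d. assumption. The associated o.d.e. $\dot\sigma = \E{X p(\mu|X)} - \sigma$ is linear with the unique globally asymptotically stable equilibrium $\sigma^\ast = \E{X p(\mu|X)}$, so Theorem \ref{thm:borkar} together with Corollary \ref{crl:sa_equillibria} would yield $\sigma_n \to \E{X p(\mu|X)}$ almost surely.

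Finally, I would combine the two limits. On the almost-sure event where both hold, and using $p(\mu)>0$ for an effective codevector (so that $\rho_n$ is eventually bounded away from zero), continuity of the map $(\sigma,\rho)\mapsto \sigma/\rho$ at $(\sigma^\ast,\rho^\ast)$ gives $m_n = \sigma_n/\rho_n \to \sigma^\ast/\rho^\ast$ almost surely. It then remains to identify the limit: by Bayes' rule $p(x|\mu) = p(\mu|x)p(x)/p(\mu)$, so $\sigma^\ast/\rho^\ast = \E{X p(\mu|X)}/p(\mu) = \int x\, p(x|\mu)\,dx = \E{X|\mu}$, which is exactly the claim.

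The main obstacle is verifying (A4) for the $\sigma$-recursion: unlike $\rho_n$, the driving term $x_n p(\mu|x_n)$ is unbounded, so the easy convex-combination argument fails. I would resolve this by invoking the (implicit) finite second moment $\E{\norm{X}^2}<\infty$; this both renders $M_{n+1}^\sigma$ square-integrable as demanded by (A3), and, together with the global asymptotic stability of the linear drift, establishes a.s. boundedness via the quadratic Lyapunov function $V(\sigma)=\norm{\sigma-\sigma^\ast}^2$ (equivalently, by the Borkar--Meyn stability criterion, or by recognizing $\sigma_n$ as a convergent stochastic-approximation estimate of a mean). All remaining assumptions are routine given the affine drifts and the i.i.d. structure of the observations.
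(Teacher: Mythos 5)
Your proposal is correct and takes essentially the same route as the paper's own proof: both rewrite each recursion as an affine drift plus a martingale-difference noise term, apply Theorem \ref{thm:borkar} and Corollary \ref{crl:sa_equillibria} to obtain $\pbra{\rho_n,\sigma_n}\xrightarrow{a.s.}\pbra{p(\mu),\E{\mathds{1}_{\sbra{\mu}}X}}$, and then pass to the ratio $m_n=\nicefrac{\sigma_n}{\rho_n}$ via $\E{X|\mu}=\nicefrac{\E{\mathds{1}_{\sbra{\mu}}X}}{p(\mu)}$. The only difference is one of rigor, in your favor: where the paper declares the verification of (A1)--(A4) ``obvious,'' you check the assumptions explicitly and correctly flag that (A4) for the $\sigma$-recursion requires a second-moment (or compactness of $S$) condition on $X$, and that the ratio step needs $p(\mu)>0$.
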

\begin{proof}
We will use the facts that $p(\mu)=\E{p(\mu|x)}$ and 
$\E{\mathds{1}_{\sbra{\mu}}X} = \E{xp(\mu|x)}$.
The recursive equations (\ref{eq:rhosigma}) are 
stochastic approximation algorithms of the form:
\begin{equation*}
\begin{aligned}
\rho_{n+1} &= \rho_n + \alpha(n)  
	[ (p(\mu) - \rho_n) + 
	(p(\mu|x_n)-\E{p(\mu|X)}) ] \\
\sigma_{n+1} &= \sigma_n + \alpha(n) 
	[ (\E{\mathds{1}_{\sbra{\mu}}X} - \sigma_n) + 
	\\&\quad\quad\quad\quad\quad\quad
	(x_n p(\mu|x_n) - \E{x_n p(\mu|X)})  ]
\end{aligned}
\label{eq:rhosigma_sa}
\end{equation*}
It is obvious that both stochastic approximation algorithms
satisfy the conditions of 
Theorem \ref{thm:borkar} and Corollary \ref{crl:sa_equillibria}.
As a result, they converge to the asymptotic solution of the 
differential equations
\begin{equation*}
\begin{aligned}
\dot \rho &= p(\mu) - \rho \\
\dot \sigma &= \E{\mathds{1}_{\sbra{\mu}}X} - \sigma
\end{aligned}
\end{equation*}
which can be trivially derived through standard ODE analysis to 
be $\pbra{p(\mu), \E{\mathds{1}_{\sbra{\mu}}X}}$.
In other words, we have shown that
\begin{equation*}
\pbra{\rho_n,\sigma_n} \xrightarrow{a.s.} \pbra{p(\mu), \E{\mathds{1}_{\sbra{\mu}}X}}
\end{equation*}
The convergence of $m_n$ follows from the fact that 
$\E{X|\mu} = \nicefrac{\E{\mathds{1}_{\sbra{\mu}}X}}{p(\mu)}$,
and standard results on the convergence 
of the product of two random variables.
\end{proof}

\section{Proof of Theorem \ref{thm:consistency} (Consistency of ODA as a Density Estimator).} 
\label{App:consistency}

\noindent
According to Theorem \ref{thm:ODA}, as $n\rightarrow\infty$,
the stochastic approximation 
algorithm in (\ref{eq:oda_learning1}), (\ref{eq:oda_learning2}) 
minimizes the cost function $F^*$ in (\ref{eq:minFstar}).
Moreover, it is easy to see that, in the limit $\lambda\rightarrow 0$,
we get 
\begin{equation*}
\lim_{\lambda\rightarrow 0} p^*(\mu_i|x) 
=\lim_{\lambda\rightarrow 0} \frac{e^{-\frac{1-\lambda}{\lambda}d(x,\mu_i)}}
			{\sum_j e^{-\frac{1-\lambda}{\lambda}d(x,\mu_j)}}
=\mathds{1}_{\sbra{x\in S_i}}    
\end{equation*}
and 
\begin{align*}
\lim_{\lambda\rightarrow 0} F^*(\mu) = J(\mu) &= \E{\min_i d(X,\mu_i)} 
\\&
= \int p(x) \sum_i \mathds{1}_{\sbra{x\in S_i}}  d_\phi(x,\mu_i) ~dx
\\&
= \sum_i \int_{S_i} p(x)  d_\phi(x,\mu_i) ~dx
\end{align*}
where 
$S_i = \cbra{x \in S: i = \argmin\limits_j ~ d(x,\mu_j)}$.
In addition, due to the bifurcation phenomenon, 
$\lambda\rightarrow 0$, induces $k\rightarrow\infty$.

Next we show that as the number of
prototypes goes to infinity, i.e., 
if $k\rightarrow\infty$, 
we get $\min_\mu J(\mu)=0$.
First, consider a sub-optimal solution $w:=\cbra{w_j}_{j=1}^k$, 
with $p(w_i|x) = \mathds{1}_{\sbra{x\in \Sigma_i}}$
with the property that $Vol(\Sigma_i)=\int_{\Sigma_i} dx = O(\frac{1}{k})$, i.e., 
the Voronoi cells $\Sigma_i(k)$ form a roughly uniform partition. 
In that case  
\begin{equation*}
\lim_{k\rightarrow\infty} J(w) 
=  \lim_{k\rightarrow\infty}  \sum_i \int_{\Sigma_i} p(x)  d_\phi(x, w_i) ~dx 
= 0
\end{equation*}
where we have used the continuity of the density, the compactness of $S$, 
and the fact that $\lim_{k\rightarrow\infty} Vol(\Sigma_j)=0$ since
$Vol(\Sigma_j)=O(\frac{1}{k})$.
We note that these convergence results hold as long as 
$\frac{k}{n}\rightarrow 0$, i.e., 
the rate of increase of $k$ is lower than that of the number of observations $n$.

As a result, 
since $0\leq J(\mu) \leq J(w) \rightarrow 0$,
due to the optimality of $\mu$,
it follows that $J(\mu) \rightarrow 0$ a.s., as well.
This implies that $\lim_{k\rightarrow\infty} Vol(S_i)\rightarrow 0$. 
Now define the random variable
\begin{equation*}
Y_n:= \frac{\mathds{1}_{\sbra{x_n\in S_i}}}{Vol(S_i)}    
\end{equation*}
where 
\begin{equation}
\E{Y_n} := \frac{\E{\mathds{1}_{\sbra{x_n\in S_i}}}}{Vol(S_i)}
= \frac{\mathbb{P}\sbra{x_n\in S_i}}{Vol(S_i)}
= \frac{\int_{x\in S_i} p(x) ~dx}{\int_{x\in S_i} ~dx}
= \bar Y 
\label{eq:barY}
\end{equation}
From the Strong Law of Large Numbers (SLLN), we get that
\begin{equation*}
\frac 1 n \frac{\sum_n \mathds{1}_{\sbra{x_n\in S_i}}}{Vol(S_i)}  \rightarrow
\bar Y,\ a.s.
\end{equation*}
The claim that for $n\rightarrow\infty$, $k\rightarrow\infty$, and 
$\frac{k}{n}\rightarrow 0$, 
\begin{equation*}
\hat p(x):= \frac 1 n \frac{\sum_n \mathds{1}_{\sbra{x_n\in S_i}}}{Vol(S_i)}  \rightarrow
p(x),\ a.s.
\end{equation*}
follows by observing that $\lim_{k\rightarrow\infty} \bar Y = p(x)$ which follows 
from \eqref{eq:barY} and the fact that the limit $\lim_{k\rightarrow\infty} Vol(S_i)\rightarrow 0,\ \forall i$.

\section{Proof of Theorem \ref{thm:bifurcation} (Proof of Bifurcation Phenomena).} 
\label{App:bifurcation}

\noindent
To obtain the optimality condition \eqref{eq:soc} as a function of 
the temperature level $\lambda$, we recall that 
\begin{align*}
&F^*(y) = (1-\lambda) \int p(x) \sum_i p(y_i|x) d_\phi(x,y_i) ~dx 
    \\&\quad
                + \lambda \int p(x) \sum_i p(y_i|x) \log p(y_i|x) ~dx 
                + \lambda \int p(x) \log p(X)
\end{align*}
where
$y = \mu + \epsilon \psi$, and $d_\phi$ is a Bregman divergence for an appropriately
defined strictly convex function $\phi$.
By direct differentiation, we can compute
\begin{equation}
    \frac{d}{d\epsilon} d_\phi(x,y_i) = - \pder{^2 \phi(y_i)}{y_i^2}(x-y_i)^\T \psi 
    \label{eq:bifurcation:dd}
\end{equation}
and
\begin{equation}
    \frac{d^2}{d\epsilon^2} d_\phi(x,y_i) = \pder{^2 \phi(y_i)}{y_i^2} \psi^\T \psi    
    \label{eq:bifurcation:d2d}
\end{equation}
and given (\ref{eq:gibbs}), we get
\begin{equation}
\begin{aligned}
\frac{d}{d\epsilon}& p(y_i|x) = \frac{1-\lambda}{\lambda} 
        \sum_j p(y_i|x) p(y_j|x)  
    \bigg[\pder{^2 \phi(y_i)}{y_i^2} (x-y_i)^\T \psi_i -
    \\&\quad\quad\quad\quad\quad\quad\quad\quad\quad\quad\quad\quad\quad\quad\quad
         \pder{^2 \phi(y_j)}{y_j^2}(x-y_j)^\T \psi_j\bigg]
    \\&= \frac{1-\lambda}{\lambda} p(y_i|x) \pder{^2 \phi(y_i)}{y_i^2} (x-y_i)^\T \psi_i 
    \\&
         -  \frac{1-\lambda}{\lambda} p(y_i|x) \sum_{j} p(y_j|x)
            \pder{^2 \phi(y_j)}{y_j^2} (x-y_j)^\T \psi_j
    \\&= - \frac{1-\lambda}{\lambda} p(y_i|x) \frac{d}{d\epsilon} d_\phi(x,y_i)
    \\&
         + \frac{1-\lambda}{\lambda} p(y_i|x) \sum_{j} p(y_j|x)
            \frac{d}{d\epsilon} d_\phi(x,y_j)      
\end{aligned}
\label{eq:bifurcation:dp}
\end{equation}
Now the optimality condition takes the form:
\begin{equation}
\begin{aligned}
\frac{d^2}{d\epsilon^2} F^*(y) &= 
(1-\lambda) \int p(x) \sum_i \frac{d^2}{d\epsilon^2} \pbra{ p(y_i|x) d_\phi(x,y_i) } ~dx
\\&
+ \lambda \int p(x) \sum_i \frac{d^2}{d\epsilon^2} \pbra{ p(y_i|x) \log p(y_i|x) } ~dx 
\end{aligned}
    \label{eq:bifurcation:d2Fstar}
\end{equation}
Equation (\ref{eq:bifurcation:d2Fstar}) uses the terms given in 
\eqref{eq:bifurcation:d2pd}, and \eqref{eq:bifurcation:d2plogp} below: 
\begin{equation}
\begin{aligned}
    \frac{d^2}{d\epsilon^2}&\pbra{p(y_i|x) d_\phi(x,y_i)} = 
    p(y_i|x) \frac{d^2}{d\epsilon^2} d_\phi(x,y_i) 
    \\&\quad +2 \frac{d}{d\epsilon} d_\phi(x,y_i) \frac{d}{d\epsilon} p(y_i|x) 
    + d_\phi(x,y_i) \frac{d^2}{d\epsilon^2} p(y_i|x) 
\end{aligned}
\label{eq:bifurcation:d2pd}
\end{equation}
%
%
\begin{equation}
\begin{aligned}
    \frac{d^2}{d\epsilon^2}&\pbra{p(y_i|x) \log p(y_i|x)} = 
    \frac{d^2}{d\epsilon^2} p(y_i|x)
    \\&
    + \log p(y_i|x) \frac{d^2}{d\epsilon^2} p(y_i|x) 
    + \frac{1}{p(y_i|x)} \pbra{\frac{d}{d\epsilon} p(y_i|x)}^2
\end{aligned}
    \label{eq:bifurcation:d2plogp}
\end{equation}
First, notice that $\sum_i \frac{d^2}{d\epsilon^2} p(y_i|x) =  \frac{d^2}{d\epsilon^2} \sum_i p(y_i|x) = 0$.
Using the expressions 
(\ref{eq:bifurcation:dd}), (\ref{eq:bifurcation:d2d}), and (\ref{eq:bifurcation:dp})
we get \eqref{eq:biftemp1}, \eqref{eq:biftemp2},
\eqref{eq:biftemp3}, \eqref{eq:biftemp4} that read as:
\begin{equation}
    \sum_i p(y_i|x) \frac{d^2}{d\epsilon^2} d_\phi(x,y_i) = 
    \sum_i p(y_i|x) \pder{^2 \phi(y_i)}{y_i^2} \psi_i^\T \psi_i 
    \label{eq:biftemp1}
\end{equation}
%
%
\small{
\begin{equation}
    \begin{aligned}
    \sum_i 2 &\frac{d}{d\epsilon} d_\phi(x,y_i) \frac{d}{d\epsilon} p(y_i|x) = 
    \\&
    =-2 \frac{1-\lambda}{\lambda} \sum_i p(y_i|x) \pbra{ \pder{^2 \phi(y_i)}{y_i^2} (x-y_i)^\T \psi_i}^2  
    \\& 
    + 2 \frac{1-\lambda}{\lambda} \sum_i p(y_i|x) \pder{^2 \phi(y_i)}{y_i^2} (x-y_i)^\T \psi_i
    \\& \quad\quad\quad\quad\quad
    \sum_{j} p(y_j|x) \pder{^2 \phi(y_j)}{y_j^2} (x-y_j)^\T \psi_j
    \\&=
    \sum_i p(y_i|x) \pbra{\pder{^2 \phi(y_i)}{y_i^2}}^2 
    \\&\quad\quad
    \psi_i^\T \sbra{ - 2 \frac{1-\lambda}{\lambda}  (x-y_i) (x-y_i)^\T } \psi_i  
    \\&\quad
    + 2 \frac{1-\lambda}{\lambda} \pbra{\sum_i p(y_i|x) \pder{^2 \phi(y_i)}{y_i^2} (x-y_i)^\T \psi_i}^2
\end{aligned}
\label{eq:biftemp2}
\end{equation}
}
%
%
%
%
\begin{equation}
    \begin{aligned}
    \sum_i &\log p(y_i|x) \frac{d^2}{d\epsilon^2} p(y_i|x) = 
    \\&
    =- \sum_i \frac{1-\lambda}{\lambda} d_\phi(x,y_i) \frac{d^2}{d\epsilon^2} p(y_i|x)
    \\&
    - \sum_i \log\pbra{\sum_{j} e^{-\frac{1-\lambda}{\lambda}d(x,y_j)}} \frac{d^2}{d\epsilon^2} p(y_i|x)
    \\&= 
    - \sum_i \frac{1-\lambda}{\lambda} d_\phi(x,y_i) \frac{d^2}{d\epsilon^2} p(y_i|x)
    \\&
    - \pbra{\log\sum_{j} e^{-\frac{1-\lambda}{\lambda}d(x,y_j)}} \sum_i  \frac{d^2}{d\epsilon^2} p(y_i|x)
    \\&= 
    - \frac{1-\lambda}{\lambda} \sum_i  d_\phi(x,y_i) \frac{d^2}{d\epsilon^2} p(y_i|x)
\end{aligned}
\label{eq:biftemp3}
\end{equation}
%
%
\begin{equation}
    \begin{aligned}
    \sum_i &\frac{1}{p(y_i|x)} \pbra{\frac{d}{d\epsilon} p(y_i|x)}^2 =
    \\& 
    = \frac{\pbra{1-\lambda}^2}{\lambda^2} \sum_i p(y_i|x) \pbra{\frac{d}{d\epsilon} d_\phi(x,y_i)}^2
    \\& 
    + \frac{\pbra{1-\lambda}^2}{\lambda^2} \sum_i p(y_i|x) \pbra{\sum_{y_j} p(y_j|x) \frac{d}{d\epsilon} d_\phi(x,y_j)}^2
    \\&
    - 2 \frac{\pbra{1-\lambda}^2}{\lambda^2} \pbra{ \sum_i p(y_i|x) \frac{d}{d\epsilon} d_\phi(x,y_i)}^2
    \\&=
    \frac{\pbra{1-\lambda}^2}{\lambda^2} \sum_i p(y_i|x) \pbra{\frac{d}{d\epsilon} d_\phi(x,y_i)}^2
    \\& 
    - \frac{\pbra{1-\lambda}^2}{\lambda^2} \pbra{ \sum_i p(y_i|x) \frac{d}{d\epsilon} d_\phi(x,y_i)}^2
\end{aligned}
\label{eq:biftemp4}
\end{equation}
%
Finally, plugging \eqref{eq:biftemp1}, \eqref{eq:biftemp2},
\eqref{eq:biftemp3}, \eqref{eq:biftemp4} in \eqref{eq:bifurcation:d2Fstar}, the optimality condition (\ref{eq:soc}) becomes
%
\begin{equation*}
    \begin{aligned}
    0&=(1-\lambda) \int p(x)  \sum_i p(y_i|x) \frac{d^2}{d\epsilon^2} d_\phi(x,y_i) dx
    \\& - 
    \frac{\pbra{1-\lambda}^2}{\lambda} \int p(x)  \sum_i p(y_i|x) \pbra{\frac{d}{d\epsilon} d_\phi(x,y_i)}^2 dx
    \\&
    + \frac{\pbra{1-\lambda}^2}{\lambda} \int p(x)  \pbra{\sum_i p(y_i|x) \frac{d}{d\epsilon} d_\phi(x,y_i)}^2 dx
    \end{aligned}
\end{equation*}
%
which can be written more precisely as
%
\begin{equation*}
    \begin{aligned}
    0&=\int p(x) \sum_i p(y_i|x) \pder{^2 \phi(y_i)}{y_i^2} \psi^\T 
    \\&
    \sbra{ I -  \frac{1-\lambda}{\lambda} \pder{^2 \phi(y_i)}{y_i^2} (x-y_i) (x-y_i)^\T } \psi  dx
    \\&
    + \frac{1-\lambda}{\lambda} \int p(x) \pbra{\sum_i p(y_i|x) \pder{^2 \phi(y_i)}{y_i^2}(x-y_i)^\T \psi }^2 dx
    \end{aligned}
\end{equation*}
%
and finally as 
\begin{equation}
    \begin{aligned}
     0&=\sum_i p(y_i) \pder{^2 \phi(y_i)}{y_i^2} \psi^\T \sbra{ I -  \frac{1-\lambda}{\lambda} \pder{^2 \phi(y_i)}{y_i^2} C_{x|y_i} } \psi 
    \\&
    + \frac{1-\lambda}{\lambda} \int p(x) \pbra{\sum_i p(y_i|x) \pder{^2 \phi(y_i)}{y_i^2}(x-y_i)^\T \psi }^2 dx
    \end{aligned}
    \label{eq:bifurcation:finalsoc}
\end{equation}
where 
\begin{align*}
    C_{x|y_i} :&= \E{(x-y_i) (x-y_i)^\T|y_i} 
    \\&
    = \int p(x|y_i) (x-y_i) (x-y_i)^\T dx
\end{align*}
The left-hand side of (\ref{eq:bifurcation:finalsoc}) 
is positive for all perturbations $\cbra{\psi}$
if and only if the first term is positive. 
To see that, notice that the second term of (\ref{eq:bifurcation:finalsoc})
is clearly non-negative. 
For the left-hand side to be non-positive, the first term needs to 
be non-positive as well, i.e., there should exist at least one codevector value, 
say $y_n$, such that $p(y_n)>0$ and 
$\sbra{ I -  \frac{1-\lambda}{\lambda} \pder{^2 \phi(y_n)}{y_n^2} C_{x|y_n}}\preceq 0$.
In this case, there always exist a perturbation vector $\cbra{y}$
such that $y = 0$, $\forall y\neq y_n$, and $\sum_{y=y_n} \psi =0 $, 
that vanishes the second term, i.e., 
$\frac{1-\lambda}{\lambda} \int p(x) \pbra{\sum_i p(y_i|x) \pder{^2 \phi(y_i)}{y_i^2}(x-y_i)^\T \psi }^2 dx $ $= 0 $.
In other words we have shown that
\begin{align*}
    \frac{d^2}{d\epsilon^2} F^*(y) &> 0 \quad \Leftrightarrow \quad
    \exists y_n \text{  s.t.  } p(y_n)>0 
    \\&
    \text{  and  } \sbra{ I -  \frac{1-\lambda}{\lambda} \pder{^2 \phi(y_n)}{y_n^2} C_{x|y_n}}\succ 0
\end{align*}
which means that bifurcation occurs under the following condition 
\begin{equation*}
    \exists y_n \text{  s.t.  } p(y_n)>0 \text{  and  } \det\sbra{ I -  \frac{1-\lambda}{\lambda} \pder{^2 \phi(y_n)}{y_n^2} C_{x|y_n}} = 0
\end{equation*}
%
%
which completes the proof. 
%

\section{Proof of Theorem \ref{thm:ODA2timescale} (Convergence of the Two-Timescale Online Learning Rule).} 
\label{App:online-regression}

\noindent
It follows directly from the following fundamental result from 
stochastic approximation theory that is included for completeness:
\begin{theorem}[Ch. 6 of \cite{borkar1997stochastic}]
Consider the sequences $\cbra{x_n}\in S\subseteq\mathbb{R}^d$ and 
$\cbra{y_n}\in \Sigma\subseteq\mathbb{R}^k$,
generated by the iterative stochastic approximation schemes:
\begin{align}
x_{n+1} = x_n + \beta(n) \sbra{f(x_n,y_n) + M_{n+1}^{(x)}} 
\label{eq:sa_timescales_x}	\\
y_{n+1} = y_n + \alpha(n) \sbra{g(x_n,y_n) + M_{n+1}^{(y)}}
\label{eq:sa_timescales_y}
\end{align}

for $n \geq 0$ and $M_n^{(x)}$, $M_n^{(y)}$ martingale difference sequences, 
and assume that
$\sum_n \alpha(n) = \sum_n \beta(n) = \infty$, 
$\sum_n ( \alpha^2(n)+\beta^2(n) ) <\infty$, and 
$\nicefrac{\alpha(n)}{\beta(n)}\rightarrow 0$,
%
%
with the last condition implying that the iterations for $\cbra{y_n}$
run on a slower timescale than those for $\cbra{x_n}$. 
If the equation 
\begin{equation*}
\dot x (t) = f(x(t),y),\ x(0)=x_0
\end{equation*}
has an asymptotically stable equilibrium $\lambda(y)$ 
for fixed $y$ and some Lipschitz mapping $\lambda$, and the equation 
\begin{equation*}
\dot y (t) = g(\lambda(y(t)),y(t)),\ y(0)=y_0
\end{equation*}
has an asymptotically stable equilibrium $y^*$, 
then, almost surely, $(x_n,y_n)$ converges to $(\lambda(y^*),y^*)$.
\label{thm:borkar_timescales}
\end{theorem}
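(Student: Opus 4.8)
The plan is to prove the result by the \emph{ODE method} with separation of timescales, reducing the coupled system to two single-timescale problems of exactly the type already handled by Theorem \ref{thm:borkar}. The three conceptual ingredients are: (i) because $\nicefrac{\alpha(n)}{\beta(n)}\to 0$, the slow variable $y_n$ is asymptotically frozen as seen on the fast timescale; (ii) with $y_n$ effectively constant, the fast iterate $x_n$ tracks the stable equilibrium $\lambda(y_n)$ of $\dot x = f(x,y)$; and (iii) substituting $x_n\approx\lambda(y_n)$ into \eqref{eq:sa_timescales_y} turns it into a single-timescale recursion driven by the effective ODE $\dot y = g(\lambda(y),y)$, whose stable equilibrium $y^*$ it must therefore approach.

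First I would fix the standing regularity and stability hypotheses needed by the ODE method: Lipschitz continuity of $f$ and $g$, the martingale-difference and square-integrability conditions on $M_{n+1}^{(x)}, M_{n+1}^{(y)}$ (exactly as (A3) in Theorem \ref{thm:borkar}), and almost-sure boundedness $\sup_n(\norm{x_n}+\norm{y_n})<\infty$ (assumption (A4)). Under $\sum_n\beta^2(n)<\infty$ the noise contributions $\sum_n \beta(n)M_{n+1}^{(x)}$ and $\sum_n\alpha(n)M_{n+1}^{(y)}$ converge almost surely by the martingale convergence theorem, so the interpolated trajectories are genuine asymptotic pseudotrajectories of the associated flows.

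For step (ii), which I expect to be the crux, I would interpolate $x_n$ on the fast timescale $t(n)=\sum_{m<n}\beta(m)$ and rewrite the slow update as $y_{n+1}=y_n+\beta(n)\sbra{\nicefrac{\alpha(n)}{\beta(n)}(g(x_n,y_n)+M_{n+1}^{(y)})}$; since the bracketed drift is $o(1)$, the interpolated $y$ is constant in the limit on every bounded fast-time window. Hence $(x_n,y_n)$ is an asymptotic pseudotrajectory of the singularly-perturbed flow $\dot x = f(x,y),\ \dot y = 0$, whose internally chain transitive invariant sets are contained in the graph $\cbra{(\lambda(y),y)}$ by the assumed (global) asymptotic stability of $\lambda(y)$ for each fixed $y$. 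A Gronwall estimate on $z_n:=x_n-\lambda(y_n)$, using the Lipschitz property of $\lambda$ to absorb the slow drift of $y_n$, then yields $\norm{x_n-\lambda(y_n)}\to 0$ almost surely. \textbf{The main obstacle} is making this tracking uniform in the drifting argument $y_n$: one must control the deviation $z_n$ without $y$ being exactly frozen, which requires the uniform-attraction/Lipschitz dependence of $\lambda$ and a careful handling of the martingale term over windows whose fast-time length is kept bounded while $n\to\infty$.

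Finally, for step (iii) I would write \eqref{eq:sa_timescales_y} on its natural slow timescale as $y_{n+1}=y_n+\alpha(n)\sbra{g(\lambda(y_n),y_n)+\varepsilon_n+M_{n+1}^{(y)}}$ with $\varepsilon_n:=g(x_n,y_n)-g(\lambda(y_n),y_n)$. By step (ii) and continuity of $g$, $\varepsilon_n\to 0$ almost surely, so this is a single-timescale stochastic approximation with asymptotically negligible perturbation tracking $\dot y=g(\lambda(y),y)$. Applying Theorem \ref{thm:borkar} together with Corollary \ref{crl:sa_equillibria} and the assumed asymptotic stability of $y^*$ gives $y_n\to y^*$ almost surely; combining with $x_n-\lambda(y_n)\to 0$ and the continuity of $\lambda$ yields $x_n\to\lambda(y^*)$, hence $(x_n,y_n)\to(\lambda(y^*),y^*)$ almost surely, completing the argument.
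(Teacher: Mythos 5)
This statement is imported verbatim from Borkar (Ch.~6 of the cited reference) and the paper itself gives no proof of it --- it is stated "for completeness" so that Theorem \ref{thm:ODA2timescale} can be read off from it; your sketch is a faithful reconstruction of the standard two-timescale ODE argument from that same source: freezing the slow variable on the fast timescale $t(n)=\sum_{m<n}\beta(m)$, a tracking lemma giving $\norm{x_n-\lambda(y_n)}\to 0$, and rewriting the slow iteration as a single-timescale scheme with vanishing perturbation $\varepsilon_n$ tracking $\dot y = g(\lambda(y),y)$, to which Theorem \ref{thm:borkar} and Corollary \ref{crl:sa_equillibria} apply. The two standing hypotheses you add --- almost-sure boundedness of $(x_n,y_n)$ and \emph{global} (not merely local) asymptotic stability of $\lambda(y)$ for each fixed $y$ and of $y^*$ --- are precisely the conditions under which the cited result is proved, so making them explicit is appropriate rather than a gap, and your identification of the crux (uniform tracking of the drifting target $\lambda(y_n)$ via Lipschitzness of $\lambda$ and windowed control of the martingale noise) matches the genuinely delicate step of the original proof.
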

%

\section{Proof of Theorem \ref{thm:bayes} (Convergence to Bayes-Optimal Classifier).} 
\label{App:bayes}

\noindent
The Bayes risk for minimum probability of error is given by
\begin{align}
    J_B(\mu,c_\mu) :&= 
		\pi_1 \sum_{i:c_{\mu_i}=0} \mathbb{P}\cbra{X\in S_i | c = 1} 
  \\&
		+\pi_0 \sum_{i:c_{\mu_i}=1} \mathbb{P}\cbra{X\in S_i | c = 0} 
\end{align}
where $\mathbb{P}\cbra{X\in S_i | c = j} = \int_{S_i} p(x|c=j) ~dx$, 
and $p(x|c=j)$ represents the conditional probability density function.

Consider the observations $\cbra{(x_n,c_n)}$, and let 
$\hat p(x|c=j)$, $j=1,2$, be strongly consistent density estimators.
Then the estimated risk 
\begin{align}
    \hat J_B(\mu,c_\mu) :&= 
		\hat \pi_1 \sum_{i:c_{\mu_i}=0} \mathbb{\hat P}\cbra{X\in S_i | c = 1} 
  \\&
		+\hat \pi_0 \sum_{i:c_{\mu_i}=1} \mathbb{\hat P}\cbra{X\in S_i | c = 0} 
\end{align}
where $\mathbb{\hat P}\cbra{X\in S_i | c = j} = \int_{S_i} \hat p(x|c=j)$ and
$\hat \pi_j = \frac{\sum_n \mathds{1}_{\sbra{c_n=j}}}{n}$,
converges almost surely to $J_B$, i.e., 
\begin{equation}
    \hat J_B(\mu,c_\mu) \rightarrow J_B(\mu,c_\mu),\ a.s. 
\end{equation}
as $n\rightarrow\infty$.
This follows from the fact that 
\begin{equation}
    \hat \pi_j \hat p(x|c=j) \rightarrow \pi_j p(x|c=j) ,\ a.s.
\end{equation}
and the Lebesgue dominated convergence theorem.
Therefore, the classification rule 
$
\hat c = \argmax_ j \hat \pi_j \hat p(x|c=j)
$
converges to a Bayes-optimal classification rule.

\vspace{-3.5em}


\ifx\mycmd\undefined

\begin{IEEEbiography}[{\includegraphics[width=1in,height=1.25in,clip,keepaspectratio]
{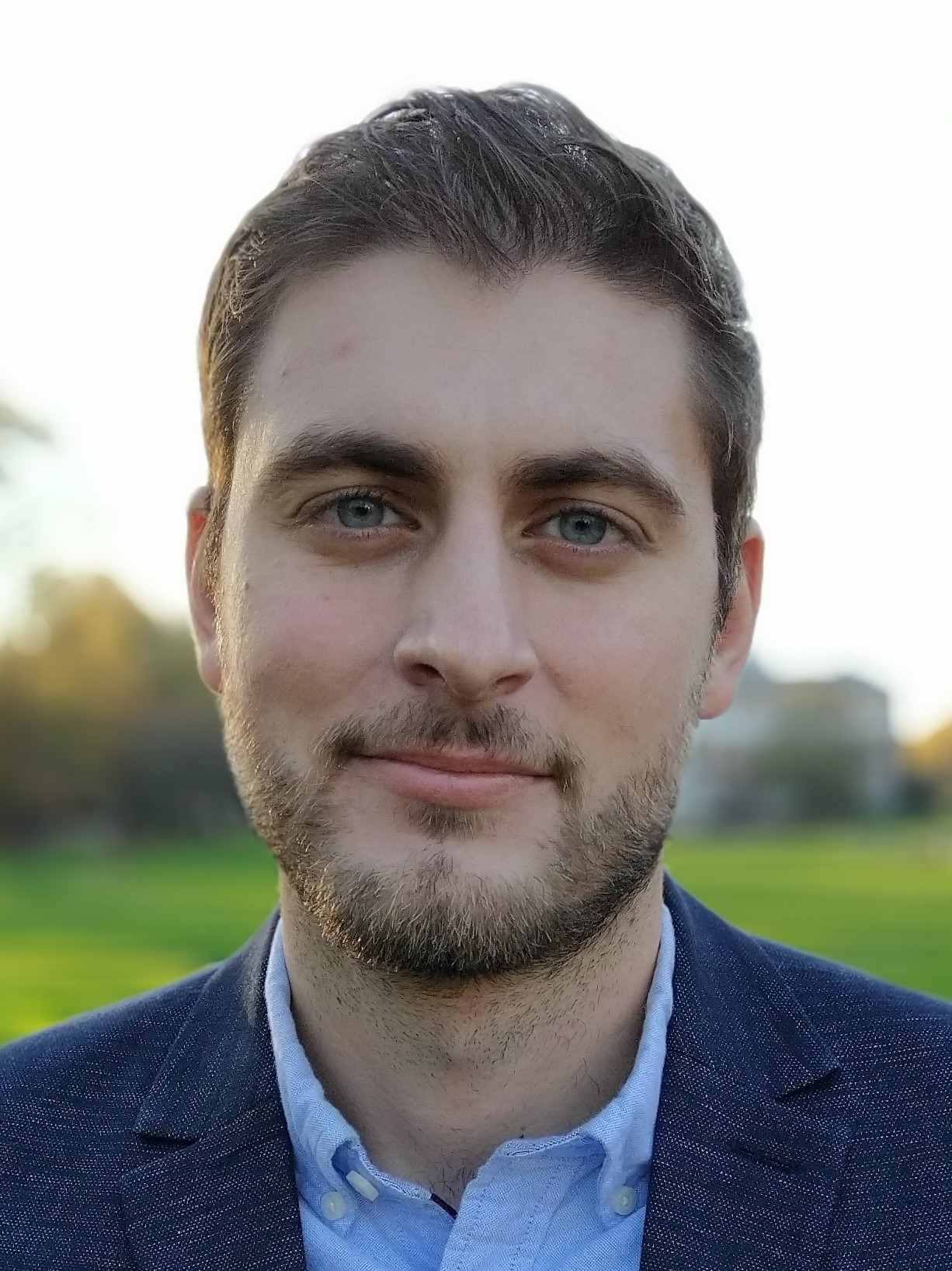}}]{Christos N. Mavridis} (M'20) 
received the Diploma degree in electrical and computer engineering from the National Technical University of Athens, Greece, in 2017,
and the M.S. and  Ph.D. degrees in electrical and computer engineering at the University of Maryland, College Park, MD, USA, in 2021. 
His research interests include learning theory, stochastic optimization, systems and control theory, multi-agent systems, and robotics. 

He is currently a postdoctoral associate at the University of Maryland, and a visiting postdoctoral fellow at KTH Royal Institute of Technology, Stockholm. He has worked as a research intern for the Math and Algorithms Research Group at Nokia Bell Labs, NJ, USA, and the System Sciences Lab at Xerox Palo Alto Research Center (PARC), CA, USA. 

Dr. Mavridis is an IEEE member, and a member of the Institute for Systems Research (ISR) and the Autonomy, Robotics and Cognition (ARC) Lab. He received the Ann G. Wylie Dissertation Fellowship in 2021, and the A. James Clark School of Engineering Distinguished Graduate Fellowship, Outstanding Graduate Research Assistant Award, and Future Faculty Fellowship, in 2017, 2020, and 2021, respectively. He has been a finalist in the Qualcomm Innovation Fellowship US, San Diego, CA, 2018, and he has received the Best Student Paper Award (1st place) in the IEEE International Conference on Intelligent Transportation Systems (ITSC), 2021.
\end{IEEEbiography}

\vspace{-3.5em}

\begin{IEEEbiography}[{\includegraphics[width=1in,height=1.25in,clip,keepaspectratio]
{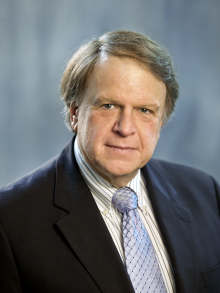}}]{John S. Baras} (LF'13) 
received the Diploma degree in electrical and mechanical engineering from the National Technical University of Athens, Athens, Greece, in 1970, and the M.S. and Ph.D. degrees in applied mathematics from Harvard University, Cambridge, MA, USA, in 1971 and 1973, respectively.

He is a Distinguished University Professor and holds the Lockheed Martin Chair in Systems Engineering, with the Department of Electrical and Computer Engineering and the Institute for Systems Research (ISR), at the University of Maryland College Park. From 1985 to 1991, he was the Founding Director of the ISR. Since 1992, he has been the Director of the Maryland Center for Hybrid Networks (HYNET), which he co-founded. His research interests include systems and control, optimization, communication networks, applied mathematics, machine learning, artificial intelligence, signal processing, robotics, computing systems, security, trust, systems biology, healthcare systems, model-based systems engineering.

Dr. Baras is a Fellow of IEEE (Life), SIAM, AAAS, NAI, IFAC, AMS, AIAA, Member of the National Academy of Inventors and a Foreign Member of the Royal Swedish Academy of Engineering Sciences. Major honors include the 1980 George Axelby Award from the IEEE Control Systems Society, the 2006 Leonard Abraham Prize from the IEEE Communications Society, the 2017 IEEE Simon Ramo Medal, the 2017 AACC Richard E. Bellman Control Heritage Award, the 2018 AIAA Aerospace Communications Award. In 2016 he was inducted in the A. J. Clark School of Engineering Innovation Hall of Fame. In 2018 he was awarded a Doctorate Honoris Causa by his alma mater the National Technical University of Athens, Greece.   
\end{IEEEbiography}

\fi

\end{document}